\newcommand{\Emmetttt}[5]{
\draw[#4] (#1)
-- ++(#2,1.0166278585759576*#3)
-- ++(#2,0.9101359472832664*#3)
-- ++(#2,0.6454157992852653*#3)
-- ++(#2,0.36495322004951475*#3)
-- ++(#2,0.12910424140387444*#3)
-- ++(#2,0.006346233897352481*#3)
-- ++(#2,-0.2596894483188231*#3)
-- ++(#2,3.3967800151603167*#3)
-- ++(#2,1.0113539747910816*#3)
-- ++(#2,-1.1293758543669614*#3)
-- ++(#2,-0.22296320661761376*#3)
-- ++(#2,-1.0279679630219853*#3)
-- ++(#2,-0.03724590940212457*#3)
-- ++(#2,-1.285369512035384*#3)
-- ++(#2,-1.1212366096102708*#3)
-- ++(#2,0.032798519812387204*#3)
-- ++(#2,-0.75713209869772*#3)
-- ++(#2,0.3151557041479214*#3)
-- ++(#2,0.8178027580853732*#3)
-- ++(#2,0.4950973269668775*#3)
-- ++(#2,0.3750719569405528*#3)
-- ++(#2,1.5029426879305205*#3)
-- ++(#2,1.15028413873218*#3)
-- ++(#2,-0.24665074659623995*#3)
-- ++(#2,-0.44831834032329954*#3)
-- ++(#2,-0.07057531899975539*#3)
node[above,right] {#5};
}
\newcommand{\Emmettt}[5]{
\draw[#4] (#1)
-- ++(#2,0.3449056335215385*#3)
-- ++(#2,0.7398026578211958*#3)
-- ++(#2,-0.916112736574975*#3)
-- ++(#2,-0.7280702579919234*#3)
-- ++(#2,0.10103138828017721*#3)
-- ++(#2,1.1746583619342272*#3)
-- ++(#2,0.6694380024991095*#3)
-- ++(#2,-0.2776380171251103*#3)
-- ++(#2,2.2773804583006108*#3)
-- ++(#2,0.49127760266658743*#3)
-- ++(#2,0.8973856503151352*#3)
-- ++(#2,-0.26974348637899015*#3)
-- ++(#2,-1.0370845106132867*#3)
-- ++(#2,-0.2936946221312899*#3)
-- ++(#2,0.268588357964518*#3)
-- ++(#2,1.1575965166011128*#3)
-- ++(#2,-0.04862997563755554*#3)
-- ++(#2,0.11893047650736724*#3)
-- ++(#2,-0.8347936050512282*#3)
-- ++(#2,1.0181422147925627*#3)
-- ++(#2,-1.1303170849521254*#3)
-- ++(#2,1.3921694982475885*#3)
-- ++(#2,0.6158990397062992*#3)
-- ++(#2,0.3805001771520416*#3)
-- ++(#2,0.2717243317116084*#3)
-- ++(#2,0.009795826930386532*#3)
-- ++(#2,-0.021313694589365637*#3)
-- ++(#2,0.05423950858368839*#3)
-- ++(#2,-2.130904562297885*#3)
-- ++(#2,-0.073875566791504*#3)
-- ++(#2,-0.39453571211520394*#3)
-- ++(#2,-1.623808801718912*#3)
-- ++(#2,-1.5435719946660016*#3)
-- ++(#2,1.6434123017787372*#3)
-- ++(#2,-0.06576150541180928*#3)
-- ++(#2,1.167114685888141*#3)
-- ++(#2,0.4316496542829812*#3)
-- ++(#2,0.010882410653040518*#3)
-- ++(#2,0.5860701048181445*#3)
-- ++(#2,-0.23725784251300142*#3)
-- ++(#2,0.18581011349681975*#3)
-- ++(#2,0.8737916208367913*#3)
-- ++(#2,1.1619470256643603*#3)
-- ++(#2,1.0901007632010566*#3)
-- ++(#2,-0.8783468370367103*#3)
-- ++(#2,1.5823736465587712*#3)
-- ++(#2,0.39016973834282337*#3)
-- ++(#2,-0.8455399385738679*#3)
-- ++(#2,0.3151663363874288*#3)
-- ++(#2,0.745527625884931*#3)
-- ++(#2,1.377982588836424*#3)
-- ++(#2,-2.2122176221005865*#3)
-- ++(#2,1.922834204203318*#3)
-- ++(#2,0.10858490291840239*#3)
-- ++(#2,1.9737049091140266*#3)
-- ++(#2,0.8011895463710068*#3)
-- ++(#2,1.262269558251995*#3)
-- ++(#2,0.9793138192213999*#3)
-- ++(#2,-0.34177792440928556*#3)
-- ++(#2,-0.1460170632355545*#3)
-- ++(#2,0.4770507260593286*#3)
-- ++(#2,0.16772988919182674*#3)
-- ++(#2,-1.684669328117657*#3)
-- ++(#2,-0.6986662750228173*#3)
-- ++(#2,-1.0450741374123143*#3)
-- ++(#2,-0.9136342743675437*#3)
-- ++(#2,-1.0649506834631823*#3)
-- ++(#2,0.9438339554068944*#3)
-- ++(#2,1.187483110852872*#3)
-- ++(#2,0.5231501218840362*#3)
-- ++(#2,0.3400091446729878*#3)
-- ++(#2,-0.2834847322938794*#3)
-- ++(#2,-0.6784270577745334*#3)
-- ++(#2,-0.9004606727541768*#3)
-- ++(#2,0.08901043126196052*#3)
-- ++(#2,-1.184904192310481*#3)
-- ++(#2,-0.006575634406559893*#3)
-- ++(#2,-1.1053722400238393*#3)
-- ++(#2,-0.8272332040172831*#3)
-- ++(#2,-0.7009312993704205*#3)
-- ++(#2,0.6361168518638886*#3)
-- ++(#2,0.24088813492638678*#3)
-- ++(#2,-0.7423868762792266*#3)
-- ++(#2,0.2225808061790885*#3)
-- ++(#2,2.04866364045871*#3)
-- ++(#2,-0.8259463364231185*#3)
-- ++(#2,-0.32497326633189905*#3)
-- ++(#2,-1.3562669509039462*#3)
-- ++(#2,0.33779657331262464*#3)
-- ++(#2,0.08586459620642885*#3)
-- ++(#2,0.4172502512566366*#3)
-- ++(#2,-1.1887739233143806*#3)
-- ++(#2,-0.5796897515158352*#3)
-- ++(#2,1.0367729903181098*#3)
-- ++(#2,1.1320289820356295*#3)
-- ++(#2,-0.4943611722488632*#3)
-- ++(#2,1.9318255009792915*#3)
-- ++(#2,-0.27539247705938547*#3)
-- ++(#2,-1.0787128649021125*#3)
-- ++(#2,-0.03267405493507573*#3)
-- ++(#2,-1.2467540346428205*#3)
-- ++(#2,0.4347730232422345*#3)
-- ++(#2,2.2777158336092076*#3)
-- ++(#2,-0.567629556459478*#3)
-- ++(#2,-0.13406002136593428*#3)
-- ++(#2,1.7576744749992506*#3)
-- ++(#2,0.037623379131566666*#3)
-- ++(#2,-1.5389692291253714*#3)
-- ++(#2,2.546426040816052*#3)
-- ++(#2,-0.8912021667357914*#3)
-- ++(#2,0.1520807018035584*#3)
-- ++(#2,1.0227771466798226*#3)
-- ++(#2,-1.5445106141341833*#3)
-- ++(#2,1.102962012628561*#3)
-- ++(#2,-0.8375873256453004*#3)
-- ++(#2,0.7184236707445469*#3)
-- ++(#2,1.4123575576875833*#3)
-- ++(#2,0.04755199264945984*#3)
-- ++(#2,0.13748766765710987*#3)
-- ++(#2,1.4161751272959366*#3)
-- ++(#2,1.5412133204985856*#3)
-- ++(#2,-0.2254206921346609*#3)
-- ++(#2,0.9577904204657445*#3)
-- ++(#2,0.46442517614144946*#3)
-- ++(#2,-1.1204306749889217*#3)
-- ++(#2,0.31477101144435116*#3)
-- ++(#2,-1.531193849136144*#3)
-- ++(#2,0.926048027832965*#3)
-- ++(#2,0.35002152861367425*#3)
-- ++(#2,0.7002507470789914*#3)
-- ++(#2,0.6609211616329218*#3)
-- ++(#2,-0.5919767846048613*#3)
-- ++(#2,-0.8869335901564764*#3)
-- ++(#2,0.705309532788491*#3)
-- ++(#2,-1.0499175036898691*#3)
-- ++(#2,1.9491754928575893*#3)
-- ++(#2,-0.18318509578693445*#3)
-- ++(#2,-1.145852774546802*#3)
-- ++(#2,-1.6826282723966812*#3)
-- ++(#2,-1.3502092840050326*#3)
-- ++(#2,0.005703716186119519*#3)
-- ++(#2,0.04059851445668165*#3)
-- ++(#2,-0.9748425215111768*#3)
-- ++(#2,0.33474668181691897*#3)
-- ++(#2,-0.9228852844913327*#3)
-- ++(#2,0.4076658442957865*#3)
-- ++(#2,0.740213092510974*#3)
-- ++(#2,-0.19195474283111094*#3)
-- ++(#2,0.5491795618352212*#3)
-- ++(#2,-0.5114584336539122*#3)
-- ++(#2,-0.8454186861750425*#3)
-- ++(#2,-0.7531936198938354*#3)
-- ++(#2,0.6586294890349088*#3)
-- ++(#2,-1.7594004780852077*#3)
-- ++(#2,-0.5707741224451651*#3)
-- ++(#2,-0.4300754678320325*#3)
-- ++(#2,-0.3457614367398927*#3)
-- ++(#2,-0.7275445701614217*#3)
-- ++(#2,0.2571820638018825*#3)
-- ++(#2,0.24769035268517192*#3)
-- ++(#2,-0.8947897719699436*#3)
-- ++(#2,0.31391007611583555*#3)
-- ++(#2,-0.6128283615906525*#3)
-- ++(#2,-1.221768994128327*#3)
-- ++(#2,-0.3497653162822244*#3)
-- ++(#2,1.556681558035022*#3)
-- ++(#2,-0.11671270943855697*#3)
-- ++(#2,-1.314187122094663*#3)
-- ++(#2,-1.976366741521246*#3)
-- ++(#2,-0.36702540760352625*#3)
-- ++(#2,-0.4131457980436011*#3)
-- ++(#2,0.510006637598822*#3)
-- ++(#2,0.02927764500001159*#3)
-- ++(#2,-0.4610549563385248*#3)
-- ++(#2,1.6095830544013083*#3)
-- ++(#2,0.48907659087010596*#3)
-- ++(#2,-1.5597466209003452*#3)
-- ++(#2,0.3854431885093869*#3)
-- ++(#2,0.5389653953688969*#3)
-- ++(#2,-0.8315382788373957*#3)
-- ++(#2,-1.3700809014451447*#3)
-- ++(#2,0.07290210694954767*#3)
-- ++(#2,-0.2585568122905575*#3)
-- ++(#2,-0.642924483746135*#3)
-- ++(#2,1.0273898343530599*#3)
-- ++(#2,-0.6215967042879802*#3)
-- ++(#2,0.9100805597007047*#3)
-- ++(#2,-2.125442244063577*#3)
-- ++(#2,0.17289306826298406*#3)
-- ++(#2,-0.7630975685083816*#3)
-- ++(#2,0.6129414863102024*#3)
-- ++(#2,0.8253855620392107*#3)
-- ++(#2,-1.4812673774086238*#3)
-- ++(#2,0.9260286426168104*#3)
-- ++(#2,-0.09032557504581051*#3)
-- ++(#2,-0.023645401913980596*#3)
-- ++(#2,-1.437141206710381*#3)
-- ++(#2,-0.28321488029506064*#3)
-- ++(#2,1.0941394477732072*#3)
-- ++(#2,-1.1341586158405863*#3)
-- ++(#2,-1.2582230166362223*#3)
-- ++(#2,-0.4132008723951448*#3)
-- ++(#2,0.34481696209662077*#3)
-- ++(#2,0.6409946092719749*#3)
-- ++(#2,0.3049194745412806*#3)
-- ++(#2,-0.046873689568691264*#3)
-- ++(#2,-1.1224679819111714*#3)
-- ++(#2,-0.4050987285392296*#3)
-- ++(#2,-0.2040123547807524*#3)
-- ++(#2,-0.474209657765763*#3)
-- ++(#2,-1.025882541530904*#3)
-- ++(#2,-0.33208617842506993*#3)
-- ++(#2,0.600813705938282*#3)
-- ++(#2,-1.3937364489343025*#3)
-- ++(#2,0.8135293112411721*#3)
-- ++(#2,-0.4899912644436204*#3)
-- ++(#2,1.803942007112332*#3)
-- ++(#2,0.06324514602453456*#3)
-- ++(#2,2.679389376343012*#3)
-- ++(#2,0.28908992803607164*#3)
-- ++(#2,-0.8850873832371031*#3)
-- ++(#2,1.0319102068827024*#3)
-- ++(#2,0.9386824564780153*#3)
-- ++(#2,-1.5656985907069647*#3)
-- ++(#2,1.170796946073813*#3)
-- ++(#2,0.06050476997019145*#3)
-- ++(#2,-0.3696787906960233*#3)
-- ++(#2,0.12828087763986762*#3)
-- ++(#2,0.2758944989753252*#3)
-- ++(#2,1.194485025279975*#3)
-- ++(#2,0.21944708557042836*#3)
-- ++(#2,-0.3765147123401648*#3)
-- ++(#2,0.7521646822264286*#3)
-- ++(#2,-1.5479558747640523*#3)
-- ++(#2,-0.13645847876741055*#3)
-- ++(#2,-0.5493813720684166*#3)
-- ++(#2,-0.8158801346891295*#3)
-- ++(#2,-0.942673107272523*#3)
-- ++(#2,0.6081053047266466*#3)
-- ++(#2,1.8236115484242947*#3)
-- ++(#2,-2.7245103694518376*#3)
-- ++(#2,0.5701632578826887*#3)
-- ++(#2,0.37711103611830993*#3)
-- ++(#2,-1.022896327221233*#3)
-- ++(#2,0.5554005168101616*#3)
-- ++(#2,-0.21686947685701038*#3)
-- ++(#2,0.5099317832328711*#3)
-- ++(#2,0.43213585248918385*#3)
-- ++(#2,0.23941727717660768*#3)
-- ++(#2,-1.2168455607867628*#3)
-- ++(#2,1.5579353423180724*#3)
-- ++(#2,0.6031449937946921*#3)
-- ++(#2,-1.9958465769892608*#3)
-- ++(#2,-0.23934693751772654*#3)
-- ++(#2,-1.8046239194805167*#3)
-- ++(#2,0.3938288794130791*#3)
-- ++(#2,2.632219962280346*#3)
-- ++(#2,0.5538992581781934*#3)
-- ++(#2,0.6517949111153402*#3)
-- ++(#2,0.5106141975773855*#3)
-- ++(#2,-0.07183230071063822*#3)
-- ++(#2,0.5298332093368895*#3)
-- ++(#2,-0.249614483251254*#3)
-- ++(#2,1.773208051841553*#3)
-- ++(#2,0.48634398847192767*#3)
-- ++(#2,0.5122910177194563*#3)
-- ++(#2,-0.8434671183278081*#3)
-- ++(#2,1.7273200542928102*#3)
-- ++(#2,1.1658378454374783*#3)
-- ++(#2,0.7991731571166161*#3)
-- ++(#2,0.5075723681581606*#3)
-- ++(#2,0.427674991541707*#3)
-- ++(#2,-1.1858838946315733*#3)
-- ++(#2,-0.10254883495729698*#3)
-- ++(#2,1.3615735795618833*#3)
-- ++(#2,-1.6875021417759097*#3)
-- ++(#2,-1.4047427852583572*#3)
-- ++(#2,1.1777160842823047*#3)
-- ++(#2,-0.0018225323108189912*#3)
-- ++(#2,-1.0096467439868413*#3)
-- ++(#2,-1.7278436639712542*#3)
-- ++(#2,-0.02824254878257679*#3)
-- ++(#2,-1.9441458457686662*#3)
-- ++(#2,-0.7274791634779245*#3)
-- ++(#2,0.7278394004442781*#3)
-- ++(#2,-1.3168622060458204*#3)
-- ++(#2,0.44403102480325796*#3)
-- ++(#2,0.7406788691364518*#3)
-- ++(#2,2.3692356889515245*#3)
-- ++(#2,-0.26244116310781485*#3)
-- ++(#2,-0.0863491546673918*#3)
-- ++(#2,1.4205655513935345*#3)
-- ++(#2,-1.050553091444569*#3)
-- ++(#2,0.1266347754275285*#3)
-- ++(#2,0.10181452239497983*#3)
-- ++(#2,0.3575067641429259*#3)
-- ++(#2,-0.5534378707962939*#3)
-- ++(#2,2.089873811003557*#3)
-- ++(#2,-0.5490534776747515*#3)
-- ++(#2,0.24014010784437786*#3)
-- ++(#2,-1.6095897672553232*#3)
-- ++(#2,0.4115975986156625*#3)
-- ++(#2,-0.2613641571108994*#3)
-- ++(#2,1.009121796182624*#3)
-- ++(#2,0.5793279757980518*#3)
-- ++(#2,1.0308836692078056*#3)
-- ++(#2,0.6596359730013667*#3)
-- ++(#2,-0.9798407608863228*#3)
-- ++(#2,0.30708819920349384*#3)
-- ++(#2,-1.3351522109769716*#3)
-- ++(#2,-0.27636084795029225*#3)
-- ++(#2,2.9393078217612096*#3)
-- ++(#2,-0.0277222169378764*#3)
-- ++(#2,1.4850476072374155*#3)
-- ++(#2,0.759542159066531*#3)
-- ++(#2,-0.09497826523305174*#3)
-- ++(#2,1.8947779175616262*#3)
-- ++(#2,1.0744388944795802*#3)
-- ++(#2,-0.33321210288284725*#3)
-- ++(#2,-0.5315791599702846*#3)
-- ++(#2,-0.34428215487273683*#3)
-- ++(#2,-0.6084454404237982*#3)
-- ++(#2,1.9653001172704558*#3)
-- ++(#2,1.694896189258117*#3)
-- ++(#2,-0.2620020354347105*#3)
-- ++(#2,-0.00431395424581888*#3)
-- ++(#2,0.17427844695651548*#3)
-- ++(#2,1.0430879826095383*#3)
-- ++(#2,0.22978528723346106*#3)
-- ++(#2,0.4549725028476907*#3)
-- ++(#2,0.49102945163884093*#3)
-- ++(#2,0.1325005005078553*#3)
-- ++(#2,-0.8373567672232912*#3)
-- ++(#2,-0.48639398095439484*#3)
-- ++(#2,0.28317320485063485*#3)
-- ++(#2,1.8743725298114877*#3)
-- ++(#2,1.7970161061570897*#3)
-- ++(#2,0.2529916597713062*#3)
-- ++(#2,0.11382560766332978*#3)
-- ++(#2,1.0242941919174993*#3)
-- ++(#2,-1.1053739447279007*#3)
-- ++(#2,0.4122138635878645*#3)
-- ++(#2,1.0969812475733527*#3)
-- ++(#2,0.15904735165619904*#3)
-- ++(#2,-1.3516355410045806*#3)
-- ++(#2,-0.13994403771765174*#3)
-- ++(#2,-0.8424554522037775*#3)
-- ++(#2,-0.35130393874122434*#3)
-- ++(#2,-0.7969745293035027*#3)
-- ++(#2,-0.9529824056292071*#3)
-- ++(#2,-1.0246284030658213*#3)
-- ++(#2,-1.0371245116924246*#3)
-- ++(#2,-1.7365045632712193*#3)
-- ++(#2,-0.2055404248218686*#3)
-- ++(#2,-2.878627842357432*#3)
-- ++(#2,0.5233494009032058*#3)
-- ++(#2,-1.112628650687609*#3)
-- ++(#2,1.552862132513126*#3)
-- ++(#2,-1.2694012269597688*#3)
-- ++(#2,0.6106911411854051*#3)
-- ++(#2,3.2345680141749797*#3)
-- ++(#2,0.6569026416221404*#3)
-- ++(#2,1.3179799032652373*#3)
-- ++(#2,-0.272571247007929*#3)
-- ++(#2,-1.1486650692088376*#3)
-- ++(#2,-0.467610410211383*#3)
-- ++(#2,0.707759092193448*#3)
-- ++(#2,1.1532779915613272*#3)
-- ++(#2,-1.6054478618302725*#3)
-- ++(#2,-1.0279785072704222*#3)
-- ++(#2,-0.7248849144290405*#3)
-- ++(#2,-0.48855255904245043*#3)
-- ++(#2,-0.05846231734137145*#3)
-- ++(#2,-0.05897031622130418*#3)
-- ++(#2,0.8698985732144219*#3)
-- ++(#2,0.4052294758282132*#3)
-- ++(#2,-0.30247809829760514*#3)
-- ++(#2,-2.207618106693046*#3)
-- ++(#2,-2.1809848839416524*#3)
-- ++(#2,-0.8646109717653689*#3)
-- ++(#2,-1.5127590393816963*#3)
-- ++(#2,-1.0051612439135995*#3)
-- ++(#2,-2.0188616364243215*#3)
-- ++(#2,1.8160486260828381*#3)
-- ++(#2,-0.5250889950886415*#3)
-- ++(#2,-2.1655273290447963*#3)
-- ++(#2,0.4210049426665618*#3)
-- ++(#2,-0.9394400101950525*#3)
-- ++(#2,-0.41957334046354194*#3)
-- ++(#2,0.21291147353226617*#3)
-- ++(#2,0.12310007745156282*#3)
-- ++(#2,-0.4456796702277672*#3)
-- ++(#2,-2.7605886934682786*#3)
-- ++(#2,-0.9388209081093465*#3)
-- ++(#2,0.86602316010081*#3)
-- ++(#2,-0.9142932106859767*#3)
-- ++(#2,-0.05794675086449222*#3)
-- ++(#2,-1.659483389473278*#3)
-- ++(#2,0.07712996464326002*#3)
-- ++(#2,-0.6582730412947633*#3)
-- ++(#2,-0.2997237899321786*#3)
-- ++(#2,-0.4704674796636908*#3)
-- ++(#2,0.9767455151283504*#3)
-- ++(#2,-0.8475415962317578*#3)
-- ++(#2,-0.4022537558522391*#3)
-- ++(#2,0.9857341698827583*#3)
-- ++(#2,-0.3156492503687313*#3)
-- ++(#2,1.3040625788457327*#3)
-- ++(#2,-1.269467245199967*#3)
-- ++(#2,0.5614629625910763*#3)
-- ++(#2,0.1797340427852219*#3)
-- ++(#2,-0.10709970304632442*#3)
-- ++(#2,-0.5169198104682865*#3)
-- ++(#2,-0.36109872595047354*#3)
-- ++(#2,-1.7249448840458803*#3)
-- ++(#2,2.3643767184199294*#3)
-- ++(#2,-1.4805095991878143*#3)
-- ++(#2,-1.0089672747265748*#3)
-- ++(#2,1.120450167042116*#3)
-- ++(#2,2.2219389979180946*#3)
-- ++(#2,-1.3570548430461047*#3)
-- ++(#2,0.6510788704882343*#3)
-- ++(#2,-0.01225114631446155*#3)
-- ++(#2,1.4161432152217253*#3)
-- ++(#2,-0.4913583623539011*#3)
-- ++(#2,0.020928441933064267*#3)
-- ++(#2,1.3783406379608978*#3)
-- ++(#2,-0.14655056893693966*#3)
-- ++(#2,-0.21988722037189318*#3)
-- ++(#2,1.3444953509863549*#3)
-- ++(#2,-0.10787648337401755*#3)
-- ++(#2,1.272079637362561*#3)
-- ++(#2,1.3729184516515216*#3)
-- ++(#2,0.8017054172273055*#3)
-- ++(#2,-0.04719137902275315*#3)
-- ++(#2,0.44532370430747437*#3)
-- ++(#2,-1.3502820805894624*#3)
-- ++(#2,-0.4124166670148647*#3)
-- ++(#2,1.347489307164453*#3)
-- ++(#2,0.802453022370353*#3)
-- ++(#2,-0.14764100822679221*#3)
-- ++(#2,-0.21944215833640088*#3)
-- ++(#2,-0.8455920242120544*#3)
-- ++(#2,-0.5700840683655094*#3)
-- ++(#2,1.1497474980468574*#3)
-- ++(#2,-0.5577588239215046*#3)
-- ++(#2,-0.5760740666104283*#3)
-- ++(#2,0.8021339254525824*#3)
-- ++(#2,0.06030024154208552*#3)
-- ++(#2,1.1811786256544192*#3)
-- ++(#2,0.7737380578709562*#3)
-- ++(#2,0.48383268948156555*#3)
-- ++(#2,-1.582372123438796*#3)
-- ++(#2,-0.6024699885169178*#3)
-- ++(#2,1.3430573046421372*#3)
-- ++(#2,-0.4302536731926585*#3)
-- ++(#2,-1.2987729318803507*#3)
-- ++(#2,-2.829279161786895*#3)
-- ++(#2,-1.525335705534098*#3)
-- ++(#2,0.18311318163462778*#3)
-- ++(#2,1.7144927983387406*#3)
-- ++(#2,0.4162921839081978*#3)
-- ++(#2,-0.3681140940353205*#3)
-- ++(#2,0.5721762533178347*#3)
-- ++(#2,-0.08708593629397639*#3)
-- ++(#2,0.9530485214893566*#3)
-- ++(#2,0.03276409175246046*#3)
-- ++(#2,-0.25640431066482816*#3)
-- ++(#2,0.7038476770437718*#3)
-- ++(#2,-0.25501671063855347*#3)
-- ++(#2,-2.055292506536319*#3)
-- ++(#2,1.2150361910275222*#3)
-- ++(#2,-0.5032321142517004*#3)
-- ++(#2,0.5313866493338116*#3)
-- ++(#2,-0.20732529366541266*#3)
-- ++(#2,0.3641845579435721*#3)
-- ++(#2,-0.9851073205901165*#3)
-- ++(#2,-0.4701905107885363*#3)
-- ++(#2,-1.5541153507787153*#3)
-- ++(#2,0.2508565497686116*#3)
-- ++(#2,-0.06937103181563054*#3)
-- ++(#2,-0.13493596255832516*#3)
-- ++(#2,-1.1412216536145465*#3)
-- ++(#2,-1.55941280511056*#3)
-- ++(#2,-0.38783210923843237*#3)
-- ++(#2,-1.3426203883863315*#3)
-- ++(#2,0.5480823406048734*#3)
-- ++(#2,-0.32315122159120063*#3)
-- ++(#2,0.2526868734744513*#3)
-- ++(#2,1.1463199955195684*#3)
-- ++(#2,0.3858753404676878*#3)
-- ++(#2,1.108223015656131*#3)
-- ++(#2,-0.47745153508958277*#3)
-- ++(#2,-0.09095022521703548*#3)
-- ++(#2,1.380438166873263*#3)
-- ++(#2,-0.7337487861908415*#3)
-- ++(#2,0.7648024897343026*#3)
-- ++(#2,0.29880206028752804*#3)
-- ++(#2,0.4269283479344177*#3)
-- ++(#2,0.20420454685090889*#3)
-- ++(#2,-1.1629540796544506*#3)
-- ++(#2,0.6118684525156688*#3)
-- ++(#2,-1.205217291600732*#3)
-- ++(#2,0.7706227915609949*#3)
-- ++(#2,-0.4537587909102382*#3)
-- ++(#2,-0.0025317735991971582*#3)
-- ++(#2,-0.015614890316355478*#3)
-- ++(#2,0.6351335054638568*#3)
-- ++(#2,0.5103881667624113*#3)
-- ++(#2,-1.734192249968826*#3)
-- ++(#2,1.0297382198408604*#3)
-- ++(#2,-1.0908538327768877*#3)
-- ++(#2,-1.4561741961929955*#3)
-- ++(#2,0.3186019321971785*#3)
-- ++(#2,0.9499729328081068*#3)
-- ++(#2,0.7682514316798172*#3)
-- ++(#2,0.4378955267438731*#3)
-- ++(#2,-0.7823572121670094*#3)
-- ++(#2,2.2935746436791424*#3)
-- ++(#2,-0.4127782649731224*#3)
-- ++(#2,-0.8852192710945466*#3)
-- ++(#2,0.8657244668158369*#3)
-- ++(#2,-0.1447104164813761*#3)
-- ++(#2,0.6721343275606886*#3)
-- ++(#2,0.8485559071563283*#3)
-- ++(#2,0.8348940198848604*#3)
-- ++(#2,1.1280950886583132*#3)
-- ++(#2,0.4468261630477609*#3)
-- ++(#2,0.07801997113099354*#3)
-- ++(#2,-1.7106942386484958*#3)
-- ++(#2,0.8012332834817133*#3)
-- ++(#2,-0.4520617292772619*#3)
-- ++(#2,-0.5496466003516142*#3)
-- ++(#2,0.8222516668533282*#3)
-- ++(#2,0.6939318353032435*#3)
-- ++(#2,0.4078417664727718*#3)
-- ++(#2,-1.132304071497531*#3)
-- ++(#2,-0.024668274789701225*#3)
-- ++(#2,-0.9884710744033756*#3)
-- ++(#2,0.6109273478177628*#3)
-- ++(#2,0.03146724757275916*#3)
-- ++(#2,-0.3605060002917952*#3)
-- ++(#2,1.2721417003981206*#3)
-- ++(#2,-1.3810019589049523*#3)
-- ++(#2,1.4055759105476668*#3)
-- ++(#2,0.4911868954744994*#3)
-- ++(#2,1.3203473583102103*#3)
-- ++(#2,0.20359431184661614*#3)
-- ++(#2,-0.16171059912627456*#3)
-- ++(#2,0.42738026077288627*#3)
-- ++(#2,-0.1106997788953771*#3)
-- ++(#2,0.5552959699796182*#3)
-- ++(#2,2.091179528763972*#3)
-- ++(#2,-1.497299557975327*#3)
-- ++(#2,1.1099014238327902*#3)
-- ++(#2,0.4668509883806726*#3)
-- ++(#2,0.6790250090989614*#3)
-- ++(#2,2.24752493728416*#3)
-- ++(#2,-0.28038936042836965*#3)
-- ++(#2,-0.3941922071576235*#3)
-- ++(#2,-0.3895727393176059*#3)
-- ++(#2,-0.030089954241603843*#3)
-- ++(#2,-0.2342036584207695*#3)
-- ++(#2,-0.7324072759831188*#3)
-- ++(#2,0.25012539049575383*#3)
-- ++(#2,-0.7667217197002677*#3)
-- ++(#2,0.15906442034866516*#3)
-- ++(#2,-2.4435896802555397*#3)
-- ++(#2,1.2375183830088525*#3)
-- ++(#2,0.8127325570718369*#3)
-- ++(#2,0.8917266132769102*#3)
-- ++(#2,0.7683324552898991*#3)
-- ++(#2,-0.346273896736529*#3)
-- ++(#2,-2.0817598620924715*#3)
-- ++(#2,0.9388434008920936*#3)
-- ++(#2,0.9676292605632831*#3)
-- ++(#2,1.0356158733859608*#3)
-- ++(#2,1.4625679926189832*#3)
-- ++(#2,-0.42128208968556985*#3)
-- ++(#2,-0.616906496800754*#3)
-- ++(#2,-1.0121277872376566*#3)
-- ++(#2,2.842684992368958*#3)
-- ++(#2,0.12223889124839474*#3)
-- ++(#2,1.8650964505858834*#3)
-- ++(#2,1.3183766166570023*#3)
-- ++(#2,-1.09830261927775*#3)
-- ++(#2,0.7179464153538399*#3)
-- ++(#2,-0.2948122369252921*#3)
-- ++(#2,1.21893484978612*#3)
-- ++(#2,-0.06396898079928516*#3)
-- ++(#2,-0.4217470381546676*#3)
-- ++(#2,-0.22421039704484286*#3)
-- ++(#2,-0.11591791252818019*#3)
-- ++(#2,-0.8903389357513859*#3)
-- ++(#2,0.547666018863654*#3)
-- ++(#2,-0.16802431828617634*#3)
-- ++(#2,-1.649456773211496*#3)
-- ++(#2,-1.6933672629129908*#3)
-- ++(#2,1.7519644738645357*#3)
-- ++(#2,0.48624109047714725*#3)
-- ++(#2,-0.060239157191840316*#3)
-- ++(#2,-0.8468580597704425*#3)
-- ++(#2,1.9128319185776927*#3)
-- ++(#2,-0.5671436282057437*#3)
-- ++(#2,-0.12345854150995939*#3)
-- ++(#2,-1.0386815791811266*#3)
-- ++(#2,-0.15633454672997607*#3)
-- ++(#2,-0.561654495892638*#3)
-- ++(#2,-0.20182428283245205*#3)
-- ++(#2,-0.4747429628772831*#3)
-- ++(#2,0.29896285920565097*#3)
-- ++(#2,0.14719658889809722*#3)
-- ++(#2,0.09095087883608137*#3)
-- ++(#2,-0.997435106269009*#3)
-- ++(#2,-1.1632808117757505*#3)
-- ++(#2,-0.06690540969128347*#3)
-- ++(#2,-3.2715046911078285*#3)
-- ++(#2,-0.065536873922069*#3)
-- ++(#2,0.5058876271346499*#3)
-- ++(#2,-0.9693184907559268*#3)
-- ++(#2,1.1933540931175175*#3)
-- ++(#2,0.7209431345718655*#3)
-- ++(#2,-0.8483396966255825*#3)
-- ++(#2,-1.6555003115279228*#3)
-- ++(#2,-1.1825740523280104*#3)
-- ++(#2,1.5675378212503488*#3)
-- ++(#2,-0.5438703612834661*#3)
-- ++(#2,0.11156495003929492*#3)
-- ++(#2,2.443562627680889*#3)
-- ++(#2,-0.9128392761299846*#3)
-- ++(#2,1.496467177008398*#3)
-- ++(#2,1.086370867259672*#3)
-- ++(#2,0.5048557341159385*#3)
-- ++(#2,0.636446412263818*#3)
-- ++(#2,-0.03788050631072836*#3)
-- ++(#2,-1.1024465555308158*#3)
-- ++(#2,0.4060032741161479*#3)
-- ++(#2,-0.2296340252701408*#3)
-- ++(#2,0.5579587801240604*#3)
-- ++(#2,0.963133374075903*#3)
-- ++(#2,0.5149978425204955*#3)
-- ++(#2,1.7140080743846897*#3)
-- ++(#2,-0.24253577149494301*#3)
-- ++(#2,2.4989239030140262*#3)
-- ++(#2,-1.3681674202584757*#3)
-- ++(#2,0.2601171848709939*#3)
-- ++(#2,-0.5896975574049652*#3)
-- ++(#2,0.2083693753206864*#3)
-- ++(#2,0.8014744335307351*#3)
-- ++(#2,0.0025283938716241646*#3)
-- ++(#2,-1.4476456911633797*#3)
-- ++(#2,-0.67809243732817*#3)
-- ++(#2,0.2430996761353203*#3)
-- ++(#2,0.24734559625593217*#3)
-- ++(#2,0.36941304155809307*#3)
-- ++(#2,-0.9357710542794879*#3)
-- ++(#2,1.0091292990509892*#3)
-- ++(#2,-0.8445189431219261*#3)
-- ++(#2,0.4745062355104892*#3)
-- ++(#2,0.8417365865411732*#3)
-- ++(#2,-0.7363251496278173*#3)
-- ++(#2,-0.7885702422840759*#3)
-- ++(#2,-0.31509639098535236*#3)
-- ++(#2,0.32665792885561074*#3)
-- ++(#2,0.9932379637647618*#3)
-- ++(#2,0.8678844866579092*#3)
-- ++(#2,1.4274859343869666*#3)
-- ++(#2,0.1321957075201289*#3)
-- ++(#2,0.5417547136561384*#3)
-- ++(#2,-0.18155894058729277*#3)
-- ++(#2,-1.9881840180875656*#3)
-- ++(#2,1.1179147487537384*#3)
-- ++(#2,-0.4787703665834525*#3)
-- ++(#2,-0.6295909041409146*#3)
-- ++(#2,-1.0505973591143551*#3)
-- ++(#2,-2.6258923223099893*#3)
-- ++(#2,-0.8469190022592032*#3)
-- ++(#2,0.7596176765275361*#3)
-- ++(#2,0.7116404192970169*#3)
-- ++(#2,0.23378981628091436*#3)
-- ++(#2,0.11894404164837669*#3)
-- ++(#2,-0.4474287103344911*#3)
-- ++(#2,1.3368650404415556*#3)
-- ++(#2,3.3522080917166157*#3)
-- ++(#2,0.07412759701195935*#3)
-- ++(#2,0.05215685002904587*#3)
-- ++(#2,-0.43920022882913856*#3)
-- ++(#2,1.6139830102231494*#3)
-- ++(#2,-0.257785873243315*#3)
-- ++(#2,-0.3051389922052778*#3)
-- ++(#2,-0.9458991996394868*#3)
-- ++(#2,-0.22430483768747553*#3)
-- ++(#2,-0.6104449694828917*#3)
-- ++(#2,0.9362117410502374*#3)
-- ++(#2,0.38541946711956615*#3)
-- ++(#2,-1.1623273181783302*#3)
-- ++(#2,0.4027932484559111*#3)
-- ++(#2,-0.5214217838961176*#3)
node[above,right] {#5};
}
\newcommand{\Emmettttt}[5]{
\draw[#4] (#1)
-- ++(#2,-1.1190442307742965*#3)
-- ++(#2,0.1008961128897793*#3)
-- ++(#2,0.708327893864604*#3)
-- ++(#2,0.07348533552056154*#3)
-- ++(#2,0.2422721265233403*#3)
-- ++(#2,0.29375123386573837*#3)
-- ++(#2,-0.3505695422396157*#3)
-- ++(#2,-0.45029271594790704*#3)
-- ++(#2,-0.26980731860332*#3)
-- ++(#2,-0.3423052480576667*#3)
-- ++(#2,0.0944272748541233*#3)
-- ++(#2,-0.34297586132213964*#3)
-- ++(#2,-0.4481579986293799*#3)
-- ++(#2,-1.0559718502489734*#3)
-- ++(#2,1.3256681212340058*#3)
-- ++(#2,-0.13953485499863738*#3)
-- ++(#2,-0.9261315555616028*#3)
-- ++(#2,-0.9154732273040213*#3)
-- ++(#2,-0.22743680811258551*#3)
-- ++(#2,-0.7045624857588833*#3)
-- ++(#2,0.46293994084580314*#3)
-- ++(#2,0.08170289253521872*#3)
-- ++(#2,-0.7514551375788733*#3)
-- ++(#2,-0.3136186105302047*#3)
-- ++(#2,-1.326314960393172*#3)
-- ++(#2,0.8729016763841135*#3)
node[above,right] {#5};
}
\newcommand{\BrownianSimpleIllustration}[5]{
\draw[#4] (#1)
-- ++(#2,1.2921480768805496*#3)
-- ++(#2,-0.7409188366062869*#3)
-- ++(#2,-1.5574970973500888*#3)
-- ++(#2,0.38456057874725047*#3)
-- ++(#2,0.55553638893613*#3)
-- ++(#2,1.0639405632860748*#3)
-- ++(#2,-0.5863446875752336*#3)
-- ++(#2,0.48157796369323175*#3)
-- ++(#2,-0.10467128360035843*#3)
-- ++(#2,-1.1525204398328597*#3)
-- ++(#2,-0.13802563772796142*#3)
-- ++(#2,0.11281600635666696*#3)
-- ++(#2,1.6648916767348063*#3)
-- ++(#2,-1.4710482650024825*#3)
-- ++(#2,0.7634480069903034*#3)
-- ++(#2,-0.8056469648996745*#3)
-- ++(#2,0.7366882392948466*#3)
-- ++(#2,-0.6303999894612854*#3)
-- ++(#2,0.9538825114244409*#3)
-- ++(#2,-0.9967345593096482*#3)
-- ++(#2,0.26671877008959005*#3)
-- ++(#2,0.5524883064710002*#3)
-- ++(#2,0.2432031852486706*#3)
-- ++(#2,-0.24766795742540518*#3)
-- ++(#2,-0.6269664048781385*#3)
-- ++(#2,-0.25673594977809977*#3)
-- ++(#2,0.034876473308214063*#3)
-- ++(#2,0.6194345317103085*#3)
-- ++(#2,-0.9469554645599761*#3)
-- ++(#2,0.7444060268165257*#3)
-- ++(#2,0.8163408066858248*#3)
-- ++(#2,-1.1386985497166615*#3)
-- ++(#2,-2.440685746738396*#3)
-- ++(#2,-1.3617894587025359*#3)
-- ++(#2,-0.6198110753054682*#3)
-- ++(#2,0.013847221316757672*#3)
-- ++(#2,-1.9601967221110166*#3)
-- ++(#2,-0.6270102737852619*#3)
-- ++(#2,-1.2282013040734432*#3)
-- ++(#2,-1.0561466798807626*#3)
-- ++(#2,-0.6642074650134334*#3)
-- ++(#2,-0.6116565043293463*#3)
-- ++(#2,0.0453315279954815*#3)
-- ++(#2,-0.3755610107065398*#3)
-- ++(#2,-0.8751946074005986*#3)
-- ++(#2,1.2751579045493453*#3)
-- ++(#2,0.6692230011832355*#3)
-- ++(#2,-0.051739299775123315*#3)
-- ++(#2,-0.8055810650921591*#3)
-- ++(#2,0.6493050810266684*#3)
-- ++(#2,1.7101072049035093*#3)
-- ++(#2,-0.20053815948308484*#3)
-- ++(#2,-0.02976677374530471*#3)
-- ++(#2,0.4798308837503369*#3)
-- ++(#2,-0.012111834473321915*#3)
-- ++(#2,0.20870988070767152*#3)
-- ++(#2,-0.6711023806147819*#3)
-- ++(#2,1.7684120279283595*#3)
-- ++(#2,0.004134200769607979*#3)
-- ++(#2,-1.039057265992383*#3)
-- ++(#2,0.7365397857822048*#3)
-- ++(#2,-1.3962461783786813*#3)
-- ++(#2,0.1754420153951474*#3)
-- ++(#2,0.7853964552657648*#3)
-- ++(#2,-1.8127160903208355*#3)
-- ++(#2,0.15856922347362232*#3)
-- ++(#2,0.13004371805330364*#3)
-- ++(#2,0.6750281242901801*#3)
-- ++(#2,0.9879408635019694*#3)
-- ++(#2,-0.24060841941794656*#3)
-- ++(#2,0.057580785528114395*#3)
-- ++(#2,-0.3552849515325661*#3)
-- ++(#2,-0.4512968178729068*#3)
-- ++(#2,0.3647536973783716*#3)
-- ++(#2,1.5980463306893267*#3)
-- ++(#2,-0.7587157056274115*#3)
-- ++(#2,-0.9638015815241426*#3)
-- ++(#2,0.14524693094763155*#3)
-- ++(#2,0.40758794749542626*#3)
-- ++(#2,0.6843189357269097*#3)
-- ++(#2,-2.6730933337873055*#3)
-- ++(#2,-0.2814327009718355*#3)
-- ++(#2,0.897794790125806*#3)
-- ++(#2,0.5992299672306803*#3)
-- ++(#2,0.6075572215432794*#3)
-- ++(#2,0.6887654595232517*#3)
-- ++(#2,-0.12420687790446032*#3)
-- ++(#2,0.43029628850585894*#3)
-- ++(#2,2.1890997133111627*#3)
-- ++(#2,-0.029109625348721562*#3)
-- ++(#2,-0.388944868350233*#3)
-- ++(#2,0.18618075622615538*#3)
-- ++(#2,0.33426564648173074*#3)
-- ++(#2,0.042023848523107823*#3)
-- ++(#2,1.9692777622549174*#3)
-- ++(#2,1.601923903889591*#3)
-- ++(#2,0.9516204175406604*#3)
-- ++(#2,1.531459452832764*#3)
-- ++(#2,0.3537191769069364*#3)
-- ++(#2,0.4421975243695229*#3)
-- ++(#2,-0.08464228804626032*#3)
-- ++(#2,-0.6443731318215028*#3)
-- ++(#2,1.6881475446637046*#3)
-- ++(#2,-1.0503781305882873*#3)
-- ++(#2,-1.7032523587334318*#3)
-- ++(#2,1.7036393301930153*#3)
-- ++(#2,0.6405451601134399*#3)
-- ++(#2,2.3354926268093554*#3)
-- ++(#2,0.48365652448008223*#3)
-- ++(#2,0.7674162457269545*#3)
-- ++(#2,-1.4784658828396546*#3)
-- ++(#2,-0.8389833908106654*#3)
-- ++(#2,1.3334150045652207*#3)
-- ++(#2,-0.04331069008562256*#3)
-- ++(#2,-1.2325287073956703*#3)
-- ++(#2,-0.635006938121839*#3)
-- ++(#2,0.909368291975466*#3)
-- ++(#2,1.7237319865962026*#3)
-- ++(#2,1.7749428709798376*#3)
-- ++(#2,0.15660935602722642*#3)
-- ++(#2,0.9624243870477828*#3)
-- ++(#2,0.5266104334174665*#3)
-- ++(#2,0.5598926097719866*#3)
-- ++(#2,1.354996845484812*#3)
-- ++(#2,1.3628362801235467*#3)
-- ++(#2,-1.5422067503230126*#3)
-- ++(#2,-1.078121879757999*#3)
-- ++(#2,0.8495539137041521*#3)
-- ++(#2,0.6024141165302185*#3)
-- ++(#2,-2.1665442778603907*#3)
-- ++(#2,-0.6623989789104672*#3)
-- ++(#2,-1.0326492307446655*#3)
-- ++(#2,-1.499706773087017*#3)
-- ++(#2,0.0380056529399739*#3)
-- ++(#2,1.251266195360183*#3)
-- ++(#2,1.3606233155386414*#3)
-- ++(#2,0.4482410327017523*#3)
-- ++(#2,-0.48856102247347355*#3)
-- ++(#2,-1.8637174449491627*#3)
-- ++(#2,-0.6581222623610149*#3)
-- ++(#2,0.09047723012293755*#3)
-- ++(#2,-1.0178921600398132*#3)
-- ++(#2,-0.2492591252279247*#3)
-- ++(#2,-0.3803526782328749*#3)
-- ++(#2,-1.2961754716043916*#3)
-- ++(#2,-0.6074945512856298*#3)
-- ++(#2,-0.7123366730199877*#3)
-- ++(#2,2.5283723513951113*#3)
-- ++(#2,1.5920202268383474*#3)
-- ++(#2,-0.032796735837843344*#3)
-- ++(#2,1.8173987662174456*#3)
-- ++(#2,-1.6862019732550408*#3)
-- ++(#2,-0.5370017294735443*#3)
-- ++(#2,-1.4021585330218784*#3)
-- ++(#2,0.6275289934061935*#3)
-- ++(#2,-0.2599782719330673*#3)
-- ++(#2,0.17590620470858917*#3)
-- ++(#2,0.9202916368230449*#3)
-- ++(#2,0.5609249904302661*#3)
-- ++(#2,0.5168097186498015*#3)
-- ++(#2,0.7091402732571613*#3)
-- ++(#2,1.2242966687925936*#3)
-- ++(#2,1.5299593927569892*#3)
-- ++(#2,-0.7604510042826704*#3)
-- ++(#2,-0.6539949775551825*#3)
-- ++(#2,-1.1316807863280451*#3)
-- ++(#2,-0.08344141483869712*#3)
-- ++(#2,0.18251120412104108*#3)
-- ++(#2,0.6257569315506815*#3)
-- ++(#2,-1.7595749905365234*#3)
-- ++(#2,-0.37763268967457225*#3)
-- ++(#2,-0.14905781290298964*#3)
-- ++(#2,0.2448459122317343*#3)
-- ++(#2,-0.4352583796771151*#3)
-- ++(#2,1.0390893525998959*#3)
-- ++(#2,-0.17030496269686754*#3)
-- ++(#2,-0.5484343675642898*#3)
-- ++(#2,0.25381541688344456*#3)
-- ++(#2,1.5532460589286823*#3)
-- ++(#2,-1.6845231095064084*#3)
-- ++(#2,-0.5205361077216881*#3)
-- ++(#2,-0.5459227858117603*#3)
-- ++(#2,-0.3979132001299468*#3)
-- ++(#2,-0.03300580409075579*#3)
-- ++(#2,-0.7955576548213725*#3)
-- ++(#2,-0.09031967676800472*#3)
-- ++(#2,-1.157900216698111*#3)
-- ++(#2,-0.7603174548057932*#3)
-- ++(#2,-0.8658068285297116*#3)
-- ++(#2,-0.5719632897195303*#3)
-- ++(#2,1.945368045148999*#3)
-- ++(#2,0.04099620489877427*#3)
-- ++(#2,1.184041752104795*#3)
-- ++(#2,0.04670773593515473*#3)
-- ++(#2,-1.1108338568784069*#3)
-- ++(#2,0.6203333410455972*#3)
-- ++(#2,-1.3544258952503365*#3)
-- ++(#2,1.7373149072753553*#3)
-- ++(#2,-0.588195632046876*#3)
-- ++(#2,-1.117716870425951*#3)
-- ++(#2,0.4942780898191606*#3)
-- ++(#2,-0.3411202450247731*#3)
-- ++(#2,0.59436231937916*#3)
-- ++(#2,-1.1483353613437437*#3)
-- ++(#2,-0.8491152987314229*#3)
-- ++(#2,-0.2864458112888278*#3)
-- ++(#2,-2.1391560856117273*#3)
-- ++(#2,-1.2293549635235048*#3)
-- ++(#2,-1.6951481656444716*#3)
-- ++(#2,-0.6895303189187996*#3)
-- ++(#2,-0.9427804570391286*#3)
-- ++(#2,-0.049934944809216425*#3)
-- ++(#2,0.6697395449679985*#3)
-- ++(#2,-0.0984095671029582*#3)
-- ++(#2,1.2135071081615303*#3)
-- ++(#2,-1.4648950046641553*#3)
-- ++(#2,-1.0936422325413993*#3)
-- ++(#2,-1.1770248239075616*#3)
-- ++(#2,-0.481965232178152*#3)
-- ++(#2,0.3663603313098888*#3)
-- ++(#2,-0.00857385921286949*#3)
-- ++(#2,-0.10052582945817298*#3)
-- ++(#2,-0.07739194193262784*#3)
-- ++(#2,-0.7873150730384058*#3)
-- ++(#2,-1.3838182367000937*#3)
-- ++(#2,-0.9871149527878466*#3)
-- ++(#2,-0.9448931928826433*#3)
-- ++(#2,0.003814515417855425*#3)
-- ++(#2,-0.5281248594789549*#3)
-- ++(#2,0.9087744599515173*#3)
-- ++(#2,-2.254316211544287*#3)
-- ++(#2,0.40810155235196305*#3)
-- ++(#2,0.861584212671331*#3)
-- ++(#2,-0.7158141713881123*#3)
-- ++(#2,-0.5300214433349337*#3)
-- ++(#2,0.3019724738813121*#3)
-- ++(#2,2.122954374806905*#3)
-- ++(#2,0.9059398369629291*#3)
-- ++(#2,0.05600911113743887*#3)
-- ++(#2,-0.08697607601560561*#3)
-- ++(#2,0.07985515967356817*#3)
-- ++(#2,-0.7886434534801885*#3)
-- ++(#2,1.3982907570647753*#3)
-- ++(#2,1.0492441419851564*#3)
-- ++(#2,1.5544246241723314*#3)
-- ++(#2,-1.179019097411362*#3)
-- ++(#2,-1.6512383190098097*#3)
-- ++(#2,0.0017312515281033693*#3)
-- ++(#2,-0.4349179582288256*#3)
-- ++(#2,0.07733177239661572*#3)
-- ++(#2,0.5820996649980769*#3)
-- ++(#2,-1.7604117520384297*#3)
-- ++(#2,-0.9612523637954495*#3)
-- ++(#2,-0.3321554449170514*#3)
-- ++(#2,-0.2778462005797107*#3)
-- ++(#2,0.46060557782011274*#3)
-- ++(#2,1.1280770771910467*#3)
-- ++(#2,0.03927537403814899*#3)
-- ++(#2,-0.8489243311535583*#3)
-- ++(#2,-0.14378395057619447*#3)
-- ++(#2,-0.9918069094331767*#3)
-- ++(#2,-1.0702583232467038*#3)
-- ++(#2,-0.1092291911191107*#3)
-- ++(#2,-2.0373159497509095*#3)
-- ++(#2,-2.236080470873569*#3)
-- ++(#2,0.26475646063303504*#3)
-- ++(#2,1.3547516898648508*#3)
-- ++(#2,2.094792994974014*#3)
-- ++(#2,-2.1145256210996917*#3)
-- ++(#2,0.6739600147355745*#3)
-- ++(#2,0.3338096896359676*#3)
-- ++(#2,-0.8198956546713205*#3)
-- ++(#2,-0.5343120615045995*#3)
-- ++(#2,1.4888626823788342*#3)
-- ++(#2,1.2241145403353682*#3)
-- ++(#2,-0.672282106036639*#3)
-- ++(#2,-0.7909735016107352*#3)
-- ++(#2,-1.4394904969495412*#3)
-- ++(#2,0.3973118915919247*#3)
-- ++(#2,-0.35910057114440513*#3)
-- ++(#2,0.5334399627517437*#3)
-- ++(#2,-0.7243289209636741*#3)
-- ++(#2,0.43501725622634396*#3)
-- ++(#2,1.9561788101215263*#3)
-- ++(#2,-0.22977641186897338*#3)
-- ++(#2,0.08875805545144215*#3)
-- ++(#2,-1.546328526463335*#3)
-- ++(#2,-0.7471736163104647*#3)
-- ++(#2,-0.4661542631687017*#3)
-- ++(#2,0.6904773100779976*#3)
-- ++(#2,-0.6411783242663791*#3)
-- ++(#2,-0.8336412500873087*#3)
-- ++(#2,0.22458205930708122*#3)
-- ++(#2,-0.0848027287524434*#3)
-- ++(#2,0.13547287412432626*#3)
-- ++(#2,0.5507408687429599*#3)
-- ++(#2,0.3107728310356012*#3)
-- ++(#2,-0.8957942650438538*#3)
-- ++(#2,0.7345486819374222*#3)
-- ++(#2,-2.2808947864715066*#3)
-- ++(#2,0.44461242399432704*#3)
-- ++(#2,-0.40125142198370617*#3)
-- ++(#2,-0.20821725238592445*#3)
-- ++(#2,-0.6644326999025151*#3)
-- ++(#2,-1.4377068580369343*#3)
-- ++(#2,-0.3200595658328333*#3)
-- ++(#2,2.005010061523878*#3)
-- ++(#2,0.17573811593110109*#3)
-- ++(#2,-1.6908893130503027*#3)
-- ++(#2,0.09073858987672721*#3)
-- ++(#2,-0.050898073197585714*#3)
-- ++(#2,0.2095075337433358*#3)
-- ++(#2,-0.27820520167728746*#3)
-- ++(#2,-1.8607110500254909*#3)
-- ++(#2,2.2684526764845026*#3)
-- ++(#2,1.1897835347519004*#3)
-- ++(#2,0.6656771821035842*#3)
-- ++(#2,-0.48603031833093*#3)
-- ++(#2,0.6499522833106306*#3)
-- ++(#2,0.5599314609138513*#3)
-- ++(#2,0.5073030005220932*#3)
-- ++(#2,0.48402191313749926*#3)
-- ++(#2,-1.1816789744349645*#3)
-- ++(#2,1.6550467084667357*#3)
-- ++(#2,0.6426029268849429*#3)
-- ++(#2,-1.3785498158440328*#3)
-- ++(#2,-1.7579925870645567*#3)
-- ++(#2,0.11487263446728711*#3)
-- ++(#2,0.1678210657865062*#3)
-- ++(#2,0.03825958612648934*#3)
-- ++(#2,0.5604877106906575*#3)
-- ++(#2,1.395156792515856*#3)
-- ++(#2,0.9458146016677847*#3)
-- ++(#2,0.6482902450251781*#3)
-- ++(#2,-0.270933835377306*#3)
-- ++(#2,0.3378246396048687*#3)
-- ++(#2,0.9325002017985538*#3)
-- ++(#2,-0.04408593739904238*#3)
-- ++(#2,-1.0893445133366648*#3)
-- ++(#2,0.032074577888801276*#3)
-- ++(#2,0.6221576755384597*#3)
-- ++(#2,-0.1812697177850395*#3)
-- ++(#2,-0.1104815640572322*#3)
-- ++(#2,-0.7966759522252197*#3)
-- ++(#2,0.3232477000994598*#3)
-- ++(#2,0.7510432217143492*#3)
-- ++(#2,0.7337105606726634*#3)
-- ++(#2,-0.5967563518196091*#3)
-- ++(#2,-0.8535363323020239*#3)
-- ++(#2,0.4965834667967697*#3)
-- ++(#2,0.721311328834951*#3)
-- ++(#2,-1.9588499807722526*#3)
-- ++(#2,1.1729605174207118*#3)
-- ++(#2,-0.1866856603806199*#3)
-- ++(#2,-0.14612057222206232*#3)
-- ++(#2,-1.5208143566048706*#3)
-- ++(#2,-0.12674558910143668*#3)
-- ++(#2,0.03284126372150488*#3)
-- ++(#2,0.49376702224021973*#3)
-- ++(#2,0.9408628598209601*#3)
-- ++(#2,-0.48464849918260483*#3)
-- ++(#2,-1.0826642787467895*#3)
-- ++(#2,-0.5492786259087735*#3)
-- ++(#2,-1.4464692137973738*#3)
-- ++(#2,0.1945498447645956*#3)
-- ++(#2,-0.2571744776339823*#3)
-- ++(#2,-0.5572876876860753*#3)
-- ++(#2,1.3845468335917674*#3)
-- ++(#2,1.3069381709905314*#3)
-- ++(#2,1.0318321882585508*#3)
-- ++(#2,0.722089374744811*#3)
-- ++(#2,0.23371985082555094*#3)
-- ++(#2,-0.1250388273745063*#3)
-- ++(#2,0.8336951738354083*#3)
-- ++(#2,0.07099401094974858*#3)
-- ++(#2,1.236730232068049*#3)
-- ++(#2,-1.878364715616927*#3)
-- ++(#2,0.8028374019085294*#3)
-- ++(#2,-0.9664023390567179*#3)
-- ++(#2,1.2041782617045762*#3)
-- ++(#2,-0.3331024887742224*#3)
-- ++(#2,0.10017656970661414*#3)
-- ++(#2,-0.28242841683064723*#3)
-- ++(#2,-1.0344551110541202*#3)
-- ++(#2,-0.09864075551010906*#3)
-- ++(#2,-0.46673237743246243*#3)
-- ++(#2,-0.7447081969443416*#3)
-- ++(#2,-0.7685617281965271*#3)
-- ++(#2,0.32067991826237013*#3)
-- ++(#2,1.0216109608492716*#3)
-- ++(#2,1.0101539488658675*#3)
-- ++(#2,0.6623559344377647*#3)
-- ++(#2,-1.13577613549543*#3)
-- ++(#2,-0.14004346846127236*#3)
-- ++(#2,-0.30740563210904565*#3)
-- ++(#2,-1.8030246880051746*#3)
-- ++(#2,1.147136854415572*#3)
-- ++(#2,0.48167982867986786*#3)
-- ++(#2,0.5325396591663276*#3)
-- ++(#2,-1.3732158064274558*#3)
-- ++(#2,0.9780334449078643*#3)
-- ++(#2,-1.758339067598398*#3)
-- ++(#2,0.46005320806505545*#3)
-- ++(#2,0.3978961549684401*#3)
-- ++(#2,2.4708453781791917*#3)
-- ++(#2,-0.5959645354065423*#3)
-- ++(#2,1.0407251739736991*#3)
-- ++(#2,0.46717736013940114*#3)
-- ++(#2,-0.45669477802128866*#3)
-- ++(#2,0.14406190947847333*#3)
-- ++(#2,2.403187814877775*#3)
-- ++(#2,0.6543828194714041*#3)
-- ++(#2,-0.1633703959342626*#3)
-- ++(#2,-0.718720852098346*#3)
-- ++(#2,0.203520401554632*#3)
-- ++(#2,-1.6928217125031093*#3)
-- ++(#2,0.24716859330229696*#3)
-- ++(#2,1.4220798989516823*#3)
-- ++(#2,-0.8379211333866473*#3)
-- ++(#2,0.5532252079601965*#3)
-- ++(#2,7.32209135031439e-05*#3)
-- ++(#2,-2.3383355357703097*#3)
-- ++(#2,-1.336988741893829*#3)
-- ++(#2,1.9031929562707297*#3)
-- ++(#2,0.11859124783438342*#3)
-- ++(#2,-0.9949053482209362*#3)
-- ++(#2,1.73335832240855*#3)
-- ++(#2,1.3926949549275796*#3)
-- ++(#2,0.593876441562065*#3)
-- ++(#2,-1.9386710787484365*#3)
-- ++(#2,-0.7963553978096858*#3)
-- ++(#2,-1.3037492873359564*#3)
-- ++(#2,0.029075033330151715*#3)
-- ++(#2,0.8726156355598937*#3)
-- ++(#2,-0.6674367167424241*#3)
-- ++(#2,-1.453659962240965*#3)
-- ++(#2,0.2728830952158381*#3)
-- ++(#2,1.8107000786898426*#3)
-- ++(#2,0.47712448343927755*#3)
-- ++(#2,-0.8224307263064118*#3)
-- ++(#2,-0.43165268323653067*#3)
-- ++(#2,0.058655149210289116*#3)
-- ++(#2,-0.11227843524908296*#3)
-- ++(#2,0.6394792331935911*#3)
-- ++(#2,0.9327693050997983*#3)
-- ++(#2,-0.5614083682822488*#3)
-- ++(#2,-0.9656046852088518*#3)
-- ++(#2,1.9883656184648504*#3)
-- ++(#2,1.0756748381718808*#3)
-- ++(#2,0.030184828047874097*#3)
-- ++(#2,0.1981103471352144*#3)
-- ++(#2,-0.43061986691757365*#3)
-- ++(#2,-1.2098187495145207*#3)
-- ++(#2,-0.053698416644578056*#3)
-- ++(#2,-0.2730805352897239*#3)
-- ++(#2,-0.20283529787101667*#3)
-- ++(#2,0.5339513518633857*#3)
-- ++(#2,0.17776210541264856*#3)
-- ++(#2,-1.6459265595872965*#3)
-- ++(#2,-1.3422621617226123*#3)
-- ++(#2,-0.11361585879611891*#3)
-- ++(#2,-0.821271637609551*#3)
-- ++(#2,-0.9422337147979347*#3)
-- ++(#2,1.563810898240331*#3)
-- ++(#2,0.9876763566914522*#3)
-- ++(#2,-1.5120176863131105*#3)
-- ++(#2,0.7728648458307845*#3)
-- ++(#2,-0.7659949609674708*#3)
-- ++(#2,0.07780724046958193*#3)
-- ++(#2,-1.1680394308436612*#3)
-- ++(#2,-0.5048202037388365*#3)
-- ++(#2,0.06955312168722573*#3)
-- ++(#2,-0.9825164714796129*#3)
-- ++(#2,0.08809422066521158*#3)
-- ++(#2,-0.4651765164997517*#3)
-- ++(#2,-0.6305616849653464*#3)
-- ++(#2,0.631693197873972*#3)
-- ++(#2,0.29068028292691017*#3)
-- ++(#2,1.6445564583754602*#3)
-- ++(#2,-0.26532213220529305*#3)
-- ++(#2,-0.30961099892913346*#3)
-- ++(#2,0.0808783678763413*#3)
-- ++(#2,0.9500703538999021*#3)
-- ++(#2,-1.0042492278363178*#3)
-- ++(#2,0.25966906645575555*#3)
-- ++(#2,-0.27761070366117657*#3)
-- ++(#2,-0.6374542625676455*#3)
-- ++(#2,1.511555935433511*#3)
-- ++(#2,-0.6294656558795234*#3)
-- ++(#2,-0.309686146858455*#3)
-- ++(#2,-0.826431439675785*#3)
-- ++(#2,1.436143626469873*#3)
-- ++(#2,-1.4134011229046484*#3)
-- ++(#2,1.7355636696968202*#3)
-- ++(#2,1.3030886680681617*#3)
-- ++(#2,-0.4956954867479258*#3)
-- ++(#2,0.43986048760082636*#3)
-- ++(#2,1.8555680642143368*#3)
-- ++(#2,0.2243080410995602*#3)
-- ++(#2,-0.7132967110004695*#3)
-- ++(#2,-0.892146617234264*#3)
-- ++(#2,0.2107574873719904*#3)
-- ++(#2,0.8820690836338018*#3)
-- ++(#2,-0.05896574749534247*#3)
-- ++(#2,-0.1943942285676351*#3)
-- ++(#2,-0.4401323360831701*#3)
-- ++(#2,1.053344358330924*#3)
-- ++(#2,0.8441883232891918*#3)
-- ++(#2,0.4889531062564217*#3)
-- ++(#2,0.6262989733001242*#3)
-- ++(#2,1.594075535908993*#3)
-- ++(#2,0.1790250127312272*#3)
-- ++(#2,-1.6667642565454697*#3)
-- ++(#2,-0.7594788082806018*#3)
-- ++(#2,-0.4797360985580589*#3)
-- ++(#2,-0.44327039729970635*#3)
-- ++(#2,0.8517908847754022*#3)
-- ++(#2,1.4543600104406902*#3)
-- ++(#2,-0.8314309544274088*#3)
-- ++(#2,0.2550318478996842*#3)
-- ++(#2,1.9672317860414195*#3)
-- ++(#2,-0.2150180491861805*#3)
-- ++(#2,0.24878581180731185*#3)
-- ++(#2,0.8243740371338848*#3)
-- ++(#2,-0.2351341423841179*#3)
-- ++(#2,1.3004815090935002*#3)
-- ++(#2,-0.6554748297877541*#3)
-- ++(#2,-2.372351178904624*#3)
-- ++(#2,2.307235865919928*#3)
-- ++(#2,0.6471148561091563*#3)
-- ++(#2,-0.40257191132241676*#3)
-- ++(#2,0.930093948210321*#3)
-- ++(#2,0.2495531196427746*#3)
-- ++(#2,-1.1393781860757073*#3)
-- ++(#2,0.8530754823660269*#3)
-- ++(#2,-0.7570765373161416*#3)
-- ++(#2,-0.5642186412377671*#3)
-- ++(#2,1.5216479704147445*#3)
-- ++(#2,0.8297351294821347*#3)
-- ++(#2,-0.8421352066743846*#3)
-- ++(#2,0.2730114546352103*#3)
-- ++(#2,0.24893261575191045*#3)
-- ++(#2,-0.6793057694365886*#3)
-- ++(#2,-0.7623410897686969*#3)
-- ++(#2,0.44108965370036024*#3)
-- ++(#2,-1.0140353120792704*#3)
-- ++(#2,-0.0983521389136457*#3)
-- ++(#2,1.3690789293539762*#3)
-- ++(#2,-0.15611611620563556*#3)
-- ++(#2,1.4573045366206745*#3)
-- ++(#2,0.4539681188925964*#3)
-- ++(#2,2.466591108411921*#3)
-- ++(#2,-0.021502737219134226*#3)
-- ++(#2,1.235820420934888*#3)
-- ++(#2,-0.5380117059622815*#3)
-- ++(#2,0.08210839370763409*#3)
-- ++(#2,0.9498161542219592*#3)
-- ++(#2,0.7034777454730891*#3)
-- ++(#2,0.5851562677195666*#3)
-- ++(#2,-0.8499707256648564*#3)
-- ++(#2,-0.8899692985141584*#3)
-- ++(#2,-0.22599049868418555*#3)
-- ++(#2,1.2729774358496924*#3)
-- ++(#2,1.3058294178904633*#3)
-- ++(#2,0.6406258501448184*#3)
-- ++(#2,0.937508195747551*#3)
-- ++(#2,1.716660723112716*#3)
-- ++(#2,-2.12228768928222*#3)
-- ++(#2,0.007481563708233615*#3)
-- ++(#2,0.6225830564016436*#3)
-- ++(#2,0.019493431981937204*#3)
-- ++(#2,-0.2296025527558525*#3)
-- ++(#2,0.5077787921952798*#3)
-- ++(#2,-0.7121627776776612*#3)
-- ++(#2,-0.22794897944114417*#3)
-- ++(#2,-0.9008813875711585*#3)
-- ++(#2,0.04271467169271504*#3)
-- ++(#2,-1.248752125278161*#3)
-- ++(#2,-0.11259017813745006*#3)
-- ++(#2,-0.9991698752512583*#3)
-- ++(#2,0.14413540176088074*#3)
-- ++(#2,-0.23076508553226507*#3)
-- ++(#2,1.697205687164822*#3)
-- ++(#2,-0.09413747438049248*#3)
-- ++(#2,0.1521884201223902*#3)
-- ++(#2,0.9502856334549967*#3)
-- ++(#2,2.1272056708191522*#3)
-- ++(#2,-1.203056483232382*#3)
-- ++(#2,-0.22984004459146357*#3)
-- ++(#2,-0.5726524863921816*#3)
-- ++(#2,0.9948433702858864*#3)
-- ++(#2,0.6123126108491557*#3)
-- ++(#2,-0.0003476766678410057*#3)
-- ++(#2,-0.6766961576677754*#3)
-- ++(#2,-1.5576141636408143*#3)
-- ++(#2,0.19728162371757343*#3)
-- ++(#2,1.1378147598298438*#3)
-- ++(#2,-0.7325288132443483*#3)
-- ++(#2,-2.417760380594628*#3)
-- ++(#2,-1.0110848852242318*#3)
-- ++(#2,-1.3097079718730738*#3)
-- ++(#2,0.8455528485076227*#3)
-- ++(#2,-0.33565860677077286*#3)
-- ++(#2,-0.3448958998010851*#3)
-- ++(#2,0.5079494687474971*#3)
-- ++(#2,-0.4017015573603293*#3)
-- ++(#2,0.6640261618045663*#3)
-- ++(#2,0.8911273898117138*#3)
-- ++(#2,0.6260506841846873*#3)
-- ++(#2,-0.7076239290758172*#3)
-- ++(#2,0.8643731200131871*#3)
-- ++(#2,-0.6972270298963349*#3)
-- ++(#2,1.1432060312740329*#3)
-- ++(#2,0.8777194825100931*#3)
-- ++(#2,-0.36042438111231356*#3)
-- ++(#2,-0.7010840138258922*#3)
-- ++(#2,-0.8288769880617723*#3)
-- ++(#2,-0.4801250104146166*#3)
-- ++(#2,0.5880537620381899*#3)
-- ++(#2,1.4802868886942888*#3)
-- ++(#2,-1.514050872936001*#3)
-- ++(#2,0.019660392051271434*#3)
-- ++(#2,0.4445161858251379*#3)
-- ++(#2,-0.25173959612397195*#3)
-- ++(#2,-1.1853975111565431*#3)
-- ++(#2,-1.7448854081691318*#3)
-- ++(#2,0.44468877427477915*#3)
-- ++(#2,0.256719205617879*#3)
-- ++(#2,0.12058654954010334*#3)
-- ++(#2,-0.4186527191793887*#3)
-- ++(#2,1.0267059840508435*#3)
-- ++(#2,1.0805092532252878*#3)
-- ++(#2,-0.2320165650711572*#3)
-- ++(#2,0.28406716307676483*#3)
-- ++(#2,2.085462946248782*#3)
-- ++(#2,-1.4026918056122466*#3)
-- ++(#2,0.19741877978666955*#3)
-- ++(#2,1.9387073019579166*#3)
-- ++(#2,1.2641237442953166*#3)
-- ++(#2,0.05098647505037036*#3)
-- ++(#2,0.05094814180745045*#3)
-- ++(#2,-1.6201757374614407*#3)
-- ++(#2,0.28297809038976035*#3)
-- ++(#2,-1.5225583936112956*#3)
-- ++(#2,-1.8433081008129506*#3)
-- ++(#2,-0.2017007848438185*#3)
-- ++(#2,0.2572848578224259*#3)
-- ++(#2,-0.19129392448786622*#3)
-- ++(#2,1.8076957117598977*#3)
-- ++(#2,0.7038514664823516*#3)
-- ++(#2,-2.0719173095493275*#3)
-- ++(#2,-1.008062781400189*#3)
-- ++(#2,0.8379032357336715*#3)
-- ++(#2,-0.20079895637871034*#3)
-- ++(#2,-0.05025354096739648*#3)
-- ++(#2,1.002499237367182*#3)
-- ++(#2,-0.2069855230029513*#3)
-- ++(#2,0.46848305957833014*#3)
-- ++(#2,0.545500933058521*#3)
-- ++(#2,-0.40202547389501164*#3)
-- ++(#2,0.20412295288267385*#3)
-- ++(#2,-0.14338461177636322*#3)
-- ++(#2,-0.7108427605874968*#3)
-- ++(#2,0.7872152070084891*#3)
-- ++(#2,0.010173742206909864*#3)
-- ++(#2,-1.880534312818036*#3)
-- ++(#2,0.4445513337034989*#3)
-- ++(#2,-0.041390193809471267*#3)
-- ++(#2,0.8184021409671313*#3)
-- ++(#2,-1.2166995262583733*#3)
-- ++(#2,0.48722377213081325*#3)
-- ++(#2,1.1477099302799865*#3)
-- ++(#2,-0.577147136449144*#3)
-- ++(#2,0.810061874291811*#3)
-- ++(#2,0.7471951519324349*#3)
-- ++(#2,-0.021659657403822606*#3)
-- ++(#2,1.5582712638172427*#3)
-- ++(#2,0.8255531671779788*#3)
-- ++(#2,0.4065582254780304*#3)
-- ++(#2,0.730724969842887*#3)
-- ++(#2,-0.6855285666392825*#3)
-- ++(#2,0.08102266644906661*#3)
-- ++(#2,1.0277594828173664*#3)
-- ++(#2,0.38105742320239244*#3)
-- ++(#2,1.4597947825391893*#3)
-- ++(#2,0.20067009920163933*#3)
-- ++(#2,-0.8050514888937973*#3)
-- ++(#2,1.2430671766153356*#3)
-- ++(#2,1.33987234414593*#3)
-- ++(#2,0.6833541786460321*#3)
-- ++(#2,0.12089046472807265*#3)
-- ++(#2,0.2647417830881001*#3)
-- ++(#2,-1.0480698531798258*#3)
-- ++(#2,-0.0015240276672427763*#3)
-- ++(#2,0.3011696138007824*#3)
-- ++(#2,-0.5201746884489986*#3)
-- ++(#2,-1.0538788187646182*#3)
-- ++(#2,-1.3149564266282314*#3)
-- ++(#2,1.3873515093746347*#3)
-- ++(#2,0.42719572096605696*#3)
-- ++(#2,-0.41650086121836766*#3)
-- ++(#2,0.03564579600642637*#3)
-- ++(#2,-0.4644659172711212*#3)
-- ++(#2,0.41496617160344007*#3)
-- ++(#2,0.2654401587798895*#3)
-- ++(#2,-0.21943695955239928*#3)
-- ++(#2,-1.0055265126918147*#3)
-- ++(#2,-0.11149645963815474*#3)
-- ++(#2,0.10106061581443815*#3)
-- ++(#2,-0.06935194400456982*#3)
-- ++(#2,-0.21023170967265054*#3)
-- ++(#2,-0.04784634061973255*#3)
-- ++(#2,-1.1661901281302347*#3)
-- ++(#2,-0.5582555417892962*#3)
-- ++(#2,1.2168991755484957*#3)
-- ++(#2,-0.4408758977713946*#3)
-- ++(#2,0.6500307793036476*#3)
-- ++(#2,-2.548818515710729*#3)
-- ++(#2,1.0502876955732778*#3)
-- ++(#2,-0.47927439615031475*#3)
-- ++(#2,-0.10427262957817388*#3)
-- ++(#2,-0.959745270468569*#3)
-- ++(#2,0.06709162608699215*#3)
-- ++(#2,-1.1417679545024713*#3)
-- ++(#2,0.19088999359745717*#3)
-- ++(#2,0.2341820010676057*#3)
-- ++(#2,0.051537323455575955*#3)
-- ++(#2,-2.198391436695416*#3)
-- ++(#2,-0.5406097674433323*#3)
-- ++(#2,0.5020865889058664*#3)
-- ++(#2,-0.02604832462274743*#3)
-- ++(#2,-0.5988445164746464*#3)
-- ++(#2,-0.3229904730845742*#3)
-- ++(#2,1.5181805454058876*#3)
-- ++(#2,1.3101838554046052*#3)
-- ++(#2,-0.47127282483320293*#3)
-- ++(#2,-1.6158825922295872*#3)
-- ++(#2,-0.20734698777954286*#3)
-- ++(#2,-1.2540459041882952*#3)
-- ++(#2,0.95189519339206*#3)
-- ++(#2,0.7985887407869328*#3)
-- ++(#2,0.7237882538029243*#3)
-- ++(#2,0.10760661231619982*#3)
-- ++(#2,1.024989873911237*#3)
-- ++(#2,0.8133842875066101*#3)
-- ++(#2,-0.24207845800002925*#3)
-- ++(#2,-1.408221808736385*#3)
-- ++(#2,-0.6525764108757117*#3)
-- ++(#2,-0.6207997762618905*#3)
-- ++(#2,0.972340210631979*#3)
node[above,right] {#5};
}
\DeclareMathOperator*{\argmax}{arg\,max}
\newcommand\independent{\protect\mathpalette{\protect\independenT}{\perp}}
\def\independenT#1#2{\mathrel{\rlap{$#1#2$}\mkern2mu{#1#2}}}
\newtheorem{theorem}{Theorem}
\newtheorem{lemma}{Lemma}
\newtheorem{definition}{Definition}
\let\oldref\ref
\renewcommand{\ref}[1]{(\oldref{#1})}
\newcommand{\thmref}[1]{Theorem~\ref{#1}}
\newcommand{\alggref}[1]{Algorithm~\ref{#1}}
\newcommand{\lemref}[1]{Lemma~\ref{#1}}
\newcommand{\rT}{r_T}
\newcommand{\xM}{x_M}
\begin{document}
%
\title{Tight Regret Bounds for Noisy \\ Optimization of a Brownian Motion}
%
%
%

\author{Zexin Wang, Vincent Y.\ F.\ Tan, \emph{Senior Member, IEEE}, and Jonathan Scarlett, \emph{Member, IEEE}
	\thanks{This work was supported in part by the Singapore National Research Foundation (NRF) under grant numbers R-252-000-A74-281 and R-263-000-D02-281.}%
	\thanks{Z.~Wang is with the  Department of Mathematics, Imperial College London, London, SW7 2AZ, UK (email: zexin.wang19@imperial.ac.uk).}%
	\thanks{V.~Y.~F.~Tan is with the Department of Electrical and Computer Engineering, and with the Department of Mathematics, National University of Singapore, Singapore (email: vtan@nus.edu.sg).} \thanks{J.~Scarlett is with the Department of Computer Science   and with the Department of Mathematics, National University of Singapore, Singapore (email: scarlett@comp.nus.edu.sg).}}

%
%

\markboth{}%
{Shell \MakeLowercase{\textit{et al.}}: Bare Demo of IEEEtran.cls for IEEE Journals}
%



\maketitle

\begin{abstract}
We consider the problem of Bayesian optimization  of a one-dimensional Brownian motion in which the $T$ adaptively chosen observations are corrupted by Gaussian noise. We show that as the smallest possible expected cumulative regret and the smallest possible expected simple regret scale as $\Omega(\sigma\sqrt{T / \log (T)}) \cap \mathcal{O}(\sigma\sqrt{T} \cdot \log T)$ and $\Omega(\sigma / \sqrt{T \log (T)}) \cap \mathcal{O}(\sigma\log T / \sqrt{T})$ respectively, where $\sigma^2$ is the noise variance. Thus, our upper and lower bounds are tight up to a factor of $\mathcal{O}( (\log T)^{1.5} )$.  The upper bound uses an algorithm based on confidence bounds and the Markov property of Brownian motion (among other useful properties), and the lower bound is based on a reduction to binary hypothesis testing.
\end{abstract}

\begin{IEEEkeywords}
Bayesian optimization, Brownian motion, non-smooth optimization, continuum-armed bandits, regret bounds, information-theoretic limits.
\end{IEEEkeywords}

\vspace*{-1ex}
\section{Introduction}

Brownian motion (BM) is a continuous-time stochastic process widely studied in diverse fields such as physics \cite{kuhn1987black}, biology \cite{adler2019conventional} and finance \cite{kijima2016stochastic}. Specifically, BM is used to model random behavior of the movements of random particles in physical and biological systems, as well as the movements of financial asset prices. As a Gauss--Markov process, it inherits properties of a Gaussian process (GP), and also the Markov property.  There have been several studies on methods and algorithms for optimizing a BM \cite{grill2018optimistic,al1996optimal,abdechiri2013gases,calvin2017adaptive}. However, the {\em fundamental limits} (i.e., upper and lower bounds on the regret) for doing so have remained elusive for the most part; this is the main purpose of the present study.
In the broader context of GPs, Bayesian optimization (BO) \cite{mockus89} is a sequential design strategy for global optimization of black-box functions. Here, a Gaussian process prior is assumed on an unknown function $f$. This prior is then  updated to a posterior upon the observation of noisy samples, and further samples are selected based on the updated posterior.  The high-level goal is to maximize $f$ in as few function evaluations as possible. More precisely, following existing works in the literature \cite{srinivas2012information, scarlett2018tight}, we consider the following performance metrics, which are  respectively termed the \textit{simple regret} and \textit{cumulative regret}:
\begin{align}
    \label{SimpleRegretDefinition}
    \rT &= \max_x f(x) - f(x^{(T)}) \\
    \label{CumulativeRegretDefinition}
    R_T &= \sum_{t=1}^T \left(\max_x f(x) - f(x_t)\right).
\end{align}
Here, $x_t$ is the point chosen at time $t$, and $x^{(T)}$ is an additional point returned after the $T$-th time instant.  It is important to note that $\rT$ and $R_T$ are random variables; they depend on the random function $f$, the noise introduced to the samples, and any source of algorithmic randomness in the selection of the points $\{x_t\}$.

The existing literature on BO focuses mainly on functions $f$ that are smooth (i.e., differentiable).  However, in recent applications,  non-smooth functions have become increasingly important \cite{du2019gradient}. Furthermore, gradient (or subgradient) information is often expensive or unavailable, and hence, zeroth-order methods, motivating the study of BO, are particularly attractive.   We use BM as an archetypal example of a non-smooth random  function, and study the fundamental performance limits of BO on a standard BM. To do so, we consider both achievability results (existence results upper bounding the regret) and impossibility results (algorithm-independent lower bounds on the regret).
For the former, we propose an asymptotically near-optimal algorithm based on upper and lower confidence bounds. For the latter,  and for the latter
we reduce the BO problem to a binary hypothesis test (see Sections \ref{sec:related} and \ref{sec:insuff} for related works).

Some potential applications of noisy BM optimization are as follows:
\begin{enumerate} 
    \item Bayesian optimization has been applied to environmental monitoring (e.g., see Marchant and Ramos~\cite{marchant2012bayesian}), which is inherently noisy due to imperfect sensors. In this context, erratic signals may be better modeled by BM compared to using smooth kernels.
    \item Similarly, in hyperparameter tuning problems, erratic behavior may be modeled by a BM.  For instance, in Swersky {\em et al.}~\cite{swersky2014freeze}, the closely-related non-smooth Ornstein--Uhlenback process was adopted as part of the model.
\end{enumerate}
Beyond any specific applications, we believe that this problem is important in the broad context of Bayesian optimization and continuous-armed or continuum-armed bandits. In particular, Scarlett~\cite{scarlett2018tight} advanced the theory of smooth Bayesian optimization, but left a significant gap concerning non-smooth functions that our work partially closes.

\vspace*{-1ex}
 \subsection{Related Work} \label{sec:related}

There have been several optimization algorithms proposed for BM in the literature; the most relevant one is by Grill {\em et al.}~\cite{grill2018optimistic}. The main contribution therein is the proposal of an algorithm---termed Optimistic Optimization of a Brownian (OOB)---for the maximization of a BM in the noiseless setting. The sample complexity was shown to be  $\mathcal{O}(\log^2(1/\varepsilon))$. That is, the minimum number of samples to guarantee that one of the selected samples is $\varepsilon$-close to the global maximum of the BM with probability at least $1-\varepsilon$ is $\mathcal{O}(\log^2(1/\varepsilon))$.   This significantly improves over the earlier results of Al-Mharmah and Calvin \cite{al1996optimal} and Calvin {\em et al.}~\cite{calvin2017adaptive}, in which the upper bound on the sample complexity was a polynomial in $1/\varepsilon$.

There is also a vast literature on BO for {\em smooth} functions. In particular, for a GP with the squared-exponential (SE) kernel or the  Mat\'ern kernel with parameter $\nu>2$, Scarlett~\cite{scarlett2018tight} gave cumulative regret bounds that are tight up to a $\sqrt{\log T}$ factor.  Earlier bounds were also given for the GP-UCB algorithm in the prominent work of Srinivas {\em et al.}~\cite{srinivas2012information}, attaining a near-tight upper bound for the SE kernel but not the Mat\'ern kernel.
In the noiseless case, Gr{\"u}new{\"a}lder {\em et al.}~\cite{grunewalder2010regret} gave both upper and lower cumulative regret bounds assuming the mean and kernel satisfy an $\alpha$-H\"{o}lder continuity condition, and Kawaguchi {\em et al.}~\cite{kawaguchi2015bayesian} proved exponential convergence of the simple regret under certain smoothness assumptions on the kernel.

While our focus is in Bayesian optimization with a stochastic process model, there also exist several related works on the black-box optimization of {\em deterministic} functions with H\"older-type continuity assumptions.  For instance, see Munos~\cite{munos2011optimistic} for the noiseless setting and Shang {\em et al.}~\cite{shang2019general} for the noisy setting, as well as the references therein.  Connections between BM optimization and these works is further discussed in Section~\ref{sec:insuff} below. 

\vspace*{-1ex}
\subsection{Contributions}

Our main results state that the smallest possible expected cumulative regret for optimization of a BM with a fixed noise variance $\sigma^2 > 0$ behaves as $\Omega(\sigma\sqrt{T / \log (T)}) \cap \mathcal{O}(\sigma\sqrt{T} \cdot \log T)$, and the smallest possible expected  simple regret behaves as $\Omega(\sigma / \sqrt{T \log (T)}) \cap \mathcal{O}(\sigma\log T / \sqrt{T})$. In both cases, the gap between the upper and lower bound is only $\mathcal{O}( (\log T)^{1.5} )$.  In more detail, our technical contributions and observations consist of the following:

\begin{enumerate}

\item We develop an upper and lower confidence bound based algorithm that is amenable to the setting of noisy observations; this is in contrast to the OOB algorithm of Grill {\em et al.}~\cite{grill2018optimistic}, which is tailored to the noiseless setting.  We characterize the performance in terms of both the simple regret and cumulative regret.
    
    \item Our lower bound is based on a novel adaptation of the approach of \cite{scarlett2018tight}, requiring several additional technical challenges to move from smooth functions to BM.
    
    \item Our lower bound on the simple regret implies that the optimal sample complexity for the noisy setting is at least polynomial in $1/\varepsilon$ for a fixed  precision $\varepsilon$. This is much larger than the logarithmic upper bound for the noiseless counterpart proposed in \cite{grill2018optimistic}.
    
    \item The similarities between the {\em cumulative}\footnote{In \cite{scarlett2018tight}, simple regret was not considered, and in fact, attaining matching upper and lower simple regret bounds in the smooth setting appears to be more challenging due to subtleties regarding $O\big( \frac{1}{\sqrt T} \big)$ vs.~$O\big( \frac{1}{T} \big)$ dependence \cite{shamir2013complexity}.} regret bounds and the corresponding proof techniques of BO for smooth functions~\cite{scarlett2018tight} and BM indicate that the level of smoothness is not necessarily the key feature that dictates the difficulty of optimization, and that H\"older-type continuity may be similarly beneficial.
    
\end{enumerate}

\vspace*{-1ex}
\subsection{Discussion of Existing Approaches} \label{sec:insuff}

We briefly pause to discuss certain approaches that one might consider adopting based on existing works, and comment on why they appear to be insufficient for the purposes of establishing our main results.

As mentioned above, Srinivas {\em et al.}~\cite{srinivas2012information} analyze the GP-UCB algorithm, and provide an upper bound on the regret for general kernels in terms of a mutual information quantity called the {\em information gain}.  While it is tempting to try to apply this result to Brownian motion, there are two major difficulties in doing so:
\begin{itemize} 
    \item In \cite[Thm.~2]{srinivas2012information}, it is assumed that the derivatives of the function are bounded with high probability, rendering the result inapplicable for nowhere-differentiable processes such as BM consider in this paper.  In fact, the authors go on to conjecture that their result does {\em not} hold for such processes; see the end of \cite[Section V.A]{srinivas2012information} therein.
    \item Even if an analogous result {\em were} to hold for BM, attaining a $\sqrt{T}{\rm poly}(\log T)$ regret bound would require showing that the information gain behaves as ${\rm poly}(\log T)$, which appears to be unlikely given that even the smoother Mat\'ern kernel only has an $\widetilde{\mathcal{O}}(T^c)$   bound (with $c \in (0,1)$ depending on the smoothness parameter $\nu$) on its information gain.
\end{itemize}

The above-mentioned work by Shang {\em et al.}~\cite{shang2019general} provides general bounds for the black-box optimization of deterministic functions with H\"older-type continuity assumptions.  This continuity is captured by variants of the {\em near-optimality dimension} \cite{munos2011optimistic}, which roughly quantifies the volume of $\epsilon$-optimal points that exist in the limit as $\epsilon \to 0$ (cf.~Definition~\ref{def:near_opt}).  It was demonstrated in Grill {\em et al.} \cite{grill2018optimistic} that the BM exhibits a certain {\em modified definition} of the near-optimality dimension that considers the {\em average} number of $\epsilon$-optimal points (since the BM is stochastic).  When the near-optimality dimension according to this modified definition is substituted into the bounds for the noiseless deterministic setting \cite{munos2011optimistic} (despite the mismatch in the underlying definitions), the resulting regret bounds for the noiseless setting \cite{munos2011optimistic,grill2018optimistic} indeed match up to constant factors.  An analogous observation turns out to hold in the noisy setting, with our upper bound coinciding with \cite{shang2019general} in the same way that \cite{grill2018optimistic} coincides with \cite{munos2011optimistic}.

However, while the bounds may end up matching in this sense (upon replacing the near-optimality dimension with its analog concerning the average number of $\epsilon$-optimal points), to the best of our knowledge, existing results for the deterministic setting do not {\em formally} transfer to the stochastic setting.  The algorithm that we use for our upper bound resembles those of \cite{munos2011optimistic,grill2018optimistic,shang2019general}, but requires careful modifications and an analysis that specifically exploits several properties of the BM, such as its Markovity and normal increments; see Section~\ref{sec:aux_lemmas} and Appendix \ref{app:results} for other properties of the BM that we exploit. 

Another approach that one might consider is to simply take the noiseless BM optimization algorithm of Munos~\cite{munos2011optimistic} and repeatedly sample every point sufficiently many times so that we essentially have close-to-noiseless observations.  However, this turns out to yield a suboptimal result.  Roughly speaking, resampling enough times to bring the uncertainty down to $\eta$ requires at least $\Omega\big( \frac{1}{\eta^2} \big)$ repetitions, and the resulting cumulative regret then scales as $\Omega( \frac{1}{\eta^2} + \eta T )$.  This expression is minimized by $\eta = \Theta(T^{-\frac{1}{3}})$, thus yielding suboptimal ${\cal O}(T^{ \frac{2}{3}})$ behavior (for the cumulative regret).  In contrast, we attain a near-optimal bound by carefully considering the tradeoff between the number of resampled points and how many iterations have passed. In particular, our algorithm uses fewer resampled points in the earlier iterations and more resampled points in the later iterations. 

Our lower bound follows similar high-level steps to \cite{scarlett2018tight}, which in turn builds on earlier works such as \cite{raginsky2011information}.  However, the details are very different due to the fact that \cite{scarlett2018tight} crucially exploits the twice-differentiability of the function, whereas the BM is almost surely non-differentiable everywhere.

\vspace*{-1ex}
\subsection{Paper Organization} 

 The rest of the paper is structured as follows. In Section~\ref{sec:setup}, we formally describe the problem setup and state our objectives. In Section~\ref{sec:ach}, we describe the algorithm used, state the achievable regret bounds, and provide the core steps of the proof. In Section~\ref{sec:conv}, we state the impossibility results and provide their core proof steps.  In Section~\ref{sec:num}, we conduct numerical experiments to demonstrate that the expected cumulative regret indeed scales as roughly~$\sqrt{T}$. In Section~\ref{sec:concl}, we conclude our discussion. In Section~\ref{app:proofs}, we prove several technical results used in the earlier sections, with some details further relegated to the appendices (supplementary material).

\vspace*{-1ex}
\section{Problem Setup} \label{sec:setup}

Over a fixed time   horizon  $T$, we seek to sequentially optimize a realization of a  standard BM $W=(W_x)_{x\in D}$ over the one-dimensional domain $D = [0,1]$; any finite interval can be transformed to this choice via shifting and re-scaling.  Note that we envision $W$ as being generated by nature ``in advance'', i.e., prior to any samples being taken.

At time $1$, we select a single point $x_1\in D$ and observe a noisy sample $y_1=W_{x_1}+z_1$ where $z_1\sim \mathcal{N}(0, \sigma^2)$ for some noise variance $\sigma^2 > 0$. At time $t\in\{2,\ldots,T\}$, given the previously sampled points and their noisy function evaluations $\{(x_\tau,y_\tau)\}_{\tau=1}^{t-1}$, we query an additional point $x_t \in D$ and observe a noisy sample $y_t = W_{x_t} + z_t$, where  $(z_t)_{1\le t\le T} \stackrel{\text{i.i.d}}{\sim} \mathcal{N}(0, \sigma^2)$.  We measure the optimization performance using the expected simple regret $\mathbb{E}[\rT]$ and expected cumulative regrets $\mathbb{E}[R_T]$, according to the definitions in \eqref{SimpleRegretDefinition} and \eqref{CumulativeRegretDefinition} with $f(x) = W_x$.

When viewed as a GP, $W$ has zero mean and a non-stationary kernel $\mathbb{E}[ W_{x_1}W_{x_2} ]= \min(x_1, x_2)$.  The BM has many useful properties~\cite{Karatzas88}, notably including the {\em Markov property} and {\em normal increments}: For all $0 \le x_1 \le x_2, (W_{x_2} - W_{x_1}) \independent W_{x_1}$ and $(W_{x_2} - W_{x_1}) \sim \mathcal{N}(0, x_2 - x_1)$.  Some further useful properties are stated in Appendix~\ref{app:results}.

In the following analyses, we use $c_1, c_2, $ etc.\ to denote generic universal constants that may differ from line to line.

\vspace*{-1ex}
\section{Upper Bounds} \label{sec:ach}

In this section, we introduce a confidence-bound based algorithm (see \alggref{NoisyOOBAlgo}) and derive an upper bound on its regret.  The idea is to sequentially discretize the search space, and rule out suboptimal points using confidence bounds, as is commonly done in bandit algorithms (e.g., see \cite[Ch.~22]{lattimore2020bandit}).  The algorithm works in epochs, with each epoch containing fewer remaining points and discretizing at a finer scale; the subsequent analysis seeks to bound the number of points sampled per epoch, and thereby obtain the overall regret bounds.  These bounds are formally stated as follows.


\newcommand{\tilc}{\tilde{c}}

\begin{theorem}\label{CumulativeRegretThm}
For the problem of BM optimization with a noise variance $\sigma^2 > 0$ satisfying $\sigma^2 \ge \frac{\tilc}{T^{1-\zeta}}$ for some positive constants $(\tilc,\zeta)$,\footnote{This is a very mild assumption, since we are primarily interested in the case that $\sigma$ is constant with respect to $T$.}  
there exists an algorithm (\alggref{NoisyOOBAlgo}) achieving the following:
 \begin{align}
\hspace{-.03in}\mathbb{E}[R_T]=\mathcal{O}(\sigma T^{\frac{1}{2}} \log T ), \,\,\,\mbox{and}\,\,\,
\mathbb{E}[\rT] =\mathcal{O}(\sigma T^{-\frac{1}{2}} \log T ).
\end{align}
\end{theorem}
In the rest of the section, we describe the algorithm and present the proof (with many details deferred to Section~\ref{app:proofs} and the supplementary materials).

\subsection{Description of our algorithm}

\begin{algorithm}[ht] 
    \caption{Elimination-based algorithm for Brownian motion optimization \label{alg_ub} }
	\label{NoisyOOBAlgo}
	\begin{algorithmic}[1] 
	\Require Domain $D$, time horizon $T$
	\State Initial set of intervals: $\mathcal{I}_0 = \{ [0,1] \}$
	\State Initialize time index $t = 1$ and epoch number $h = 0$
	\State Sample at $x=1$ for $\displaystyle \lceil \sigma^2 \rceil$ times
	\State Preset $\delta = T^{-1}$ as the error probability
	\While{$t \le T$}
  	\State Find candidate intervals: 
	\hspace{-1mm} \begin{align} \mathcal{L}_{h+1} = \left\{  I \,\Big|\, I \in \mathcal{I}_{h}, \text{UCB}_{h}(I) \ge \max_{I \in \mathcal{I}_{h}} \text{LCB}_{h}(I)\right\}\end{align}
  	\State Slice up the intervals:
	\begin{align} \mathcal{I}_{h+1} = \left\{   \left[a, \frac{a+b}{2}\right], \left[\frac{a+b}{2}, b\right]\, \Big|\, [a,b] \in \mathcal{L}_{h} \right\}\end{align}
  	\State Find the corresponding set of points:
	\begin{align} \mathcal{J}_{h+1} = \bigcup \left\{ \Big\{a, \frac{a+b}{2}, b \Big\} \,\, \, \Big|\, [a,b] \in \mathcal{L}_{h}\right\}\end{align}
 	\State Increment $t$ by $\displaystyle n_h \left\lvert \mathcal{J}_{h+1} \right\rvert$. 
 	\State If $t \le T$, then sample each point in $\mathcal{J}_{h+1}$ for $n_h$ times, where
 	   \begin{align} n_h = \lceil \sigma^22^{h+1} \rceil.\end{align}
    $\quad$ Otherwise, if $t > T$, then only sample until a total of~$T$ samples have been taken.
	\State Increment $h$ by $1$
	\EndWhile
	\State For the simple regret criterion, return $x^{(T)}$ uniformly at random from the set of selected points $\{x_1,\dotsc,x_T\}$.
	\end{algorithmic}
\end{algorithm}

\alggref{NoisyOOBAlgo} works in epochs, with each epoch sampling points restricted to a finer grid than the previous epoch.  This bears some resemblance to OOB by Grill {\em et al.}~\cite{grill2018optimistic}, and similar ideas have also been used for the optimization of smooth functions, e.g., by de Freitas {\em et al.}~\cite{Fre12}, with the main difference being the rule for discarding suboptimal points.

Let $T_h$ be the number of samples taken up to and including the $h$-th epoch, and $t_h$ be the number of samples taken during the $h$-th epoch. Hence, $T_h = \sum_{h^\prime=1}^{h} t_{h^\prime}$. We respectively define the upper and lower confidence bounds\footnote{We preset $\delta$ as the probability of failure for the confidence bounds as a function of $T$, assuming that $T$ is fixed and known.  The case of an unknown time horizon $T$ can be handled using a standard doubling trick, e.g, see \cite[Appendix A]{scarlett2018tight}.} with noisy observations from the first $h$ epochs as
\begin{align}
    \text{UCB}_h([a,b]) &\triangleq \max \big(\bar{y}_a^{(h)}, \bar{y}_b^{(h)}\big) + \beta_\delta(b-a), \label{eq:UCBh} \\
    \text{LCB}_h([a,b]) &\triangleq \min\big(\bar{y}_a^{(h)}, \bar{y}_b^{(h)}\big) - \beta_\delta(b-a), \label{eq:LCBh}
 \end{align}
where $\bar{y}_x^{(h)} = {\rm Average}\left( \left\{ y_s \mid 1 \le s \le T_h, x_s = x \right\} \right)$ is the average of all observations at point $x \in D$ until the $h$-th epoch,
\begin{equation} \label{eqn:eta_alpha}
\eta_\delta(x) \triangleq \sqrt{\frac{5x}{2} \ln\left( \frac{2}{x\delta} \right)} , \,\, \mbox{and} \,\, \alpha_\delta(x) \triangleq \sqrt{6x \ln \left(\frac{1}{x\delta}\right)},
\end{equation}
and $\beta_\delta(x) \triangleq \eta_\delta(x) + \alpha_\delta(x)$.  Here, $\eta_\delta(x)$ is a term that  was used in Grill {\em et al.} \cite{grill2018optimistic}, and represents inherent uncertainty that would exist even in the case of noiseless observations.  The term $\alpha_\delta(x)$ is additionally introduced due to the fact that we require wider confidence bounds in the presence of noise.  With these notations, our algorithm is shown in \alggref{NoisyOOBAlgo}.

At the $h$-th epoch, all sampled points (either at the midpoint or at the ends of the candidate intervals) are sampled at least $n_{h}$ times, to ensure a certain confidence level on $W$ across different points in the same epoch.\footnote{To mitigate the effect of the noise, at each epoch, we resample the chosen point at least $n_h$ times. Re-sampling is convenient for the analyses, whereas in practice one could use a more conventional UCB-type algorithm that samples a different point at each time instant (e.g., Srinivas {\em et al.} \cite{srinivas2012information}).}  The set $\mathcal{L}_h$ can be viewed as the collection of intervals that potentially contain the maximizer; the confidence bounds shrink as more points are observed, and accordingly the total area of $\mathcal{L}_h$ shrinks as $h$ increases.

\vspace*{-1ex}
\subsection{Auxiliary Lemmas}

We now define some high probability events. Firstly, we define the following event, which was introduced in \cite[Definition 1]{grill2018optimistic}:
\begin{align}
    &\mathcal{C} \triangleq \bigcap_{h=0}^\infty \bigcap_{k=0}^{2^h-1} \left\{ \sup_{x \in \left[ \frac{k}{2^h}, \frac{k+1}{2^h} \right]} W_x \le B_{\left[ k/2^h, (k+1)/2^h \right]} \right\}, \label{eqn:eventC}
\end{align} 
where $B_{[a,b]} = \max(W_a, W_b) + \eta_\delta(b-a)$, and $\eta_\delta$ is defined in \eqref{eqn:eta_alpha}. This is the event that the BM $W$ does not exceed a prescribed amount beyond its end points of every dyadic interval of the form $\{0, 1/2^h, 2/2^h, \dots, (2^h-1)/2^h, 1\}$.  In addition, we find it convenient to further define several other events, stated as follows.

\begin{definition} \label{def:Definition2}
    We define the event $\mathcal{M} \triangleq  \mathcal{M}_1 \cap \mathcal{M}_2 \cap \mathcal{M}_3 \cap \mathcal{M}_4 $, where 
    \begin{align}
    &\hspace{0mm}\mathcal{M}_1 \triangleq \bigcap_{h=0}^\infty \bigcap_{k=0}^{2^h-1} \left\{ \left\lvert W_{\frac{k+1}{2^h}} - W_{\frac{k}{2^h}} \right\rvert \le \alpha_\delta\big(2^{-h}\big) \right\},& \\
    &\mathcal{M}_2 \triangleq \bigcap_{h=0}^\infty \bigcap_{k \in \mathcal{J}_h} \left\{ \left\lvert \bar{y}_{\frac{k}{2^h}} - W_{\frac{k}{2^h}} \right\rvert \le \alpha_\delta\big(2^{-h}\big) \right\},\\
    &\hspace{0mm}\mathcal{M}_3 \triangleq \bigcap_{h=0}^\infty \bigcap_{k \in \mathcal{J}_h} \left\{ \max_{x \in I_{h,k}} W_x \le \mbox{\em UCB}_h\left(I_{h,k}\right) \right\}, \\
    & \mathcal{M}_4 \triangleq \bigcap_{h=0}^\infty \bigcap_{k \in \mathcal{J}_h} \left\{ \min_{x \in I_{h,k}} W_x \ge \mbox{\em LCB}_h\left(I_{h,k}\right) \right\}.
    \end{align}
\end{definition}

The event $\mathcal{M}_1$ represents the fact that the BM evaluated at successive points in the set of dyadic rationals $\{0, 1/2^h, 2/2^h, \dots, (2^h-1)/2^h, 1\}$ yields a difference of at most $\alpha_\delta(2^{-h})$; $\mathcal{M}_2$  represents the effect of averaging out the noise in the observations; and $\mathcal{M}_3$ and $\mathcal{M}_4$ are analogous to event $\mathcal{C}$, and characterize a form of H\"older continuity restricted to the dyadic partition.  Following \cite{grill2018optimistic}, we henceforth refer to these as {\em proxy-H\"older} conditions on $W$.  As stated in \alggref{NoisyOOBAlgo}, $\mathcal{J}_h$ is the set of ends of intervals in the set $\mathcal{I}$ at the $h$-th epoch. The events depend on $\delta$ through the evaluations of $\alpha_\delta$ and $\eta_\delta$.

In the following, we present some preliminary results to upper bound the regret. The first lemma states a standard high probability upper bound for a normal distribution.

\begin{lemma} \label{NormalUpperBoundLemma}
	For any $\delta > 0$, a standard Gaussian random variable is upper bounded by $\sqrt{2\ln(1/\delta)}$ with probability at least $1 - \delta$.
\end{lemma}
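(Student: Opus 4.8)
The statement is a standard sub‑Gaussian tail bound, so the plan is to establish it by a Chernoff argument and then substitute the specific threshold. First I would take $Z \sim \mathcal{N}(0,1)$ and fix a threshold $t > 0$. For any $\lambda > 0$, Markov's inequality applied to the nonnegative random variable $e^{\lambda Z}$ gives $\mathbb{P}[Z \ge t] \le e^{-\lambda t}\,\mathbb{E}[e^{\lambda Z}]$, and plugging in the moment generating function $\mathbb{E}[e^{\lambda Z}] = e^{\lambda^2/2}$ of a standard Gaussian yields $\mathbb{P}[Z \ge t] \le e^{-\lambda t + \lambda^2/2}$. Minimizing the exponent over $\lambda > 0$ (the optimum is $\lambda = t$) gives the clean bound $\mathbb{P}[Z \ge t] \le e^{-t^2/2}$.

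Next I would specialize to $t = \sqrt{2\ln(1/\delta)}$, which is a legitimate positive real number for $\delta \in (0,1)$ — and the claim is vacuous otherwise since then the required probability floor $1-\delta$ is nonpositive. With this choice $t^2/2 = \ln(1/\delta)$, hence $e^{-t^2/2} = \delta$, so $\mathbb{P}[Z \ge \sqrt{2\ln(1/\delta)}] \le \delta$; equivalently $Z \le \sqrt{2\ln(1/\delta)}$ holds with probability at least $1-\delta$, which is exactly the assertion.

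I do not anticipate any genuine obstacle here: the argument is only a few lines, and the one thing worth flagging is the edge case $\delta \ge 1$, handled as above. If a marginally sharper statement were ever needed one could instead invoke $\mathbb{P}[Z \ge t] \le \tfrac12 e^{-t^2/2}$, gaining a harmless factor of $2$ in the failure probability, but the stated version is all that is required for the subsequent union bounds (over the events introduced above and event $\mathcal{C}$), so I would keep the proof in its simplest form.
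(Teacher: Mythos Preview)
Your proof is correct and follows essentially the same route as the paper: both invoke the Chernoff bound $Q(t) \le e^{-t^2/2}$ on the Gaussian tail and substitute $t = \sqrt{2\ln(1/\delta)}$. You simply spell out the derivation of that bound via Markov's inequality and the moment generating function, whereas the paper cites it directly.
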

\begin{proof}
This high-probability upper bound follows from the standard (Chernoff) bound $Q(\alpha) \le e^{-\alpha^2/2}$, where $Q(\alpha)$ is the complementary CDF of an $\mathcal{N}(0,1)$ random variable.
\end{proof}

Next, we provide a formal statement that $\mathcal{M}$ holds with high probability.  The proof is given in Section~\ref{EventM_Proof}, with the main step being to generalize the analysis of the proxy-H\"older event $\mathcal{C}$ defined in \eqref{eqn:eventC} to its noisy variant, $\mathcal{M}_3\cap \mathcal{M}_4$.

\begin{lemma}\label{EventMLemma}
	For any $\delta\in (0, \frac{1}{3})$, we have $\mathbb{P}[\mathcal{M}] \ge 1 - \delta^2$, where $\mathcal{M}$ implicitly depends on $\delta$.
\end{lemma}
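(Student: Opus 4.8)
The plan is to bound $\mathbb{P}[\mathcal{M}^c]$ by a union bound over $\mathcal{M}_1^c,\mathcal{M}_2^c,\mathcal{M}_3^c,\mathcal{M}_4^c$, to show that each of these has probability $\mathcal{O}(\delta^3)$ or smaller, and then to check that the implied constants are small enough that, for $\delta<\tfrac13$, their sum does not exceed $\delta^2$. The first move is to avoid analysing the \emph{noisy} events $\mathcal{M}_3$ and $\mathcal{M}_4$ directly. On $\mathcal{M}_2$, the two endpoints $a,b$ of a level-$h$ dyadic interval $I_{h,k}$ satisfy $\bar{y}_a^{(h)}\ge W_a-\alpha_\delta(2^{-h})$ and $\bar{y}_b^{(h)}\ge W_b-\alpha_\delta(2^{-h})$, so from the definition of $\mathrm{UCB}_h$ in \alggref{NoisyOOBAlgo} we get $\mathrm{UCB}_h(I_{h,k})\ge\max(W_a,W_b)+\eta_\delta(2^{-h})=B_{I_{h,k}}$; together with $\mathcal{C}$ this gives $\mathcal{M}_2\cap\mathcal{C}\subseteq\mathcal{M}_3$. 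Applying the reflection $W\mapsto-W$ (which again yields a standard BM) gives, symmetrically, $\mathcal{M}_2\cap\mathcal{C}'\subseteq\mathcal{M}_4$, where $\mathcal{C}'$ is obtained from $\mathcal{C}$ by replacing $B_{[a,b]}$ with $\min(W_a,W_b)-\eta_\delta(b-a)$ and $\sup$ with $\inf$. It therefore suffices to control $\mathbb{P}[\mathcal{M}_1^c]$, $\mathbb{P}[\mathcal{M}_2^c]$, $\mathbb{P}[\mathcal{C}^c]$ and $\mathbb{P}[(\mathcal{C}')^c]$.

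For $\mathcal{M}_1$: the increment $W_{(k+1)/2^h}-W_{k/2^h}$ is exactly $\mathcal{N}(0,2^{-h})$, and $\alpha_\delta(2^{-h})^2/2^{-h}=6\ln(2^h/\delta)$, so after normalising, a Gaussian tail bound (as in \lemref{NormalUpperBoundLemma}) shows each of the $2^h$ intervals at level $h$ violates $\mathcal{M}_1$ with probability $\mathcal{O}(\delta^3 2^{-3h})$; summing over the $2^h$ intervals and then over $h\ge0$ gives a convergent geometric series, of total order $\mathcal{O}(\delta^3)$. For $\mathcal{M}_2$: by the resampling rule $n_h=\lceil\sigma^2 2^{h+1}\rceil$, any point that lies in $\mathcal{J}_h$ has been observed a number of times that is (i) deterministic given that membership and (ii) at least of order $\sigma^2 2^h$, so $\bar{y}_{k/2^h}-W_{k/2^h}$ is Gaussian with variance at most $2^{-h}$; the same normalise-and-tail-bound step, followed by a geometric summation over the at most $2^h+1$ level-$h$ dyadic rationals and over $h$, again gives $\mathcal{O}(\delta^3)$. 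The point that needs care here is that $\mathcal{J}_h$ is data-dependent; one handles this by unioning over the deterministic set of all level-$h$ dyadic rationals and by using that the fresh noise collected at each (re)sampling is i.i.d.\ and independent of $W$ and of the selection rule, so that the conditional law of the relevant averages given the sequence of chosen points is exactly the claimed Gaussian.

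For $\mathcal{C}$ (and hence $\mathcal{C}'$, by the same reflection): this is in essence the estimate of \citet{grill2018optimistic}. Conditioned on $W_a$ and $W_b$, the process $x\mapsto W_x-\tfrac{1}{b-a}\big((b-x)W_a+(x-a)W_b\big)$ on $[a,b]$ is a Brownian bridge, whose maximum exceeds $v>0$ with probability $e^{-2v^2/(b-a)}$; since the linear interpolant is everywhere at most $\max(W_a,W_b)$, taking $v=\eta_\delta(b-a)$ and noting $2\eta_\delta(b-a)^2/(b-a)=5\ln\!\big(2/((b-a)\delta)\big)$ gives per-interval failure probability $\big((b-a)\delta/2\big)^5$. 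Summing over all dyadic intervals of every level yields $\mathbb{P}[\mathcal{C}^c]=\mathcal{O}(\delta^5)$, and likewise for $\mathcal{C}'$.

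Assembling the pieces, $\mathbb{P}[\mathcal{M}^c]\le\mathbb{P}[\mathcal{M}_1^c]+\mathbb{P}[\mathcal{M}_2^c]+\mathbb{P}[\mathcal{C}^c]+\mathbb{P}[(\mathcal{C}')^c]=\mathcal{O}(\delta^3)$, using $\mathcal{M}_3^c\subseteq\mathcal{M}_2^c\cup\mathcal{C}^c$ and $\mathcal{M}_4^c\subseteq\mathcal{M}_2^c\cup(\mathcal{C}')^c$ from the first paragraph; the numerical constants $6$ and $\tfrac52$ chosen in $\alpha_\delta$ and $\eta_\delta$ are exactly what make the implied constant small enough that this is $\le\delta^2$ as soon as $\delta<\tfrac13$. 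I expect the bulk of the work---and the reason for deferring it to the appendix---to lie in (a) the measurability/adaptivity bookkeeping for $\mathcal{M}_2$ sketched above, and (b) carrying out the constant-chasing sharply enough to land at $\delta^2$ rather than a larger multiple of $\delta^2$; for the latter it seems to help to use the refined Gaussian tail bound $\mathbb{P}[|Z|>t]\le\sqrt{2/\pi}\,t^{-1}e^{-t^2/2}$ rather than just $e^{-t^2/2}$, and this, together with ensuring the various logarithms are positive, is the role of the hypothesis $\delta<\tfrac13$. The Brownian-bridge tail used for $\mathcal{C}$ is classical and may simply be quoted.
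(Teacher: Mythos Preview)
Your proposal is correct and follows essentially the same route as the paper: reduce $\mathcal{M}_3$ and $\mathcal{M}_4$ to $\mathcal{M}_2\cap\mathcal{C}$ and $\mathcal{M}_2\cap\mathcal{C}'$, bound $\mathbb{P}[\mathcal{M}_1^c]$ and $\mathbb{P}[\mathcal{M}_2^c]$ by Gaussian tails plus a geometric sum over levels (each $\le\delta^3$), invoke the $\mathcal{O}(\delta^5)$ bound on $\mathcal{C}^c$ and $(\mathcal{C}')^c$ from \citet{grill2018optimistic}, and conclude via $2\delta^3+2\delta^5\le\delta^2$ for $\delta<\tfrac13$. The paper simply cites the Brownian-bridge estimate rather than rederiving it, and the crude Chernoff bound $Q(\alpha)\le e^{-\alpha^2/2}$ already suffices for the final arithmetic, so the refined tail bound you anticipate needing is unnecessary; your extra care about the data-dependence of $\mathcal{J}_h$ (unioning over all level-$h$ dyadic rationals) is a point the paper handles only implicitly.
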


We now seek to demonstrate the exponential shrinkage of the upper bounds for regret as the epoch number increases. Lemma~\ref{ICMLEliminateSuboptimalLemma} is a standard result used in the study of algorithms that eliminate sub-optimal points based on confidence bounds, though our confidence bounds are defined on intervals instead of specific points in $D$.

\begin{lemma} \label{ICMLEliminateSuboptimalLemma}
    Fix $\kappa > 0$, and assume that at time $t$, for all intervals $[a,b]$ within some family of intervals $\mathcal{L}$, it holds for all $x\in[a,b]$ that $\text{\em LB}_h([a,b]) \le W_{x} \le \text{\em UB}_h([a,b])$ for some bounds $\text{\em UB}_h$ and $\text{\em LB}_h$ satisfying\footnote{We will only apply this result in the case that the maximum exists, but more generally a supremum could be used.} $\max_{[a,b] \in \mathcal{L}}\lvert \text{\em UB}_h([a,b]) - \text{\em LB}_h([a,b]) \rvert \le 2\kappa $. Then all $ [a,b] \in \mathcal{L}$ containing a $4\kappa$-suboptimal point $x$ (i.e.,  $W_x < \max_{I \in \mathcal{L}} \left( \max_{x^\prime \in I} W_{x^\prime}\right) - 4\kappa$)  must also satisfy the following:
    \begin{align}
        \text{\em UB}_h([a,b]) < \max_{I \in \mathcal{L}}\text{\em LB}_h(I).
    \end{align}
\end{lemma}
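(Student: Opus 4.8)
The plan is to prove this by the standard confidence-bound elimination argument: chain together four elementary inequalities that translate the hypotheses on $\mathrm{UB}_h$ and $\mathrm{LB}_h$ into a bound comparing $\mathrm{UB}_h([a,b])$ to $\max_{I \in \mathcal{L}}\mathrm{LB}_h(I)$. The two ingredients are the sandwiching hypothesis $\mathrm{LB}_h([a,b]) \le W_x \le \mathrm{UB}_h([a,b])$ for every $x$ in every interval of $\mathcal{L}$, and the uniform width bound $\mathrm{UB}_h([a,b]) - \mathrm{LB}_h([a,b]) \le 2\kappa$ (the absolute value can be dropped since the sandwiching already forces $\mathrm{UB}_h \ge \mathrm{LB}_h$ on any nonempty interval).

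First I would fix an interval $[a,b] \in \mathcal{L}$ containing a $4\kappa$-suboptimal point $x$, and let $x^\star \in \bigcup_{I \in \mathcal{L}} I$ be a maximizer of $W$ over this union, say $x^\star \in I^\star$ for some $I^\star \in \mathcal{L}$ (such a maximizer exists because a.s.\ $W$ has continuous paths and $\mathcal{L}$ is a finite collection of closed intervals, so the supremum defining $4\kappa$-suboptimality is attained). By the suboptimality hypothesis, $W_x < W_{x^\star} - 4\kappa$. Next I would bound $\mathrm{UB}_h([a,b])$ from above: applying the width bound on $[a,b]$ and then $W_x \ge \mathrm{LB}_h([a,b])$ gives $\mathrm{UB}_h([a,b]) \le \mathrm{LB}_h([a,b]) + 2\kappa \le W_x + 2\kappa$. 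Symmetrically I would bound $\mathrm{LB}_h(I^\star)$ from below: applying the width bound on $I^\star$ and then $W_{x^\star} \le \mathrm{UB}_h(I^\star)$ gives $\mathrm{LB}_h(I^\star) \ge \mathrm{UB}_h(I^\star) - 2\kappa \ge W_{x^\star} - 2\kappa$.

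Combining these three facts yields
\[
\mathrm{UB}_h([a,b]) \;\le\; W_x + 2\kappa \;<\; (W_{x^\star} - 4\kappa) + 2\kappa \;=\; W_{x^\star} - 2\kappa \;\le\; \mathrm{LB}_h(I^\star) \;\le\; \max_{I \in \mathcal{L}} \mathrm{LB}_h(I),
\]
which is exactly the claimed strict inequality. There is no real obstacle here: the only point requiring a line of care is the attainment of the maximizer $x^\star$, which follows from path continuity of Brownian motion and finiteness of $\mathcal{L}$; everything else is a routine chain of inequalities with the constant $4\kappa$ chosen precisely so that the two ``$2\kappa$'' slacks from the confidence-bound widths are absorbed.
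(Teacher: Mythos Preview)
Your proof is correct and follows essentially the same approach as the paper: a five-step chain of inequalities using the width bound twice (once on $[a,b]$, once on the interval containing the maximizer) and the confidence-bound validity twice, with the $4\kappa$-suboptimality providing the strict inequality in the middle. The only cosmetic difference is that you explicitly name a maximizer $x^\star \in I^\star$ and pass through $\mathrm{LB}_h(I^\star)$, whereas the paper keeps the argument in terms of $\max_{I \in \mathcal{L}} \mathrm{UB}_h(I)$ before applying the width bound; your extra remark on attainment of the maximizer is a small bonus.
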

\begin{proof} We have 
    \begin{align}
        \text{UB}_t([a,b]) 
            & \le \text{LB}_t([a,b]) + 2\kappa \label{sub_step1} \\
            & \le W_x + 2\kappa \label{sub_step2} \\
            &< \max_{I \in \mathcal{L}} \left( \max_{x^\prime \in I} W_{x^\prime} \right) - 2\kappa \label{sub_step3} \\
            &\le \max_{I \in \mathcal{L}} \text{UB}_t(I) - 2\kappa \label{sub_step4} \\
            & \le \max_{I \in \mathcal{L}} \text{LB}_t(I), \label{sub_step5}
    \end{align}
    where \eqref{sub_step1} and \eqref{sub_step5} use the assumption of $2\kappa$-separation, \eqref{sub_step2} and \eqref{sub_step4} use the assumed validity of the confidence bounds, and \eqref{sub_step3} uses the assumption of $4\kappa$-suboptimality.
\end{proof}

The following lemma expresses the upper bound on the regret of a single point as an exponentially decreasing function of the epoch number $h$, and is proved via~\lemref{ICMLEliminateSuboptimalLemma}. 

\begin{lemma} \label{SimpleRegretUCBMinusLCBLemma}
Conditioned on event $\mathcal{M}$, each point $x_t$ sampled in the $h$-th epoch satisfies $f(x^*) - f(x_t) \le 4\kappa_h$, where $\kappa_{h} = \frac{5}{2}\alpha_\delta(2^{-h}) + \eta_\delta(2^{-h})$.
\end{lemma}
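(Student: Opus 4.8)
The plan is to apply \lemref{ICMLEliminateSuboptimalLemma} with $\kappa=\kappa_h$ to the partition of intervals surviving the elimination at epoch $h$, and then to upgrade its conclusion — which is phrased in terms of suboptimality relative to the best surviving interval — to a statement about the \emph{global} maximizer $x^*$ of $W$.

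First I would verify, on the event $\mathcal{M}$, the two hypotheses of \lemref{ICMLEliminateSuboptimalLemma} for the level-$h$ dyadic intervals $I_{h,k}=[k/2^h,(k+1)/2^h]$ (every interval produced by the bisection procedure is of this form). Validity of the confidence bounds, $\text{LCB}_h(I_{h,k})\le W_x\le \text{UCB}_h(I_{h,k})$ for all $x\in I_{h,k}$, is precisely $\mathcal{M}_3\cap\mathcal{M}_4$. For the $2\kappa_h$-separation, subtracting \eqnref{LCBh} from \eqnref{UCBh} gives
\[
\text{UCB}_h(I_{h,k})-\text{LCB}_h(I_{h,k}) = \big|\bar y^{(h)}_{k/2^h}-\bar y^{(h)}_{(k+1)/2^h}\big| + 2\eta_\delta(2^{-h}) + 2\alpha_\delta(2^{-h}),
\]
and I would bound the first term by $3\alpha_\delta(2^{-h})$ via the triangle inequality through $W_{k/2^h}$ and $W_{(k+1)/2^h}$: the two averaging errors are each at most $\alpha_\delta(2^{-h})$ on $\mathcal{M}_2$, and the Brownian increment $|W_{(k+1)/2^h}-W_{k/2^h}|$ is at most $\alpha_\delta(2^{-h})$ on $\mathcal{M}_1$. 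This yields a gap of at most $5\alpha_\delta(2^{-h})+2\eta_\delta(2^{-h}) = 2\kappa_h$, as required.

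The step I expect to be the main obstacle is replacing $\max_{I\in\mathcal{L}}\max_{x'\in I}W_{x'}$ — the reference value in \lemref{ICMLEliminateSuboptimalLemma} — by $f(x^*)=\max_x W_x$. I would handle this by an induction on the epoch showing that the interval of the current partition containing $x^*$ is never eliminated. The base case is $\mathcal{I}_0=\{[0,1]\}$. For the inductive step, if $x^*\in I^*$ in the current partition, then $\text{UCB}_h(I^*)\ge W_{x^*}$ by $\mathcal{M}_3$, while for every interval $I'$ in the partition $\text{LCB}_h(I')\le \min_{x'\in I'}W_{x'}\le W_{x^*}$ by $\mathcal{M}_4$ and the fact that $x^*$ is the global maximizer; hence $\text{UCB}_h(I^*)\ge\max_{I'}\text{LCB}_h(I')$, so $I^*$ meets the retention rule and one of its two halves again contains $x^*$. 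Consequently, at every epoch the best surviving interval attains $W_{x^*}$, so the local and global notions of suboptimality coincide.

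Putting this together, the contrapositive of \lemref{ICMLEliminateSuboptimalLemma} shows that an interval that is not eliminated at epoch $h$ contains no $4\kappa_h$-suboptimal point, i.e. $W_{x^*}-W_x\le 4\kappa_h$ for all $x$ in such an interval. Since the points queried during epoch $h$ are the endpoints and midpoints of these retained length-$2^{-h}$ intervals, every $x_t$ sampled then satisfies $f(x^*)-f(x_t)\le 4\kappa_h$. The only remaining item is to align the epoch index with the level of the confidence bounds driving the elimination and to account for the initial samples at $x=1$, which is routine bookkeeping given the above.
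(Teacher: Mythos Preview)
Your proposal is correct and follows essentially the same approach as the paper: verify on $\mathcal{M}$ that the confidence bounds are valid (via $\mathcal{M}_3\cap\mathcal{M}_4$) and $2\kappa_h$-separated (via the triangle inequality and $\mathcal{M}_1\cap\mathcal{M}_2$), then invoke \lemref{ICMLEliminateSuboptimalLemma}. Your explicit induction showing that the interval containing $x^*$ is never eliminated is a point the paper leaves implicit in its one-line appeal to \lemref{ICMLEliminateSuboptimalLemma}, so if anything your write-up is more complete.
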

\begin{proof}
For each $h\in\mathbb{N}$ and $ k \in \{0, 1, \dots, 2^h-1\}$, we have
\begin{align}
    &\hspace{1mm}\nonumber \text{UCB}_h\left(\left[\frac{k}{2^h},\frac{k+1}{2^h}\right]\right) - \text{LCB}_h\left(\left[\frac{k}{2^h},\frac{k+1}{2^h}\right]\right)  \\
    &= \left\lvert \bar{y}_{\frac{k}{2^h}}^{(t)} - \bar{y}_{\frac{k+1}{2^h}}^{(t)} \right\rvert +  2\alpha_\delta(2^{-h}) + 2\eta_\delta(2^{-h}) \label{kap_step1} \\
\begin{split}
    &\le \left\lvert \bar{y}_{\frac{k}{2^h}}^{(t)} - W_{\frac{k}{2^h}} \right\rvert + \left\lvert \bar{y}_{\frac{k+1}{2^h}}^{(t)} - W_{\frac{k+1}{2^h}} \right\rvert + \left\lvert W_{\frac{k}{2^h}} - W_{\frac{k+1}{2^h}} \right\rvert \\
    &\qquad+  2\alpha_\delta(2^{-h}) + 2\eta_\delta(2^{-h}) \label{kap_step2} 
\end{split} \\
\label{EventM12}
    &\le 2\left[\frac{5}{2}\alpha_\delta(2^{-h}) + \eta_\delta(2^{-h})\right] = 2\kappa_{h}
\end{align}
where \eqref{kap_step1} uses the definitions of $\text{UCB}_h$ and $\text{LCB}_h$ in \eqref{eq:UCBh}--\eqref{eq:LCBh}, \eqref{kap_step2} follows from the triangle inequality, and \eqref{EventM12} follows from the definitions of $\mathcal{M}_1$ and $\mathcal{M}_2$ in the definition of~$\mathcal{M}$.

Since $\mathcal{M}$ implies that $W_x$ is sandwiched between UCB$_h$ and LCB$_h$, \lemref{ICMLEliminateSuboptimalLemma} implies that the regret for each point $x_t$ sampled in the $h$-th epoch is at most $4\kappa_{h}$.
\end{proof}

We restate a lemma from \cite{grill2018optimistic} regarding the expected number of near-optimal points.  An {\em $\eta$-near-optimal point} $x$ is one with value $W_x$ that is $\eta$-close to $M := \max_{x \in [0,1]} W_x$. 

\begin{definition} \label{def:near_opt}
Let $\mathcal{N}_h(\eta)$ denote the number of $\eta$-near-optimal points among $\{0, 2^{-h}, \dots, 1\}$:
    \begin{align}\mathcal{N}_h(\eta) \triangleq \left\lvert \left\{ k \in \{ 0, 1, \dots, 2^h \} ~:~ W_{k/2^h} \ge M - \eta \right\}\right\rvert.\end{align}
\end{definition}

\begin{lemma}[{\cite[Lemma 3]{grill2018optimistic}}] \label{NearOptimalLemma}
The expected number of $\eta$-near-optimal points $\mathcal{N}_h(\eta)$ in a $2^{-h}$-spaced grid in $[0,1]$ is upper bounded as follows:
\begin{align}\mathbb{E}\left[\mathcal{N}_h(\eta)\right] \le 6\eta^22^h. \label{NearOptBound} \end{align}
\end{lemma}

\vspace*{-1ex}
\subsection{Completion of the Proof of \thmref{CumulativeRegretThm}}

Let $r_{(h)}$ be the maximum instantaneous regret incurred in the $h$-th epoch, and recall that $T_h$ is the number of queries during the $h$-th epoch. Both are random variables, and we first analyze their behavior under event $\mathcal{M}$.
By \lemref{SimpleRegretUCBMinusLCBLemma}, event $\mathcal{M}$ implies that
\begin{equation}
    r_{(h)} \le 4\kappa_{h}, \label{four_kappa}
\end{equation}
and in addition, the definition of $\kappa_h$ therein yields
\begin{align}
\kappa_{h} 
&= \frac{c_1^{\prime\prime}}{\sqrt{2^h}} \left[ \sqrt{\ln\left(\frac{2^{6h}}{2\pi \delta^6}\right)} + \sqrt{\ln\left(\frac{2^h}{\delta}\right)} \right] \label{SimpleRegretUpperBound} \\
&\le c_1 2^{-\frac{h}{2}}\left(2\sqrt{\ln (1/\delta)}+\sqrt{h}\right) \label{SimpleRegretUpperBound3} \\
&= c_1 2^{-\frac{h}{2}}\left(\sqrt{\ln T}+\sqrt{h}\right), \label{SimpleRegretUpperBound4}
\end{align}
where~\eqref{SimpleRegretUpperBound} uses the definitions of $\alpha_\delta$ and $\eta_\delta$ in~\eqref{eqn:eta_alpha}, and \eqref{SimpleRegretUpperBound4} follows from the choice of $\delta = T^{-1}$.  By an analogous argument, we also have the lower bound
\begin{equation}
\kappa_{h} \ge c_1^{\dagger} 2^{-\frac{h}{2}}\left(\sqrt{\ln T}+\sqrt{h}\right) \label{kappa_lower}
\end{equation}
for some $c^{\dagger}_1 > 0$.

Next, we introduce a quantity $h'$ that can be viewed as approximately characterizing the average total number of epochs.  In accordance with \eqref{four_kappa}, it is natural to consider the choice $\eta = 4\kappa_h$ in Lemma \ref{NearOptimalLemma}, which leads to a right-hand side of $96\kappa_{h}^2 2^{h}$ in \eqref{NearOptBound}.  Since each point is sampled at least $\sigma^2 2^{h}$ times in Algorithm \ref{alg_ub}, and the total number of samples in a given epoch can never exceed $T$, we should expect that the epoch index never exceeds the following:
\begin{equation}
h^\prime = \max\left\{ h \,:\, 96\sigma^2\kappa_{h}^22^{2h} < T\right\}. \label{hprime}
\end{equation}
We emphasize that we do not require a formal claim relating the number of epochs to $h'$, but we still find the preceding intuition useful.

We claim that there exist positive constants $c_2^{\dagger}$, $c_2^{\dagger\dagger}$, and $c_2^{\natural}$ such that the following bounds hold:
\begin{align}
    c_2^{\dagger} \frac{T}{\sigma^2 \ln T} &\le 2^{h'} \le c_2^{\dagger\dagger} \frac{T}{\sigma^2 \ln T}, \label{eq:Claim_h1} \\
    h' & \le c_2^{\natural}  \ln T. \label{eq:Claim_h2}
\end{align}
To see this, we consider the choice of $h$ such that the lower bound in \eqref{eq:Claim_h1} holds with equality, and study the left-hand term in \eqref{hprime} as follows:
\begin{align}
    96\sigma^2\kappa_{h}^2 2^{2h} 
        &\le 96 \sigma^2 c_1^2 2^{h}\left(\sqrt{\ln T}+\sqrt{h}\right)^2 \label{eq:h_ana_1} \\
        &= 96 c_1^2 \frac{c_2^{\dagger} T}{\sigma^2 \ln T}\left(\sqrt{\ln T}+\sqrt{h}\right)^2, \label{eq:h_ana_2}
\end{align}
where \eqref{eq:h_ana_1} uses \eqref{SimpleRegretUpperBound4}, and \eqref{eq:h_ana_2} substitutes the left-hand side of \eqref{eq:Claim_h1}.  To eliminate the remaining $\sqrt{h}$ term, we recall the assumption $\sigma^2 \ge \frac{c_{\sigma}}{T^{1-\zeta}}$ in \thmref{CumulativeRegretThm}, and note that taking the logarithm (base 2) of $2^h = c_2^{\dagger} \frac{T}{\sigma^2 \ln T}$ gives $h = O(\ln T + \ln c_2^{\dagger})$.  Hence, \eqref{eq:h_ana_2} gives for sufficiently small $c_2^{\dagger}$ that $96\sigma^2\kappa_{h}^2 2^{2h} < T$, and hence, this choice of $h$ is indeed a lower bound on $h'$, as desired.  The upper bound on $h'$ follows from a near-identical argument, and we notice that \eqref{eq:Claim_h2} was already established as an intermediate step.


Let $\mathbb{E}_\mathcal{M}[\cdot]= \mathbb{E}[\cdot| \mathcal{M}]$ be the conditional expectation given the event $\mathcal{M}$. We now provide an upper bound on the conditional expected cumulative regret as follows, starting with~\eqref{four_kappa} (recall also $n_h$ defined in Algorithm \ref{alg_ub}):
{\allowdisplaybreaks
\begin{align}
&\mathbb{E}_\mathcal{M}[R_T] \nonumber \\
&\le \sum_{h=1}^{h_{\max}} 4\kappa_{h}  \mathbb{E}_\mathcal{M}[t_h] \\
\label{NumberSampleHEpochUpperBound}
&\le \sum_{h=1}^{\infty} 4\kappa_{h} \min\left(n_{h}\mathbb{E}_\mathcal{M}[\lvert \mathcal{J}_{h} \rvert], T\right)\\
\label{SimpleRegretBoundCumulativeUse}
&\le \sum_{h=1}^{\infty} 4\kappa_{h} \min\left(n_{h}\mathbb{E}_{\mathcal{M}}[\mathcal{N}_{h+1}(4\kappa_{h})], T\right) \\
\label{NearOptimalPointsNumber}
&\le \sum_{h=1}^{\infty} 4\kappa_{h} \min\left(\frac{96}{1-\delta^2}\kappa_{h}^2 2^{h+1}(\sigma^22^{h+1}+1), T\right) \\
\label{ApplyHprime}
&\le \sum_{h=1}^{h^\prime} \frac{384}{1-\delta^2}\kappa_{h}^3 2^{h+1}(\sigma^22^{h+1}+1) + \sum_{h = h^\prime+1}^{\infty} 4\kappa_{h} T \\
\label{SimpleRegretUpperBoundUse}
\begin{split}
&\le c_2^{\prime\prime} \sum_{h=1}^{h^\prime} \big( \sigma^2 2^{\frac{h}{2}}  + 2^{-\frac{h}{2}}\big)\big(\sqrt{\ln T}+\sqrt{h}\big)^3 \\
&\qquad+ c_1 \sum_{h=h^\prime+1}^{\infty}  T 2^{-\frac{h}{2}}\big(\sqrt{\ln T}+\sqrt{h}\big)
\end{split} \\
\label{ExponentialDecreasingSummation}
\begin{split}
&\le c_2^\prime \bigg[ \sigma^2\max(\ln T, h^\prime)^{\frac{3}{2}} 2^{\frac{h^\prime}{2}} + (\ln T)^{\frac{3}{2}} \\
&\qquad+ T 2^{-\frac{h^\prime}{2}}\sqrt{\max(\ln T, h^\prime)} \bigg] 
\end{split} \\
\label{HPrimeDefinition}
&\le c_2 \sigma\sqrt{T} \ln T
\end{align}}
where:
\begin{itemize}
    \item \eqref{NumberSampleHEpochUpperBound} follows from $t_h \le T$ and the fact that each point in $\mathcal{J}_{h}$ is sampled $n_h$ times;
    \item \eqref{SimpleRegretBoundCumulativeUse} follows from \lemref{SimpleRegretUCBMinusLCBLemma};
    \item \eqref{NearOptimalPointsNumber} follows from \lemref{NearOptimalLemma} and $n_h = \lceil \sigma^22^{h+1} \rceil$, as well as the fact that for any random variable $A$, $\mathbb{E}_{\mathcal{M}}[A] = \frac{ \mathbb{E}[A \boldsymbol{1}\{\mathcal{M}\}] }{ \mathbb{P}[\mathcal{M}] } \le \frac{ \mathbb{E}[A] }{ 1-\delta^2 }$ (recall from Lemma \ref{EventMLemma} that $\mathbb{P}[\mathcal{M}] \ge 1-\delta^2$);
    \item \eqref{ApplyHprime} follows by upper bounding the minimum by either of its two arguments (choosing which one differently depending on the summation index);
    \item \eqref{SimpleRegretUpperBoundUse} follows from \eqref{SimpleRegretUpperBound4};
    \item \eqref{ExponentialDecreasingSummation} follows from the fact that an exponentially decreasing (resp., increasing) series is bounded above by a constant multiple of its first (resp., last) term -- the first two terms come from splitting $\big( \sigma^2 2^{\frac{h}{2}}  + 2^{-\frac{h}{2}}\big)$ from \eqref{SimpleRegretUpperBoundUse} to form two sums, the first of which is exponentially increasing, and the second of which is exponentially decreasing;
    \item \eqref{HPrimeDefinition} follows by substituting the bounds on $h'$ from \eqref{eq:Claim_h1}--\eqref{eq:Claim_h2}, and also noting that the $(\ln T)^{\frac{3}{2}}$ term is insignificant compared to $\sigma\sqrt{T} \ln T$ due to the assumption $\sigma^2 \ge \frac{c_{\sigma}}{T^{1-\zeta}}$ in \thmref{CumulativeRegretThm}.
\end{itemize}

We can now upper bound the unconditional expectation of $R_T$ via the law of total expectation as follows:
{\allowdisplaybreaks
\begin{align}
&\hspace{-2mm}\mathbb{E}[R_T] = \mathbb{E}_\mathcal{M}[R_T]\mathbb{P}[\mathcal{M}] + \mathbb{E}_{\mathcal{M}^c}[R_T]\mathbb{P}[\mathcal{M}^c]\\
\label{TrivialLinearRegretBound}
&\hspace{8mm}\le c_2 \sigma \sqrt{T} \ln T + \mathcal{O}\big(T \sqrt{\log(1/\delta)} \delta^2\big)\\
\label{DeltaTRelationEq}
&\hspace{8mm} = \mathcal{O}\big(\sigma \sqrt{T} \log T\big),
\end{align}
where \eqref{TrivialLinearRegretBound} is established by showing that $\mathbb{E}[R_T \mid \mathcal{A}] = \mathcal{O}\big( T \sqrt{\log \frac{1}{\mathbb{P}[\mathcal{A}]}} \big)$ for any event $\mathcal{A}$ (see Section~\ref{sec:ProofTrivLinear2} for details) and applying $\mathbb{P}[\mathcal{M}^c] \le \delta^2$ (see Lemma \ref{EventMLemma}), 
and \eqref{DeltaTRelationEq} follows from the fact that $\delta = T^{-1}$ and the assumption $\sigma^2 \ge \frac{c_{\sigma}}{T^{1-\zeta}}$.  This yields the first part of Theorem \ref{CumulativeRegretThm}.}

In addition, by the choice of $x^{(T)}$ in the last line in Algorithm \ref{alg_ub}, the upper bound on the expected simple regret trivially follows:
\begin{align}
    \mathbb{E}[\rT] \le \frac{\mathbb{E}[R_T]}{T} = \mathcal{O}\big(\sigma T^{-\frac{1}{2}} \log T \big), \label{cumul_to_simple}
\end{align}
which yields the second part of \thmref{CumulativeRegretThm}.

\vspace*{-1ex}
\section{Lower Bounds}\label{sec:conv}

In this section, we establish algorithm-independent lower bounds on the regret.  The idea of the proof is to reduce the optimization problem into a binary hypothesis testing problem, while confining attention to ``typical'' realizations of a Brownian motion by conditioning on realizations satisfying suitable high-probability properties (e.g., a proxy-H\"older type condition).  Once the reduction to hypothesis testing is done, a lower bound is deduced via Fano's inequality.  As hinted earlier, this proof strategy builds on that of \cite{scarlett2018tight} (and in turn \cite{raginsky2011information}), but with very different details due to the consideration of BM instead of smooth and stationary functions.

We begin by formally stating our lower bounds.

\newcommand{\tilcp}{\tilde{c}'}
\begin{theorem} \label{LowerBoundCumulativeNoisyTheorem}
For the problem of BM optimization with a noise variance $\sigma^2 > 0$ satisfying $\sigma^2 \le \tilcp T^{1-\zeta'}$ for some positive constants $(\tilcp,\zeta')$,\footnote{This is a very mild assumption, since we are primarily interested in the case that $\sigma$ is constant with respect to $T$.} 
any algorithm must have
\begin{align}
\mathbb{E}[\rT] &= \Omega\big( \sigma (T \log T)^{-\frac{1}{2}} \big), \quad\mbox{and}\quad \\ \mathbb{E}[R_T] &= \Omega\big( \sigma (T / \log T)^{\frac{1}{2}} \big).
\end{align}
\end{theorem}

The proof is given in the remainder of the section, with several details deferred to Section \ref{app:FanoProof}.

\subsection{Reduction to Binary Hypothesis Testing}


We fix $\Delta>0$, and view the BM $W$ on $D=[0,1]$ as being generated by the following procedure:
\begin{enumerate}
	\item Generate a BM $\tilde{W}$ on the larger domain $[-\Delta, 1 + \Delta]$, with $\tilde{W}_{-\Delta} = 0$ (i.e., $\tilde{W}$ is shifted left by $\Delta$ compared to a standard BM).
	\item Randomly draw $V \in \{+,-\}$ with probability $\frac{1}{2}$ each, and perform one of the following steps to generate $\tilde{W}^\prime$: 
	\begin{itemize}
		\item If $V$ is `$+$', then shift $\tilde{W}$ left along the $x$-axis by $\Delta$, and add a constant term of $-\tilde{W}_\Delta$.
		\item If $V$ is `$-$', then shift $\tilde{W}$ right along the $x$-axis by~$\Delta$.
	\end{itemize}
    The two corresponding functions are written as
	\begin{align}
		\label{DefW+}
		W_x^+ &= \tilde{W}_{x+\Delta} - \tilde{W}_\Delta, \\*
		\label{DefW-}
		W_x^- &= \tilde{W}_{x-\Delta},
	\end{align}
    and we observe that $W_0^+ = W_0^- = 0$, and by the Markov property, both $W_0^+$ and $W_0^-$ are standard BM processes when restricted to the domain $[0,1]$. We allow $\Delta$ to vary with $T$, and will in fact eventually set $\Delta = \mathcal{O}(1/T)$.
	\item Let $W_x = \tilde{W}'_x$ for $x \in [0,1]$.  Since $W_x^+$ and $W_x^-$ are both standard BM, we have that $W_x$ is a standard BM conditioned on either value of $V$, and thus, it is also a standard BM unconditionally, as desired.
\end{enumerate}

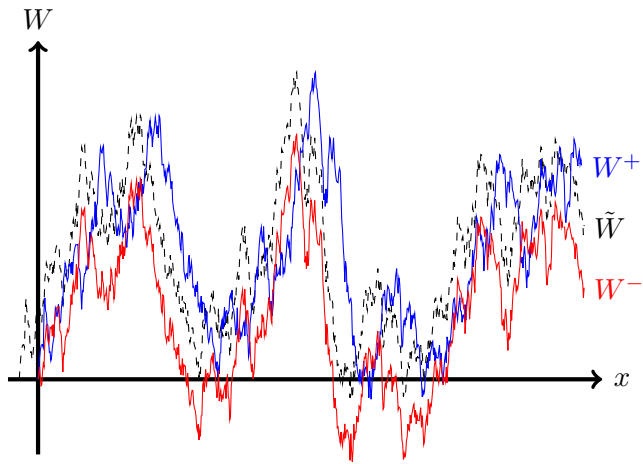
\begin{figure}
\centering
\begin{tikzpicture}
\draw[->,ultra thick] (-0.4,0)--(7.5,0) node[right]{$x$};
\draw[->,ultra thick] (0,-1.0)--(0,4.5) node[above]{$W$};
\Emmetttt{-0.125,0}{0.01}{0.15}{black,dashed}{}
\Emmettt{0.125,0.8325}{0.01}{0.15}{black,dashed}{}
\Emmettttt{7.125,2.865}{0.01}{0.15}{black,dashed}{$\tilde{W}$}
\Emmetttt{0,0}{0.01}{0.15}{blue}{}
\Emmettt{0.25,0.8325}{0.01}{0.15}{blue}{$W^-$}
\Emmettt{0,0}{0.01}{0.15}{red}{}
\Emmettttt{7.00,2.0325}{0.01}{0.15}{red}{$W^+$}
\end{tikzpicture}
\caption{Illustration of $\tilde{W}, W^+$ and $W^-$}
\label{fig:shifted_BM}
\end{figure}

We consider a ``genie-aided'' argument in which $\tilde{W}$ is revealed to the algorithm but the direction of the shift (i.e., the value of $V$) is unknown. Clearly this additional information provided by the genie can only help   the algorithm, so any lower bound still remains valid for the original setting. Stated differently, the algorithm knows that $W$ is either $W^+$ or $W^-$.
This argument allows us to reduce the BO problem to a binary hypothesis test with adaptive sampling. The hypothesis, indexed by $v \in \{+, -\}$, is that the underlying function is $W^v$.

We define the maximizer of $\tilde{W}$ as
    \begin{align} \xM = \argmax_{x \in [-\Delta, 1+\Delta]} \tilde{W}_x \end{align}
(which is almost surely unique), and let
\begin{equation}
    \label{x+*x-*}
	\xM^+ = \xM - \Delta, \quad\mbox{and}\quad \xM^- = \xM + \Delta.
\end{equation}
Then, the maxima  of the processes $W_x^-$ and $W_x^+$  are respectively defined as
\begin{align}
	M^- &:=  W^-_{\xM^-} = \tilde{W}_{\xM} = M, \\
	M^+ &:= W^+_{\xM^+} = M - \tilde{W}_\Delta, \label{M+}
\end{align}
and we define the following functions that, up to a caveat discussed below, represent the (simple and cumulative) regret with respect to $W^+$ and $W^-$:
\begin{alignat}{3}
	r^+(x)& = M^+ - W^+_x,  \qquad &r^-(x) &= M^- - W^-_x, \\
	R_T^+ &= \sum_{t=1}^T r^+(x_t), \qquad  & R_T^-& = \sum_{t=1}^T r^-(x_t).
\end{alignat}
It is important to note that $\xM^+$ and $\xM^-$ in \eqref{x+*x-*} could, in principle, lie outside the domain $[0,1]$ (namely, when $x_M < \Delta$ or $x_M > 1-\Delta$), in which case it may hold that the simple regret $r_T$ satisfies $r_T < r_T^v$ for some $v \in \{+,-\}$.  However, in our analysis, we will condition on a high-probability event (see Definition \ref{def:setT}) that ensures $\xM^+,\xM^-\in[0,1]$, and conditioned on this event we have $r_T = r_T^V$.  Similar observations apply for the cumulative regret.

\vspace*{-1ex}
\subsection{Auxiliary Lemmas} \label{sec:aux_lemmas}

We first state some useful properties of a BM. The {\em Brownian meander}~\cite{durrett1977weak} plays an important role in our analysis, as it characterizes the distribution of function values to the left and right of the maximum of $W$ (see \lemref{BrownianMeanderIdenticalDistributionLemma} in Appendix~\ref{app:results}). Formally, given a standard BM $(W_x)_{x \in [0,1]}$, we define a Brownian meander as the process  $(W_x \mid \min_{x' \in [0,1]} W_{x'} \ge 0)$ for $x \in [0,1]$.  Note that although the event $\{\min_{x' \in [0,1]} W_{x'} \ge 0\}$ has probability zero, this conditional distribution is known to remain well-defined \cite{durrett1977weak}. 
    

The following lemmas characterize the distribution of the running maximum or minimum of a Brownian meander; the proofs are given in Sections \ref{ProofRunMax} and \ref{ProofRunMin}.

\begin{lemma}\label{ConditionalRunningMaxDistributionComplementVariableLengthLemma}
For a standard BM $W$, for any $0 < s < t$ and $ 0 \le x < \frac{\sqrt{s}}{2}$, it holds that
	\begin{align}
	&\mathbb{P}\left[\max_{0 \le z \le s} W_z \ge x \,\Big|\,  \min_{0 < z \le t} W_z > 0, W_0 = 0 \right] \nonumber\\
	&\quad\ge\left\{\begin{array}{cc}
	1 - \frac{1}{2} \left( \frac{x}{\sqrt{s}}\right)^2 &\mbox{ \em if } t > 2s\\
	1 -  \frac{x\sqrt{2}}{\sqrt{s}} & \mbox{ \em if } t \le 2s.
	\end{array}\right.
	\end{align}
\end{lemma}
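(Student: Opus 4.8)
The plan is to reinterpret the conditioning in terms of the \emph{Brownian meander} of length $t$: write $(m_z)_{z\in[0,t]}$ for the law of $(W_z)_{z\in[0,t]}$ given $\{W_0=0,\ \min_{0<z\le t}W_z>0\}$, understood as the weak limit, as $\varepsilon\downarrow 0$, of the law of $W$ conditioned on $\{\inf_{0\le z\le t}W_z>-\varepsilon\}$ (which is nondegenerate since $W_0=0>-\varepsilon$). Then the quantity to be bounded is $\mathbb{P}[\max_{0\le z\le s}m_z\ge x]$. The key simplification is the containment $\{m_s\ge x\}\subseteq\{\max_{0\le z\le s}m_z\ge x\}$, so it suffices to lower bound $\mathbb{P}[m_s\ge x]$; that is, I only need the one-dimensional marginal of the meander at the interior time $s$, and never the joint law of the running maximum with the meander conditioning.

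So the first step is to establish the density of $m_s$. By the reflection principle, $W$ started at $0$ and killed at $-\varepsilon$ has density $\tfrac{1}{\sqrt{2\pi s}}\big(e^{-w^2/(2s)}-e^{-(w+2\varepsilon)^2/(2s)}\big)$ at time $s$ on $\{w>0\}$; a BM started at $w>0$ stays positive for an additional time $u$ with probability $2\Phi(w/\sqrt u)-1$ (with $\Phi$ the standard normal CDF); and $\mathbb{P}[\inf_{0\le z\le t}W_z>-\varepsilon]=2\Phi(\varepsilon/\sqrt t)-1$. Multiplying the first two factors (with $u=t-s$), dividing by the third, and letting $\varepsilon\downarrow 0$ (the $\varepsilon$ prefactors cancel to leading order) gives
\begin{equation}\label{eq:meanderdensity}
\mathbb{P}[m_s\in dw]=\sqrt{\tfrac{t}{s}}\,\frac{w}{s}\,e^{-w^2/(2s)}\Big(2\Phi\big(\tfrac{w}{\sqrt{t-s}}\big)-1\Big)\,dw,\qquad w>0,
\end{equation}
and interchanging the order of integration confirms that \eqref{eq:meanderdensity} integrates to $1$.

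It then remains to upper bound $\mathbb{P}[m_s<x]$, i.e.\ the integral of the density in~\eqref{eq:meanderdensity} over $w\in(0,x)$, for which I would use $e^{-w^2/(2s)}\le 1$ together with one of two elementary bounds on the Gaussian factor. If $t>2s$, then $t/(t-s)<2$, and the bound $2\Phi(y)-1\le\sqrt{2/\pi}\,y$ (immediate from differentiating) yields $\mathbb{P}[m_s<x]\le\sqrt{\tfrac{t}{s}}\cdot\sqrt{\tfrac{2}{\pi(t-s)}}\cdot\tfrac{x^3}{3s}\le\tfrac{2}{3\sqrt{\pi}}\cdot\tfrac{x}{\sqrt s}\cdot\tfrac{x^2}{s}$; since $x<\sqrt s/2$ the leading coefficient $\tfrac{2}{3\sqrt\pi}\cdot\tfrac{x}{\sqrt s}$ is at most $\tfrac{1}{3\sqrt\pi}<\tfrac12$, so $\mathbb{P}[m_s<x]<\tfrac12\,(x/\sqrt s)^2$. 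If $t\le 2s$, then the trivial bound $2\Phi(y)-1\le 1$ and $1-e^{-v}\le v$ give $\mathbb{P}[m_s<x]\le\sqrt{\tfrac{t}{s}}\big(1-e^{-x^2/(2s)}\big)\le\sqrt{2}\cdot\tfrac{x^2}{2s}\le\sqrt{2}\,\tfrac{x}{\sqrt s}$, the last step using $x<\sqrt s/2$. Subtracting from $1$ and invoking $\mathbb{P}[\max_{0\le z\le s}m_z\ge x]\ge\mathbb{P}[m_s\ge x]$ produces the two claimed bounds.

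I expect the main obstacle to be the rigorous handling of the degenerate conditioning and the derivation of \eqref{eq:meanderdensity}: one must justify the $\varepsilon\downarrow 0$ limit (either via the standard weak-convergence construction of the Brownian meander together with continuity of $w\mapsto 2\Phi(w/\sqrt{t-s})-1$, or directly by dominated convergence on the relevant killed finite-dimensional densities) and verify the normalization. Everything after \eqref{eq:meanderdensity} is a short computation, and it is worth emphasizing that the containment $\{m_s\ge x\}\subseteq\{\max_{0\le z\le s}m_z\ge x\}$ lets us sidestep the running maximum of the meander entirely.
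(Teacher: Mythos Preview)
Your proof is correct and genuinely simpler than the paper's. The paper invokes a formula of Iafrate for the full distribution of $\max_{0\le z\le s} W_z$ under the meander conditioning, which involves an infinite series $H(y)=\sum_{k=-\infty}^\infty \frac{y-2kx}{s\sqrt{2\pi s}}\exp\big(-\frac{(y-2kx)^2}{2s}\big)$; it then bounds this series term-by-term (using monotonicity of $z e^{-z^2/2}$ and a Lipschitz estimate) before integrating. Your key observation $\{m_s\ge x\}\subseteq\{\max_{0\le z\le s}m_z\ge x\}$ discards the running maximum entirely and reduces to the one-dimensional marginal, which is exactly the $k=0$ term of $H(y)$ (times the same Gaussian factor). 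In effect, the paper proves $H(y)\le \frac{y}{s\sqrt{2\pi s}}e^{-y^2/(2s)}+\frac{y}{xs\sqrt{2\pi}}$ and carries the extra $\frac{y}{xs\sqrt{2\pi}}$ correction through the two cases, whereas you never incur it. The trade-off is that the paper's route gives the actual distribution of the running maximum (so could in principle be sharpened), while yours is shorter, uses only the reflection principle, and already yields the stated constants; for the purpose of this lemma your approach is strictly more economical. Your caveat about rigorously justifying the $\varepsilon\downarrow 0$ limit for \eqref{eq:meanderdensity} is apt but standard: the density you wrote integrates to $1$ (an integration by parts gives $\int_0^\infty u e^{-u^2/2}(2\Phi(\alpha u)-1)\,du=\alpha/\sqrt{1+\alpha^2}$ with $\alpha=\sqrt{s/(t-s)}$), and it coincides with the known meander transition density, so a citation suffices.
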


\begin{lemma} \label{RunningMinimumBrownianMeanderLemma}
For a BM $W$ with initial value $W_0 = u$, we have for any $0 < \varepsilon < u$ that
\begin{align}
\mathbb{P}\left[ \min_{0\le z \le t} W_z > \varepsilon \,\Big|\,  \min_{0\le z \le t} W_z > 0, W_0 = u \right] \ge \frac{u - \varepsilon}{u} .
\end{align}
\end{lemma}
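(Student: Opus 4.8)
The plan is to convert the conditional probability into a ratio of two unconditional events and then invoke a concavity property of the Gaussian tail. Write $m \triangleq \min_{0 \le z \le t} W_z$ for the running minimum on $[0,t]$. Since $\{m > \varepsilon\} \subseteq \{m > 0\}$ and both events have strictly positive probability (the horizon $t$ is finite, so no Brownian-meander limiting argument is needed here), Bayes' rule gives
\[
\mathbb{P}\big[\, m > \varepsilon \,\big|\, m > 0,\, W_0 = u \,\big] \;=\; \frac{\mathbb{P}\big[\, m > \varepsilon \,\big|\, W_0 = u \,\big]}{\mathbb{P}\big[\, m > 0 \,\big|\, W_0 = u \,\big]},
\]
so it suffices to compute $\mathbb{P}[\, m > a \mid W_0 = u \,]$ for $a \in \{0,\varepsilon\}$ in closed form and compare the two.

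For the closed form I would write $W_z = u + B_z$ with $(B_z)_{z\ge 0}$ a standard BM started at $0$. For any $a < u$, setting $c = u - a > 0$,
\[
\mathbb{P}\big[\, m > a \,\big|\, W_0 = u \,\big] \;=\; \mathbb{P}\Big[ \min_{0 \le z \le t} B_z > -c \Big] \;=\; 1 - 2\,\mathbb{P}[B_t \ge c] \;=\; \psi\!\left( \frac{u-a}{\sqrt{t}} \right),
\]
where the middle equality is the reflection principle \citep{Karatzas88} (using $\mathbb{P}[B_t = c] = 0$), and $\psi(x) \triangleq 2\Phi(x) - 1$ is the CDF of $|\mathcal{N}(0,1)|$, with $\Phi$ the standard normal CDF. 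Substituting $a = \varepsilon$ and $a = 0$, and writing $x_1 = (u-\varepsilon)/\sqrt{t} < x_2 = u/\sqrt{t}$ (both positive since $0 < \varepsilon < u$), the target inequality becomes $\psi(x_1)/\psi(x_2) \ge x_1/x_2$, i.e.\ (cross-multiplying, all quantities positive) $\psi(x_1)/x_1 \ge \psi(x_2)/x_2$. So the whole claim reduces to showing that $x \mapsto \psi(x)/x$ is nonincreasing on $(0,\infty)$.

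To close the argument I would use the standard fact that a concave function vanishing at the origin has nonincreasing ``chord slope from $0$''. Concretely, $\psi(0) = 0$, $\psi'(x) = 2\phi(x) > 0$, and $\psi''(x) = -2x\phi(x) < 0$ for $x > 0$ (with $\phi$ the standard normal density), so $h(x) \triangleq \psi(x) - x\psi'(x)$ satisfies $h(0) = 0$ and $h'(x) = -x\psi''(x) > 0$, whence $h > 0$ on $(0,\infty)$; since $(\psi(x)/x)' = -h(x)/x^2 < 0$, the ratio is decreasing there, as needed.

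I do not anticipate a substantial obstacle; the argument is short. The one place requiring care is the bookkeeping around the reflection principle and the rescaling by $\sqrt{t}$ — getting the direction of the monotonicity of $\psi(x)/x$ right (so that the inequality comes out as stated, not reversed) — together with checking that the hypothesis $0 < \varepsilon < u$ keeps every argument of $\psi$ inside $(0,\infty)$, where the concavity fact applies.
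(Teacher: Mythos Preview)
Your proof is correct and takes essentially the same approach as the paper: both apply Bayes' rule to reduce to the ratio $\psi\big(\tfrac{u-\varepsilon}{\sqrt t}\big)\big/\psi\big(\tfrac{u}{\sqrt t}\big)$ with $\psi(x)=2\Phi(x)-1$, and then bound this below by $(u-\varepsilon)/u$. The only cosmetic differences are that the paper arrives at the same closed form by integrating the joint density of $(W_s,\min W)$ from its Lemma~\ref{BrownianMeanderDistributionFunctionLemma} rather than quoting the reflection principle directly, and it justifies the final inequality by noting that $e^{-z^2/2}$ is monotonically decreasing on $(0,\infty)$ (so the average of the integrand over $[0,x]$ is nonincreasing in $x$) rather than via your concavity-of-$\psi$ argument --- these are equivalent formulations of the same fact.
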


We define a high-probability event to restrict the position of the maximum to the interval $(2\Delta,1-2\Delta)$, and to restrict the two regret functions $r^+$ and $r^-$ to be simultaneously lower than a certain function of $\Delta$.  Note that the constant $\delta > 0$ in the following is not related to that appearing in Section \ref{sec:ach}.

\begin{definition} \label{def:setT}
Fix $\delta > 0$, and let $\mathcal{T}\triangleq \mathcal{T}_1 \cap \mathcal{T}_2 \cap \mathcal{T}_3$, where 
\begin{align}
\mathcal{T}_1 & \triangleq \left\{ 2\Delta < \xM < 1-2\Delta \right\} ,\\
\mathcal{T}_2 &\triangleq \big\{ \forall x \in D, \max\big( r^+(x),r^-(x)\big) \ge c_3\delta^2\sqrt{\Delta} \big\},\\
\mathcal{T}_3 &\triangleq \left\{ \forall x \in D, \left\lvert r^+(x) - r^-(x) \right\rvert \le c_4\sqrt{\Delta \ln (1/\Delta)} \right\}.
\end{align}
\end{definition}

As the following lemma is crucial, we outline the proof here, and provide the full details in Section \ref{EventT_Proof}.

\begin{lemma} \label{EventTLemma}
For $0 < \delta < 1$, and any $0 < \eta < \frac{1}{2}$ and sufficiently small $\Delta$, we have
\begin{align}
	\mathbb{P}[\mathcal{T}] \ge 1 - 3\Delta^\eta - \delta - \Delta,
\end{align}
where $\mathcal{T}$ implicitly depends on $\delta$.
\end{lemma}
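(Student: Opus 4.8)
The plan is to bound the failure probabilities of $\mathcal{T}_1$, $\mathcal{T}_2$ and $\mathcal{T}_3$ separately and combine them by a union bound. The first step is to re-express everything in terms of the ``big'' Brownian motion $\tilde{W}$: from $W^+_x = \tilde{W}_{x+\Delta} - \tilde{W}_\Delta$, $M^+ = M - \tilde{W}_\Delta$, $W^-_x = \tilde{W}_{x-\Delta}$ and $M^- = M$, one obtains the clean identities $r^+(x) = M - \tilde{W}_{x+\Delta}$ and $r^-(x) = M - \tilde{W}_{x-\Delta}$, hence $\max\big(r^+(x),r^-(x)\big) = M - \min\big(\tilde{W}_{x-\Delta},\tilde{W}_{x+\Delta}\big)$ and $\lvert r^+(x)-r^-(x)\rvert = \lvert \tilde{W}_{x+\Delta}-\tilde{W}_{x-\Delta}\rvert$. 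With these in hand each $\mathcal{T}_i$ becomes a concrete statement about $\tilde{W}$.

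For $\mathcal{T}_1$: the maximiser $x_M$ of $\tilde{W}$ over $[-\Delta,1+\Delta]$ falls within distance $3\Delta$ of an endpoint only if the oscillation of $\tilde{W}$ over a length-$\Theta(\Delta)$ window near that endpoint exceeds the maximum of an independent Brownian motion run over the remaining length-$\Theta(1)$ portion; the former is $O(\sqrt{\Delta})$ and the latter is $\Theta(1)$, so this has probability $O(\sqrt{\Delta})$, which is $\le \Delta^\eta$ for $\eta<\tfrac12$ and $\Delta$ small (one may alternatively invoke the arcsine law for the argmax). For $\mathcal{T}_3$: we need $\sup_{x\in[0,1]}\lvert \tilde{W}_{x+\Delta}-\tilde{W}_{x-\Delta}\rvert \le c_4\sqrt{\Delta\ln(1/\Delta)}$, which is a modulus-of-continuity estimate; discretising $[0,1]$ into $O(1/\Delta)$ points spaced $\Delta$ apart, each increment is $\mathcal{N}(0,2\Delta)$ and (by \lemref{NormalUpperBoundLemma}) exceeds half the target with probability $\le \Delta^{\Theta(c_4^2)}$, so a union bound over the grid plus a routine chaining argument to interpolate between grid points gives $\mathbb{P}[\mathcal{T}_3^c]\le\Delta$ once $c_4$ is a large enough absolute constant and $\Delta$ is small.

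The crux is $\mathcal{T}_2$, which I would analyse conditionally on $\mathcal{T}_1$. Set $\varepsilon := c_3\delta^2\sqrt{\Delta}$. I claim it suffices to prove the two one-sided statements $\max_{y\le x_M-\Delta}\tilde{W}_y \le M-\varepsilon$ and $\max_{y\ge x_M+\Delta}\tilde{W}_y \le M-\varepsilon$, because for any $x\in[0,1]$ at least one of $x-\Delta,\,x+\Delta$ is separated from $x_M$ by at least $\Delta$ on the appropriate side, and that point is then $\varepsilon$-separated from $M$, forcing $r^-(x)\ge\varepsilon$ or $r^+(x)\ge\varepsilon$. By the Brownian meander description of $\tilde{W}$ on either side of its maximum (\lemref{BrownianMeanderIdenticalDistributionLemma}), the process $m_s := M-\tilde{W}_{x_M-s}$ is, conditionally on $x_M$, a Brownian meander on $[0,x_M+\Delta]$, a length that exceeds $3\Delta$ under $\mathcal{T}_1$; the first one-sided event is exactly $\{\min_{\Delta\le s\le x_M+\Delta} m_s \ge \varepsilon\}$. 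To control its complement I would introduce a safety level $u_0 := (\varepsilon\Delta)^{1/3}\in(0,\tfrac12\sqrt{\Delta})$ and use the inclusion $\{\min_{s\ge\Delta}m_s<\varepsilon\}\subseteq\{\max_{0\le s\le\Delta}m_s<u_0\}\cup\{\text{a meander started at height }\ge u_0\text{ later dips below }\varepsilon\}$: the first event has probability $\le\tfrac12(u_0/\sqrt{\Delta})^2$ by \lemref{ConditionalRunningMaxDistributionComplementVariableLengthLemma} (in the regime $t>2s$, which holds because $\mathcal{T}_1$ makes the meander length $>2\Delta$), while the strong Markov property together with \lemref{RunningMinimumBrownianMeanderLemma} bounds the second by $\varepsilon/u_0$. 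The choice of $u_0$ balances the two at $O(\varepsilon^{2/3}\Delta^{-1/3})=O(\delta^{4/3})$; repeating the argument for the right side and taking $c_3$ small enough yields $\mathbb{P}[\mathcal{T}_2^c\mid\mathcal{T}_1]\le\delta$.

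Putting the pieces together by a union bound gives $\mathbb{P}[\mathcal{T}^c]\le\mathbb{P}[\mathcal{T}_1^c]+\mathbb{P}[\mathcal{T}_2^c\cap\mathcal{T}_1]+\mathbb{P}[\mathcal{T}_3^c]$; tracking the constants---the restriction of $x_M$ away from each of the two endpoints and the two meander-length conditions each contribute an $O(\Delta^\eta)$-type term---produces the claimed $\mathbb{P}[\mathcal{T}]\ge 1-3\Delta^\eta-\delta-\Delta$. The main obstacle is the $\mathcal{T}_2$ step: one must convert a continuum ``for all $x\in D$'' statement into finitely many one-sided running-minimum statements about Brownian meanders, and a meander starts at height $0$ and can in principle return arbitrarily close to $0$, so a single application of \lemref{RunningMinimumBrownianMeanderLemma} is not enough---this is precisely why the two-stage estimate (first push the meander up to level $u_0$ within distance $\Delta$ using the running-maximum bound, then keep it above $\varepsilon$ thereafter) and the careful choice of $u_0$, and hence of the constant $c_3$, are needed.
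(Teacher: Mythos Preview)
Your proposal is correct and follows essentially the same three-part strategy as the paper: a union bound over $\mathcal{T}_1,\mathcal{T}_2,\mathcal{T}_3$, with the crux being the same two-stage meander argument for $\mathcal{T}_2$ (first push the meander to an intermediate height within distance $\Delta$ via \lemref{ConditionalRunningMaxDistributionComplementVariableLengthLemma}, then keep it above $\varepsilon$ thereafter via the strong Markov property and \lemref{RunningMinimumBrownianMeanderLemma}). The only noteworthy difference is your choice of intermediate level $u_0=(\varepsilon\Delta)^{1/3}$ obtained by balancing the two error terms, whereas the paper sets $\delta'=0.1\delta$, takes $u_0=\delta'\sqrt{2\Delta}$ and $\varepsilon=(\delta')^2\sqrt{2\Delta}=\delta' u_0$, so that each stage fails with probability $O(\delta')$ directly---both choices work, and the paper's makes the arithmetic slightly cleaner.
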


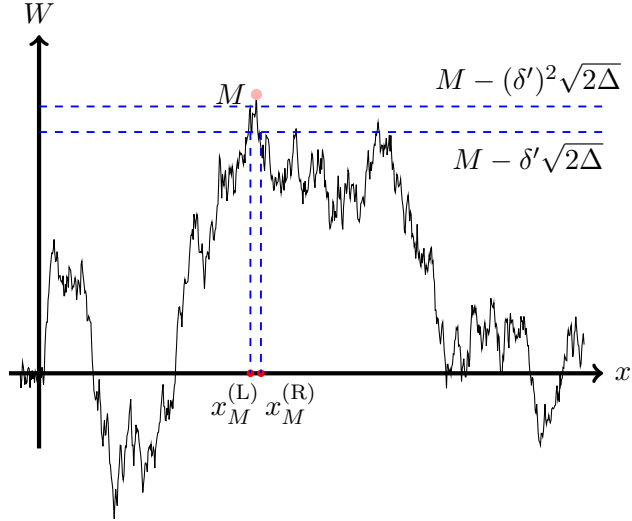
\begin{figure}
\centering
\begin{tikzpicture}
\draw[->,ultra thick] (-0.4,0)--(7.5,0) node[right]{$x$};
\draw[->,ultra thick] (0,-1.0)--(0,4.5) node[above]{$W$};
\BrownianSimpleIllustration{-0.25,0}{0.01}{-0.15}{black}{}
\tkzDefPoint(2.89,3.71){M}
\tkzDefPoint(0,3.55){MSlightDownLeft}
\tkzDefPoint(7.5,3.55){MSlightDownRight}
\tkzDefPoint(0,3.21){MDownLeft}
\tkzDefPoint(7.5,3.21){MDownRight}
\tkzDefPoint(2.955,3.21){MDownSlightRight}
\tkzDefPoint(2.955,0){MDownSlightRightBottom}
\tkzDefPoint(2.815,3.21){MDownSlightLeft}
\tkzDefPoint(2.815,0){MDownSlightLeftBottom}
\tkzLabelPoint[left](M){$M$}
\tkzLabelPoint[below, ,xshift=4mm](MDownSlightRightBottom){$\xM^{({\rm R})}$}
\tkzLabelPoint[below, ,xshift=-2mm](MDownSlightLeftBottom){$\xM^{(\mathrm{L})}$}
\tkzLabelPoint[above,xshift=-10mm](MSlightDownRight){$M - (\delta^\prime)^2 \sqrt{2\Delta}$}
\tkzLabelPoint[below,xshift=-10mm](MDownRight){$M - \delta^\prime \sqrt{2\Delta}$}
\node at (M)[circle,fill,inner sep=1.5pt, color=red!30]{};
\node at (MDownSlightRightBottom)[circle,fill,inner sep=1.0pt, color=red]{};
\node at (MDownSlightLeftBottom)[circle,fill,inner sep=1.0pt, color=red]{};
\draw[line width=0.25 mm, color=blue, dashed] (MDownLeft) -- (MDownRight);
\draw[line width=0.25 mm, color=blue, dashed] (MSlightDownLeft) -- (MSlightDownRight);
\draw[line width=0.25 mm, color=blue, dashed] (MDownSlightRight) -- (MDownSlightRightBottom);
\draw[line width=0.25 mm, color=blue, dashed] (MDownSlightLeft) -- (MDownSlightLeftBottom);
\end{tikzpicture}
\caption{Running maxima of the Brownian motion outside the neighborhood near its maximum}
\label{fig:meander}
\end{figure}

\begin{proof}[Proof Sketch of Lemma \ref{EventTLemma}]
    
The probability of $\mathcal{T}_1$ is characterized via a direct analysis of the probability of the maximum lying in the intervals $[0,2\Delta]$ and $[1-2\Delta,1]$.  In the limit of small $\Delta$, we show that the latter probability decays to zero as roughly $\tilde{O}(\Delta)$ (up to logarithmic factors), and hence, when $\Delta$ is sufficiently small, $\mathcal{T}_1$ holds with probability at least $1 - \Delta^{\eta}$.

For events $\mathcal{T}_2$ and $\mathcal{T}_3$, we consider the right-most time $\xM^{(\mathrm{L})}$ left of the maximizer $\xM$ to hit the value $M - \delta^\prime \sqrt{2\Delta}$, where $\delta^\prime$ is a constant multiple of $\delta$. With probability at least $1 - (\delta^\prime)^2$, $\xM^{(\mathrm{L})}$ falls within the $\Delta$-neighborhood of $\xM$ by \lemref{ConditionalRunningMaxDistributionComplementVariableLengthLemma}, a result concerning the running maximum of a Brownian meander.
%
In addition, we can lower bound the probability of the running maximum from $0$ to $\xM^{(\mathrm{L})}$ exceeding $M - (\delta^\prime)^2 \sqrt{2\Delta}$ by $1 - \delta^\prime$ by \lemref{RunningMinimumBrownianMeanderLemma}, a result concerning the running minimum of a Brownian meander starting from a non-zero value to reach certain lower value.

As illustrated in  Figure \ref{fig:meander}, we define two more ``mirror events'' of these two on the right of $\xM$. When $\mathcal{T}_1$ and these four events simultaneously hold, we have that all values larger than $M - (\delta^\prime)^2 \sqrt{2\Delta}$ must be in the $\Delta$-neighborhood of $\xM$ for either of $W^+$ or $W^-$. Furthermore, with the horizontal shift of $2\Delta$, the $\Delta$-neighborhoods near the maximum of the two functions do not overlap, and the shift would not result in a maximum outside $[0,1]$. By taking a union bound on all of these events, $\mathcal{T}_1 \cap \mathcal{T}_2$ holds with probability at least $1 - 3\Delta^\eta - \delta$ when we let $c_3\delta^2\sqrt{\Delta} = (\delta^\prime)^2 \sqrt{2\Delta}$, where $c_3 = 0.01\sqrt{2}$.

The analysis of event $\mathcal{T}_3$ uses similar ideas to the analysis of event $\mathcal{M}$ in Lemma \ref{EventMLemma}.  Recall that $r^+$ and $r^-$ are shifted versions of each other, and the shift along the horizontal axis is $2\Delta$.  A standard proxy-H\"older continuity argument (e.g., see~\cite{grill2018optimistic}) can be used to establish that all points separated by $2\Delta$ have corresponding $\tilde{W}_x$ values differing by $\mathcal{O}\big( \sqrt{\Delta \ln(1/\Delta)} \big)$ (uniformly on the domain $D$) with high probability; a more quantitative version of this argument yields $\mathbb{P}[\mathcal{T}_3] \ge 1-\Delta$.
\end{proof}

To establish lower bounds for the expected cumulative regret and simple regret, we make use of Fano's inequality \cite{scarlett2019fano}, which naturally introduces the mutual information between the hypothesis $V$ and the selected points and observations $(\mathbf{x},\mathbf{y})$, where $\mathbf{x}=(x_1,\ldots, x_T)$ and $\mathbf{y}=(y_1,\ldots , y_T)$ with $\mathbf{x}\in [0,1]^T$ and $\mathbf{y}\in \mathbb{R}^T$. In the following, we will also condition on $\{\tilde{W} = \tilde{w}\}$, where $\tilde{w}$ is a specific realization of $\tilde{W}$ satisfying the conditions in $\mathcal{T}$ (Definition~\ref{def:setT}). For brevity, this conditioning is indicated as a subscript $\tilde{w}$. For example, we write $I_{\tilde{w}}(V;\mathbf{x},\mathbf{y})$ as a shorthand for the conditional mutual information $I(V;\mathbf{x},\mathbf{y} | \tilde{W} = \tilde{w})$.  Recall that $V$ is assumed to be equiprobable on $\{+,-\}$.

\begin{lemma} \label{FanoBasedLemma}
	Under the preceding setup with $\tilde{w}$ satisfying the conditions in $\mathcal{T}$, we have
	\begin{align}
		\label{IntermediateInequality}
		\mathbb{E}_{\tilde{w}}[\rT ] \ge c_3 \delta^2 \sqrt{\Delta} H_2^{-1}\left( \ln 2 - I_{\tilde{w}}(V; \mathbf{x}, \mathbf{y} ) \right)
	\end{align}
	where $H_2^{-1}: [0, \log 2] \to [0, 1/2]$ is the functional inverse of the binary entropy function $H_2(\alpha) = \alpha \ln (1 / \alpha) + (1-\alpha) \ln(1/ (1-\alpha))$ in nats.
\end{lemma}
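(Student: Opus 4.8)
The plan is to convert the simple-regret bound into a bound on the error probability of a binary test and then invoke Fano's inequality. First I would introduce the estimator $\hat{V} \triangleq \argmin_{v \in \{+,-\}} r^v(x^{(T)})$ of the shift direction, where $x^{(T)}$ is the point returned by the algorithm and ties are broken arbitrarily. Working in the model conditioned on $\tilde{W} = \tilde{w}$ with $\tilde{w}$ satisfying $\mathcal{T}$: event $\mathcal{T}_1$ guarantees $\xM^+, \xM^- \in [0,1]$, so that $r_T$ equals the hypothesis-$V$ regret $r^V(x^{(T)})$ as noted below the reduction, and event $\mathcal{T}_2$ guarantees $\max\big(r^+(x),r^-(x)\big) \ge c_3\delta^2\sqrt{\Delta}$ for every $x \in D$. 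Hence, whenever $r_T < c_3\delta^2\sqrt{\Delta}$ we have $r^V(x^{(T)}) < c_3\delta^2\sqrt{\Delta} \le r^{-V}(x^{(T)})$, which forces $\hat{V} = V$; equivalently (contrapositive) $\{\hat{V} \ne V\} \subseteq \{ r_T \ge c_3\delta^2\sqrt{\Delta} \}$. Since $r_T \ge 0$, taking expectations in $r_T \ge c_3\delta^2\sqrt{\Delta}\,\mathbf{1}\{\hat{V} \ne V\}$ gives
\begin{align}
\mathbb{E}_{\tilde{w}}[r_T] \;\ge\; c_3\delta^2\sqrt{\Delta}\,\mathbb{P}_{\tilde{w}}[\hat{V} \ne V].
\end{align}

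Next I would lower-bound $\mathbb{P}_{\tilde{w}}[\hat{V} \ne V]$ via Fano's inequality with the uniform binary prior on $V$. Conditioned on $\tilde{W} = \tilde{w}$, the chain $V \to (\mathbf{x},\mathbf{y}) \to \hat{V}$ holds, because the adaptive sampling rule and the rule producing $x^{(T)}$ depend on $V$ only through the observations (up to independent algorithmic randomness), so the data-processing inequality yields $I_{\tilde{w}}(V;\hat{V}) \le I_{\tilde{w}}(V;\mathbf{x},\mathbf{y})$. For a binary hypothesis the $\log(|\mathcal{V}|-1)$ term in Fano's inequality vanishes, so
\begin{align}
H_2\big(\mathbb{P}_{\tilde{w}}[\hat{V}\ne V]\big) \;\ge\; H(V \mid \hat{V}) \;=\; \ln 2 - I_{\tilde{w}}(V;\hat{V}) \;\ge\; \ln 2 - I_{\tilde{w}}(V;\mathbf{x},\mathbf{y}),
\end{align}
with the right-hand side in $[0,\ln 2]$ since $I_{\tilde{w}}(V;\mathbf{x},\mathbf{y}) \le H(V \mid \tilde{W} = \tilde{w}) = \ln 2$ (recall $V$ is drawn independently of $\tilde{W}$). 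Because $H_2$ is symmetric about $1/2$ and strictly increasing on $[0,1/2]$ with inverse $H_2^{-1}:[0,\ln 2]\to[0,1/2]$, this gives $\mathbb{P}_{\tilde{w}}[\hat{V}\ne V] \ge H_2^{-1}\big(\ln 2 - I_{\tilde{w}}(V;\mathbf{x},\mathbf{y})\big)$, the case of error probability exceeding $1/2$ being immediate. Combining with the previous display yields \eqref{IntermediateInequality}.

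The conceptual work is essentially all upstream in \lemref{EventTLemma}: the uniform separation of the two regret functions ($\mathcal{T}_2$) and the localization of the maximizers inside $[0,1]$ ($\mathcal{T}_1$) are precisely what make $\hat{V}$ a useful estimator, and given those this lemma is bookkeeping. The only point needing care is the conditioning --- all entropies, mutual informations, and the Markov-chain structure are to be read in the model with $\tilde{W}=\tilde{w}$ fixed, where $r^+$ and $r^-$ are deterministic and hence $\hat{V}$ is a legitimate (possibly randomized) function of $(\mathbf{x},\mathbf{y})$; I do not expect a real obstacle beyond stating these independence and measurability facts cleanly.
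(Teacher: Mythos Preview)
Your proposal is correct and follows essentially the same argument as the paper: define $\hat{V}$ as the hypothesis with smaller regret at $x^{(T)}$, use $\mathcal{T}_1$ and $\mathcal{T}_2$ to deduce $\{\hat{V}\ne V\}\subseteq\{r_T\ge c_3\delta^2\sqrt{\Delta}\}$, and then apply the binary Fano inequality (with data processing) to the uniform prior on $V$. Your write-up is in fact a bit more explicit than the paper's about the Markov chain $V\to(\mathbf{x},\mathbf{y})\to\hat{V}$ and the handling of the $>1/2$ error-probability case, but the route is the same.
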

\begin{proof}
    The proof  mostly follows Raginsky and Rakhlin~\cite{raginsky2011information} and  Scarlett~\cite{scarlett2018tight} (and related earlier works on statistical estimation), and can be found in Section \ref{app:FanoProof}.
\end{proof}

\vspace*{-1ex}
\subsection{Completion of the Proof of \thmref{LowerBoundCumulativeNoisyTheorem}}

We upper bound mutual information $I_{\tilde{w}}(V; \mathbf{x}, \mathbf{y})$ conditioned on $\tilde{W} = \tilde{w}$ satisfying the conditions defining event~$\mathcal{T}$:
{\allowdisplaybreaks
\begin{align}
\label{Tensorization}
I_{\tilde{w}}(V; \mathbf{x}, \mathbf{y}) &\le \sum_{t=1}^T I_{\tilde{w}}(V; y_t | x_t) \\
\label{KLDivergence}
&\le \sum_{t=1}^T \max_{x \in [0,1]} \frac{\left( r^+(x) - r^-(x) \right)^2 }{2\sigma^2} \\
\label{RegretDeviation2}
&\le \frac{c_4^2 (\Delta \ln\frac{1}{\Delta}) T }{2\sigma^2},
\end{align}
where \eqref{Tensorization} follows from the tensorization property of mutual information \cite[Lemma 3]{scarlett2019fano}, \eqref{KLDivergence} follows from a standard calculation of relative entropy between Gaussian random variables (and replacing the average over $x_t$ by a maximum over $x \in [0,1]$), and \eqref{RegretDeviation2} follows from event $\mathcal{T}_3$ in Definition \ref{def:setT}.}

To ensure that $H_2^{-1}\left( \ln 2 - I_{\tilde{w}}(V; \mathbf{x}, \mathbf{y}) \right)$ is lower bounded by a positive constant, we choose $\Delta = c'_4 \frac{\sigma^2}{T \ln T}$ for some $c'_4 > 0$. The assumption $\sigma^2 \le \tilcp T^{1-\zeta'}$ in \thmref{LowerBoundCumulativeNoisyTheorem} ensures that $\ln\frac{1}{\Delta} = \Theta(\log T)$; hence, the $\ln\frac{1}{\Delta}$ and $\ln T$ terms cancel upon substitution into \eqref{RegretDeviation2}, and we obtain $I_{\tilde{w}}(V; \mathbf{x}, \mathbf{y}) \le \frac{1}{2}$ (say) for sufficiently small $c'_4$.
The resultant inequality of \eqref{IntermediateInequality} therefore gives
\begin{align}
	\label{UpperBoundConditionalExpectedSimpleRegretEq}
	&\mathbb{E}_{\tilde{w}}[\rT] \ge c_5 \sigma \delta^2 \sqrt{\frac{1}{T \ln T}}.
\end{align}
Upon averaging over all BM realizations, this implies
\begin{align}
	\label{BayesianFormulaEquation}
    \mathbb{E}[r_T]&= \mathbb{E}[r_T | \tilde{W}\in \Pi_T ] \Pr[ \tilde{W}\in \Pi_T ]  \nonumber\\*
    &\qquad+ \mathbb{E}[r_T | \tilde{W}\in \Pi_T^c ] \Pr[ \tilde{W}\in \Pi_T^c ] \\
    &\hspace{0mm}\ge \mathbb{E}[r_T | \tilde{W}\in \Pi_T ] \Pr[ \tilde{W}\in \Pi_T ] \\
	\label{TProbabilityLowerBound}
	&\hspace{0mm}\ge c_5 \sigma \delta^2 \sqrt{\frac{1}{T \ln T}} \mathbb{P}[\mathcal{T}] \\
	&\hspace{0mm}= \Omega\bigg( \sigma \sqrt{\frac{1}{T \log T}} \bigg), \label{Tfinal}
\end{align}
where in \eqref{BayesianFormulaEquation} we define $\Pi_{\mathcal{T}}$ to be the set of sample paths of the BM $\tilde{W}$ such that the induced $x_M, r^+(\cdot)$ and $r^-(\cdot)$  satisfy the events defining $\mathcal{T}$  in Definition~\ref{def:setT}, \eqref{TProbabilityLowerBound} follows from \eqref{UpperBoundConditionalExpectedSimpleRegretEq}, and~\eqref{Tfinal} follows from \lemref{EventTLemma} with $\delta$ chosen to a constant value (e.g., $\delta = 0.5$).  This yields the desired lower bound for the simple regret.

As shown in \eqref{cumul_to_simple}, for achievability results it is trivial to obtain $\mathbb{E}[\rT] \le \mathbb{E}[R_T] / T$.  The contrapositive statement is that for converse results, a universal lower bound of on $\mathbb{E}[\rT]$ implies the same universal lower bound for $\mathbb{E}[R_T] / T$, and it follows that $\mathbb{E}[R_T] = \Omega\big( \sigma \sqrt{ \frac{T}{ \log T} } \big)$,  as desired.

\vspace*{-1ex}
\section{Experiments}\label{sec:num}

 Theorems \ref{CumulativeRegretThm} and \ref{LowerBoundCumulativeNoisyTheorem} state that the expected cumulative regret achieved by \alggref{NoisyOOBAlgo} is at most $\mathcal{O}(\sigma T^{\frac{1}{2}} \log T)$ and at least $\Omega(\sigma T^{\frac{1}{2}} / \sqrt{\log T}  )$, so the growth rate is roughly $\sqrt{T}$. We corroborate these theoretical findings using numerical experiments. We implement and run  Algorithm~\ref{NoisyOOBAlgo}, 
 varying the time horizon $T$ from $10^5$ and $1.8\times 10^7$. The noise variance is set to $\sigma^2=0.25$. We generate $50$ independent realizations of a BM on $[0,1]$. For each realization, we run the algorithm $100$ times, each time corresponding to different realizations of the noise.   We calculate the mean and standard deviation of the cumulative regret (over all $5000$ runs) at each $T$, and plot the average $R_T / \sqrt{T}$ against $T$. These are shown in Figure~\ref{RTOverTAgainstT}; error bars indicate one standard deviation from the mean. 

\begin{figure}[t]
\centering
\includegraphics[width=0.95\columnwidth]{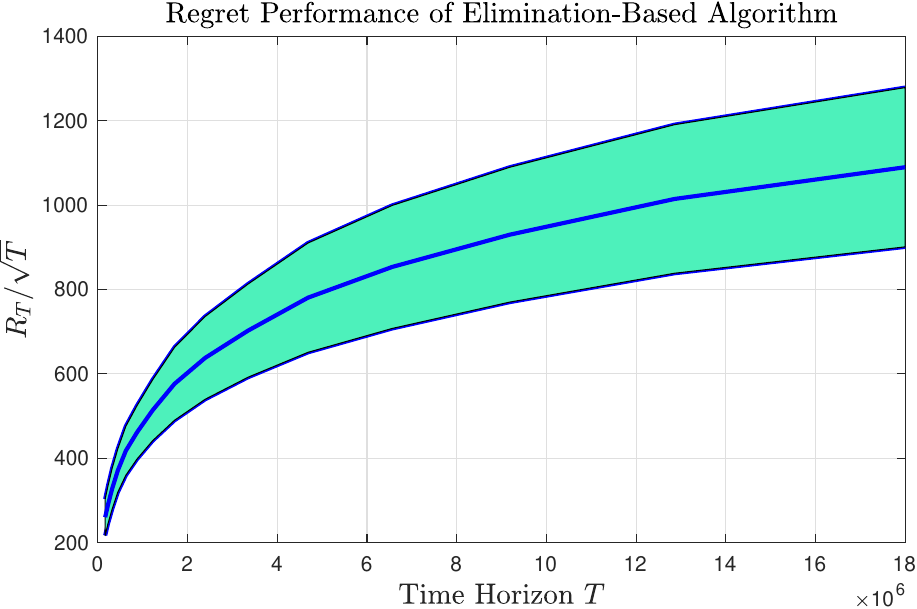}
\caption{Plot of ${R_T}/{\sqrt{T}}$ against $T$.} 
\label{RTOverTAgainstT}
\end{figure}

Figure~\ref{RTOverTAgainstT} shows that as $T\to\infty$, $R_T/\sqrt{T}$ appears to be bounded, or at least growing very slowly (e.g., due to logarithmic factors), as $T$ increases. This indicates that $R_T$ scales roughly as $\sqrt{T}$, corroborating our theoretical findings. 



 \vspace*{-1ex}
\section{Conclusion} \label{sec:concl}
 
 We have established upper and lower bounds on the smallest possible simple regret and cumulative regret for the noisy optimization of a Brownian motion. These bounds are tight up to a logarithmic factor in $T$. Our results complement the existing bounds on Bayesian optimization with smooth functions, revealing that in fact the cumulative regret enjoys similar scaling laws in the smooth and non-smooth scenarios.  
 
 In future work, it would be interesting to determine whether similar results hold for other non-smooth processes, such as the Ornstein--Uhlenbeck (OU) process or multi-dimensional variants of BM.  We expect that some of these (e.g. OU process with {\em known} parameters) should be possible with similar techniques to ours, by leveraging the solution of its accompanying stochastic differential equation. However, we also expect other stochastic processes is more challenging (e.g. processes with {\em unknown} parameters involving stochastic filtering) and possibly require new confidence bounds, H\"older-type continuity results, and so on. We leave such investigations to future work.

\vspace*{-1ex}
\section{Proofs of Auxiliary Lemmas} \label{app:proofs}

\subsection{Proof for \lemref{EventMLemma} (Probability of Event $\mathcal{M}$)} \label{EventM_Proof}

We start with the proxy-Lipschitz condition $\mathcal{C}$ of Grill {\em et al.}~\cite{grill2018optimistic}, in which we use $\delta$ as the argument for the probability of ``failure'', recall that the event $\mathcal{C}$ is defined as follows:

\begin{equation}
    \mathcal{C} \triangleq \bigcap_{h=0}^\infty \bigcap_{k=0}^{2^h-1} \left\{ \max_{x \in [\frac{k}{2^h}, \frac{k+1}{2^h}]} W_x \le B\left(\left[\frac{k}{2^h}, \frac{k+1}{2^h}\right]\right) \right\}, \label{setC}
\end{equation}
where $B([a,b]) = \max(W_a, W_b) + \eta_\delta(b-a)$, and $\eta_{\delta}(\cdot)$ is defined in \eqref{eqn:eta_alpha}.

According to \cite[Lemma 2]{grill2018optimistic}, $\mathcal{C}$ occurs with probability at least $1 - \delta^5$. We can also define an analogous event $\mathcal{C}^\prime$ (involving lower bounds instead of upper bounds), which similarly holds with probability at least $1 - \delta^5$, and is defined as follows:
\begin{align}
\mathcal{C}^\prime \triangleq \bigcap_{h=0}^\infty \bigcap_{k=0}^{2^h-1} \left\{ \min_{x \in [\frac{k}{2^h}, \frac{k+1}{2^h}]} W_x \ge B^\prime\left(\left[\frac{k}{2^h}, \frac{k+1}{2^h}\right]\right) \right\}, 
\end{align}
where $B^\prime([a,b]) = \min(W_a, W_b) - \eta_\delta(b-a)$.

We then define two sets of sub-events to prove that $\mathcal{M}_1$ and $\mathcal{M}_2$ hold with high probability: For each $h \ge 1$ and $k \in \{ 0, 1, \dots, 2^h-1\}$, define
\begin{align}
\mathcal{G}_{h,k} &\triangleq \left\{ \lvert W_\frac{k+1}{2^h} - W_\frac{k}{2^h} \rvert > \alpha_\delta(2^{-h})  \right\},
\end{align}
and for each $h \ge 1$ and $k \in \mathcal{J}_h$, define
\begin{align}
\mathcal{H}_{h,k} &\triangleq \Big\{ \big\lvert \bar{y}_\frac{k+1}{2^h}^{(h)} - W_\frac{k+1}{2^h} \big\rvert > \alpha_\delta(2^{-h}) \Big\},
\end{align}
where $\alpha_{\delta}(\cdot)$ is defined in \eqref{eqn:eta_alpha}, and we recall that $\bar{y}_x^{(h)} = {\rm Average}\left( \left\{ y_s \mid 1 \le s \le T_h, x_s = x \right\} \right)$.

Since for any $k \in \{ 0, 1, \dots, 2^h-1\}$, the differences $W_\frac{k+1}{2^h} - W_\frac{k}{2^h}$ are normally distributed with variance $2^{-h}$, we can upper bound them by 
\begin{align}
\alpha_\delta(2^{-h}) = \sqrt{2^{-h}\ln\left( \frac{1}{2\pi(2^{-h}\delta)^6}\right)}
\end{align}
with probability at least $1 - 2^{-2h}\delta^3$ according to \lemref{NormalUpperBoundLemma}.  This implies that each $\mathcal{G}_{h,k}$ holds with probability at most $2^{-2h}\delta^3$. Since at least $n_h = \sigma^2 2^h$ samples have been taken at each point in $\mathcal{J}_h$ (see Algorithm \ref{alg_ub}), the variance of the difference between the actual value and the average over noisy samples at these points is reduced to at most $2^{-h}$, and each $\mathcal{H}_{h,k}$ holds with probability at most $2^{-2h}\delta^3$ similarly to the argument for $\mathcal{G}_{h,k}$.

By the union bound on all the sub-events, and summing the probabilities with a geometric series, we conclude that $\mathbb{P}[\mathcal{M}_1] \ge 1 - \delta^3$ and $\mathbb{P}[\mathcal{M}_2] \ge 1 - \delta^3$. Note that $\mathcal{M}_2\cap\mathcal{C}\cap\mathcal{C}^\prime$ implies $\mathcal{M}_3$ and $\mathcal{M}_4$. Hence, the intersection of $\mathcal{M}_1$, $\mathcal{M}_2$, $\mathcal{C}$ and $\mathcal{C}^\prime$ implies $\mathcal{M}$, and by the union bound, $\mathbb{P}[\mathcal{M}] \ge 1 - 2\delta^3 - 2\delta^5 \ge 1 - \delta^2$.

\vspace*{-1ex}
\subsection{Proof of Equation \eqref{TrivialLinearRegretBound} (Expected Regret Given $\mathcal{M}^c$) \label{sec:ProofTrivLinear2}}

Here we show that $\mathbb{E}[R_T | \mathcal{A}] = \mathcal{O}\big( T \sqrt{\log \frac{1}{\mathbb{P}[\mathcal{A}]}} \big)$ for any event $\mathcal{A}$.  For brevity, we define $\delta_0 = \mathbb{P}[\mathcal{A}]$.

We first note from \lemref{RunningMaxDistributionLemma} in Appendix \ref{app:results} (in the supplementary material) that the maximum (and similarly, the minimum) of a Brownian motion has the same distribution as the absolute value of an $\mathcal{N}(0,1)$ random variable.  By \lemref{NormalUpperBoundLemma}, it follows that (unconditionally) $\max_{x \in [0,1]} W_x - \min_{x \in [0,1]} W_x \le c \sqrt{\log\frac{1}{\delta}}$ with probability at least $1-\delta^3$, where $\delta$ is arbitrary and $c$ is an absolute constant.  Note that the cumulative regret up to time $T$ is trivially upper bounded by $T$ times ${\rm Gap} := \max_{x \in [0,1]} W_x - \min_{x \in [0,1]} W_x$.

To move to the case with conditioning on $\mathcal{A}$, we write
\begin{align}
    &\mathbb{E}[{\rm Gap} \mid \mathcal{A}]  \nonumber\\*
    &= \mathbb{E}\bigg[{\rm Gap}\, \boldsymbol{1}\Big\{ {\rm Gap} \le c \sqrt{\log\frac{1}{\delta_0}} \Big\} \,\Big|\, \mathcal{A}\bigg] \nonumber\\*
    &\qquad + \mathbb{E}\bigg[{\rm Gap}\, \boldsymbol{1}\Big\{ {\rm Gap} > c \sqrt{\log\frac{1}{\delta_0}} \Big\} \,\Big|\, \mathcal{A}\bigg]\\
    &\le c \sqrt{\log\frac{1}{\delta_0}} + \frac{1}{\delta_0}\mathbb{E}\bigg[{\rm Gap}\, \boldsymbol{1}\Big\{ {\rm Gap} > c \sqrt{\log\frac{1}{\delta_0}} \Big\} \bigg], \label{GapTerms}
\end{align}
where the second term of \eqref{GapTerms} uses $\mathbb{E}[Z|\mathcal{A}] = \frac{\mathbb{E}[Z\boldsymbol{1}\{\mathcal{A}\}]}{\mathbb{P}[\mathcal{A}]} \le \frac{\mathbb{E}[Z]}{\mathbb{P}[\mathcal{A}]}$ for any non-negative random variable $Z$ and event $\mathcal{A}$.

Using the formula $\mathbb{E}[Z] = \int_{0}^{\infty} \mathbb{P}[Z \ge z] \,{\rm d}z$ for a non-negative random variable $Z$, and using the above arguments for (unconditionally) bounding ${\rm Gap}$ with high probability, it is straightforward to establish that 
\begin{align}
\mathbb{E}\bigg[{\rm Gap}\, \boldsymbol{1}\Big\{ {\rm Gap} > c \sqrt{\log\frac{1}{\delta_0}} \Big\} \bigg] = \mathcal{O}\bigg( \delta_0^3 \log \frac{1}{\delta_0} \bigg).
\end{align}
As a result, for any $\delta_0$ bounded away from one, the first term has the dominant scaling behavior in \eqref{GapTerms}, and we obtain $\mathbb{E}[{\rm Gap} \mid \mathcal{A}] = \mathcal{O}\big( \sqrt{\log\frac{1}{\delta_0}} \big)$ and hence $\mathbb{E}[R_T \mid \mathcal{A}] = \mathcal{O}\big( T \sqrt{\log\frac{1}{\delta_0}} \big)$.


\vspace*{-1ex}
\subsection{Proof of \lemref{ConditionalRunningMaxDistributionComplementVariableLengthLemma} (Running Maximum Lower Bound for a Brownian Meander)} \label{ProofRunMax}

	According to \cite{iafrate2019some}, the distribution function of the running maximum conditioned on the running minimum being positive can be expressed as follows:
	{\small\begin{align}
	&\nonumber \mathbb{P}\left[\max_{0 \le z \le s} W_z < x \,\Big|\, \min_{0 < z \le t} W_z > 0, W_0 = 0\right] \\
	&= \int_0^x \!\! H(y) \!\frac{\int_0^\infty \big[ \exp\left( - \frac{(w\! -\! y)^2}{2(t\!-\!s)} \right) \!-\! \exp\left( - \frac{(w\!+\!y)^2}{2(t\!-\!s)} \right) \frac{{\rm d}w}{\sqrt{2\pi(t\!-\!s)}}\big]}{\int_0^\infty \frac{w}{t\sqrt{2\pi t}}\exp\left( -\frac{w^2}{2t}\right) \,{\rm d}w} \,{\rm d}y,\!
	\end{align}}
	where  
\begin{equation}
	H(y)\triangleq\sum_{k=-\infty}^\infty  \frac{y-2kx}{s\sqrt{2\pi s}} \exp\left( -\frac{(y-2kx)^2}{2s} \right) .
	\end{equation}

	We first upper bound $H(y)$ in terms of $a =  \max\big( 0, \big \lfloor{\frac{\sqrt{s}-x}{2x}}\big\rfloor\big)$ with $\upsilon(z) \triangleq z\exp(-\frac{z^2}{2})$:\footnote{Here we adopt the convention $\sum_{k=1}^0 (\cdot) = 0$.}
	\begin{align}
	H(y) &= \sum_{k=-\infty}^\infty \left[ \frac{y-2kx}{s\sqrt{2\pi s}} \exp\left( -\frac{(y-2kx)^2}{2s} \right) \right] \\
	&= \frac{y}{s\sqrt{2\pi s}} \exp\left( -\frac{y^2}{2s} \right)  \nonumber\\*
	&\quad + \frac{1}{s\sqrt{2\pi}}\sum_{k=1}^\infty \left[ \upsilon\left(\frac{2kx+y}{\sqrt{s}}\right) \!-\! \upsilon\left(\frac{2kx-y}{\sqrt{s}}\right) \right]\\
	\label{Monotonicity}
	&\le \frac{y}{s\sqrt{2\pi s}} \exp\left( -\frac{y^2}{2s} \right) \nonumber\\*
	&\quad + \frac{1}{s\sqrt{2\pi}}\sum_{k=1}^a \left[ \upsilon\left(\frac{2kx+y}{\sqrt{s}}\right) \!-\! \upsilon\left(\frac{2kx-y}{\sqrt{s}}\right) \right]\\
	\label{SummationBound}
	&\le \frac{y}{s\sqrt{2\pi s}} \exp\left( -\frac{y^2}{2s} \right) + \frac{1}{s\sqrt{2\pi}}\sum_{k=1}^a \frac{2y}{\sqrt{s}} \\
	&\le \frac{y}{s\sqrt{2\pi s}} \exp\left( -\frac{y^2}{2s} \right) + \frac{1}{s\sqrt{2\pi}} \left(\frac{\sqrt{s}}{2x}\right) \frac{2y}{\sqrt{s}} \\ 
	&\le \frac{y}{s\sqrt{2\pi s}} \exp\left( -\frac{y^2}{2s} \right) + \frac{y}{xs\sqrt{2\pi}}, \label{A_Bound}
	\end{align}
	where \eqref{Monotonicity} follows from the monotonically decreasing property of the function $\upsilon(z) = z\exp(-\frac{z^2}{2})$ for $z > 1$ (since $x \ge y$, we have $\frac{2kx+y}{\sqrt{s}} > \frac{2kx-y}{\sqrt{s}} \ge \frac{\sqrt{s}+x -y}{\sqrt{s}} \ge 1$ for $k \ge a+1 =  \lfloor{\frac{\sqrt{s}-x}{2x}} \rfloor + 1 \ge  \lceil{\frac{\sqrt{s}+x}{2x}} \rceil$), \eqref{SummationBound} follows from the fact that the function $\upsilon(z) = z\exp\left(-\frac{z^2}{2} \right)$ has Lipschitz constant~$1$ on the domain $(0,1)$, and \eqref{A_Bound} follows from $a \le \frac{\sqrt s}{2x}$.

	The distribution function can then be upper bounded as follows:
	{\small \begin{align}
	&\nonumber \mathbb{P}\left[\max_{0 \le z \le s} W_z < x \,\Big|\, \min_{0 < z \le t} W_z > 0, W_0 = 0\right] \\
	& = \int_0^x \! H(y) \frac{\int_0^\infty \left[ \exp\left( - \frac{(w-y)^2}{2(t-s)} \right) \!-\! \exp\left( - \frac{(w+y)^2}{2(t-s)} \right) \right] \frac{{\rm d}w}{\sqrt{2\pi(t-s)}}}{\int_0^\infty \frac{w}{t\sqrt{2\pi t}}\exp\left( -\frac{w^2}{2t}\right) \,{\rm d}w} \,{\rm d}y \label{InitialDistr} \\
	& = \int_0^x\! H(y) \frac{\int_{-\frac{y}{\sqrt{t-s}}}^\infty \exp\left( - \frac{z^2}{2} \right) \frac{{\rm d}z}{\sqrt{2\pi}} \!-\! \int_{\frac{y}{\sqrt{t-s}}}^\infty \exp\left( - \frac{z^2}{2} \right) \frac{{\rm d}z}{\sqrt{2\pi}}}{\int_0^\infty \frac{w}{t\sqrt{2\pi t}}\exp\left( -\frac{w^2}{2t}\right) \,{\rm d}w} \,{\rm d}y  \label{ChangeOfVariable} \\
	&= \int_0^x H(y) \frac{\int_{-\frac{y}{\sqrt{t-s}}}^{\frac{y}{\sqrt{t-s}}} \exp\left( - \frac{z^2}{2} \right) \frac{{\rm d}z}{\sqrt{2\pi}}}{\int_0^\infty \frac{z}{\sqrt{2\pi t}}\exp\left( -\frac{z^2}{2}\right) \,{\rm d}w} \,{\rm d}y	\label{PreSummationBoundCall} \\
	\label{SummationBoundCall}
	&\le \int_0^x \!\left[ \frac{y}{s\sqrt{2\pi s}} \exp\left( -\frac{y^2}{2s} \right) \!+\! \frac{y}{xs\sqrt{2\pi}} \right]\! \frac{\int_{-\frac{y}{\sqrt{t-s}}}^{\frac{y}{\sqrt{t-s}}}\! \exp\left( \!-\! \frac{z^2}{2} \right) \frac{{\rm d}z}{\sqrt{2\pi}}}{ \frac{1}{\sqrt{2 \pi t}} } \,{\rm d}y,
	\end{align}}
	where \eqref{ChangeOfVariable} follows from the change of variables $z = w-y, z = w+y$, and \eqref{SummationBoundCall} follows from \eqref{A_Bound} and a direct evaluation of the integral in the denominator.
	
	We now consider two cases separately. First, if $t > 2s$:
	{\allowdisplaybreaks
	\begin{align}
	&\nonumber \mathbb{P}\left[\max_{0 \le z \le s} W_z < x \,\Big| \, \min_{0 < z \le t} W_z > 0, W_0 = 0\right] \\
	\label{UpperBoundExpInIntegral}
	&\le \int_0^x \left[ \frac{y}{s\sqrt{2\pi s}} \exp\left( -\frac{y^2}{2s} \right) + \frac{y}{xs\sqrt{2\pi}} \right] \frac{2\frac{1}{\sqrt{2\pi}}\frac{y}{\sqrt{t-s}}}{\frac{1}{\sqrt{2\pi t}}} \,{\rm d}y \\
	&= \frac{2\sqrt{t}}{\sqrt{2\pi s(t-s)}} \int_0^x \left( \frac{y^2}{s} \exp\left( -\frac{y^2}{2s} \right) + \frac{y^2}{x\sqrt{s}}\right) \,{\rm d}y \\
	\label{IntegrationByParts}
	&= \frac{2\sqrt{t}}{\sqrt{2\pi s(t-s)}} \bigg[ \sqrt{s} \int_{0}^{\frac{x}{\sqrt{s}}} \exp\left( -\frac{z^2}{2} \right) \,{\rm d}z  \nonumber\\* 
	&\qquad- x \exp\left( -\frac{x^2}{2s} \right) + \frac{x^2}{3\sqrt{s}} \bigg] \\
	\label{UpperBoundExpFunc}
	&\le \frac{2x\sqrt{t}}{\sqrt{2\pi s(t-s)}} \left[ 1 - \exp\left( -\frac{x^2}{2s} \right) + \frac{x}{3\sqrt{s}} \right] \\
	&\le \frac{1}{\sqrt{2\pi}}\sqrt{\frac{t}{t-s}} \left( \frac{x}{\sqrt{s}} \right)^3 + \sqrt{\frac{t}{t-s}}\frac{x^2\sqrt{2}}{3\sqrt{\pi}s} \label{TS_Simplified} \\
	\label{Tge2s}
	&< \frac{1}{\sqrt{\pi}} \left( \frac{x}{\sqrt{s}} \right)^3 	+ \frac{2}{3\sqrt{\pi}} \left( \frac{x}{\sqrt{s}}\right)^2 \\
	&\le \frac{5}{6\sqrt{\pi}} \left( \frac{x}{\sqrt{s}}\right)^2 \\
	&\le \frac{1}{2} \left( \frac{x}{\sqrt{s}}\right)^2,
	\end{align} 
	where \eqref{UpperBoundExpInIntegral} follows by upper bounding the exponential function $\exp(-\frac{z^2}{2})$ by $1$ in the inner integral of \eqref{SummationBoundCall}, \eqref{IntegrationByParts} follows from integration by parts, \eqref{UpperBoundExpFunc} follows by upper bounding the exponential function $\exp(-\frac{z^2}{2})$ by $1$, \eqref{TS_Simplified} follows from the fact that $1 - \frac{x^2}{2s} \le \exp(-\frac{x^2}{2s})$ as $x^2 < \frac{1}{4}s < 2s$, and \eqref{Tge2s} follows from $\frac{t}{t-s} < 2$ (since $t > 2s$) and $x < \sqrt{s}/2$ (assumed in the lemma).}
	
	{\allowdisplaybreaks
	As for the other case, if $t \le 2s$, then:
	\begin{align}
	\label{ProbabilityUpperBoundBy1}
	&\mathbb{P}\left[\max_{0 < z \le s} W_z < x \,\Big| \, \min_{0 \le z \le t} W_z > 0, W_0 = 0\right] \nonumber \\
	&\le \int_0^x \left[ \frac{y}{s\sqrt{2\pi s}} \exp\left( -\frac{y^2}{2s} \right) + \frac{y}{xs\sqrt{2\pi}} \right] \frac{1}{\frac{1}{\sqrt{2\pi t}}} \,{\rm d}y \\
	&= \sqrt{t} \int_0^x \left[\frac{y}{s\sqrt{s}} \exp\left( -\frac{y^2}{2s} \right) + \frac{y}{xs} \right] \,{\rm d}y \\
	& = \sqrt{t} \int_0^{x/\sqrt{s}} \frac{z}{\sqrt{s}} \exp\left( -\frac{z^2}{2} \right) \,{\rm d}z + \sqrt{t} \left[ \frac{y^2}{2xs} \right]_{y=0}^{y=x} \label{ChgVar2} \\
	&= \frac{\sqrt{t}}{\sqrt{s}} \left[ -\exp\left( -\frac{z^2}{2} \right) \right]_{z=0}^{z=\frac{x}{\sqrt{s}}} + \frac{x\sqrt{t}}{2s} \\
	&= \frac{\sqrt{t}}{\sqrt{s}} \left[ 1 - \exp\left( -\frac{x^2}{2s} \right) \right] + \frac{x\sqrt{t}}{2s}\\
	& \le \frac{x^2 \sqrt{t}}{2s\sqrt{s}} + \frac{x\sqrt{t}}{2s} 	\label{EulerExpansionExponential2} \\
	\label{Tle2s}
	&\le \frac{1}{\sqrt{2}}\left( \frac{x}{\sqrt{s}} \right)^2 +\frac{x}{\sqrt{2s}}\\
	&\le \sqrt{2}\frac{x}{\sqrt{s}},
	\end{align}
	 where \eqref{ProbabilityUpperBoundBy1} follows from the fact that the numerator in \eqref{SummationBoundCall} $\int_{-\frac{y}{\sqrt{t-s}}}^{\frac{y}{\sqrt{t-s}}} \exp(-\frac{z^2}{2}) \frac{{\rm d}z}{\sqrt{2\pi}} = \mathbb{P}[ -\frac{y}{\sqrt{t-s}} \le Z \le \frac{y}{\sqrt{t-s}}] \le 1$ for $Z \sim \mathcal{N}(0,1)$, \eqref{ChgVar2} applies the change of variable $z = \frac{y}{\sqrt s}$, \eqref{EulerExpansionExponential2} follows from the fact that $\exp(-x) > 1 - x$ for all $x > 0$, and~\eqref{Tle2s} follows from $t \le 2s$ and $x < \frac{1}{2}\sqrt{s} \le \sqrt{s}$ (assumed in the lemma). Hence, the probability of the complement event is lower bounded as follows:}
	\begin{align}
	&\mathbb{P}\left[ \max_{0 < z \le s} W_z \ge x \,\Big|\, \min_{0 \le z \le t} W_z > 0, W_0 = 0 \right] \nonumber\\*
	&\quad \ge
	\left\{ \begin{array}{lcl}
	 1 - \frac{1}{2} \left( \frac{x}{\sqrt{s}}\right)^2 &\text{ if } t > 2s\\[5mm]
	 1 -  \sqrt{2}\frac{x}{\sqrt{s}} & \text{ if } t \le 2s.
	\end{array} \right.
	\end{align}
	
\vspace*{-1ex}
\subsection{Proof of \lemref{RunningMinimumBrownianMeanderLemma} (Running Minimum Lower Bound for a Brownian Meander)} \label{ProofRunMin}

We have
\begin{align}
&\nonumber \mathbb{P}\left[ \min_{0\le z \le t} W_z > \varepsilon \,\Big| \, \min_{0\le z \le t} W_z > 0, W_0 = u \right] \\
&= \frac{\mathbb{P}\left[ \min_{0\le z \le t} W_z > \varepsilon, \min_{0\le z \le t} W_z > 0 \mid W_0 = u \right]}{\mathbb{P}\left[ \min_{0\le z \le t} W_z > 0 \mid W_0 = u \right]} \\
&= \frac{\mathbb{P}\left[ \min_{0\le z \le t} W_z > \varepsilon \mid W_0 = u \right]}{\mathbb{P}\left[ \min_{0\le z \le t} W_z > 0 \mid W_0 = u \right]} \\
\label{RunningMinimumFormulaCitation}
&= \frac{\int_\varepsilon^\infty \left[ \exp\left( - \frac{(y-u)^2}{2t} \right) - \exp\left( - \frac{(y+u-2\varepsilon)^2}{2t} \right) \right] \text{d}y}{\int_0^\infty \left[ \exp\left( - \frac{(y-u)^2}{2t} \right) - \exp\left( - \frac{(y+u)^2}{2t} \right) \right] \text{d}y} \\
&= \frac{\frac{1}{\sqrt{2\pi}}\int_{0}^{\frac{u-\varepsilon}{\sqrt{t}}} \exp\left( - \frac{1}{2}z^2 \right) \text{d}z}{\frac{1}{\sqrt{2\pi}}\int_{0}^{\frac{u}{\sqrt{t}}} \exp\left( - \frac{1}{2}z^2 \right) \text{d}z} \label{IntegralManipulations} \\
&\ge \frac{\frac{u-\varepsilon}{\sqrt{t}}}{\frac{u}{\sqrt{t}}} = \frac{u - \varepsilon}{u}, \label{NormalPdfDecreasing}
\end{align}
where \eqref{RunningMinimumFormulaCitation} follows from the joint distribution of Brownian meander (with initial value $u$) and its running minimum as stated in \lemref{BrownianMeanderDistributionFunctionLemma} in Appendix \ref{app:results}, \eqref{IntegralManipulations} follows similar steps to \eqref{InitialDistr}--\eqref{PreSummationBoundCall} (recall also that we assumed $0 < \epsilon < u$), and \eqref{NormalPdfDecreasing} follows from the fact that $e^{-z^2/2}$ is monotonically decreasing on the positive real line.

\vspace*{-1ex}
\subsection{Proof of \lemref{EventTLemma} (Probability of Event $\mathcal{T}$)} \label{EventT_Proof}

We lower bound the probability of each $\mathcal{T}_i$ separately for $i=1,2,3$.

{\bf Bounding $\mathbb{P}[\mathcal{T}_1]$.} Let $\delta_1 = \Delta^\eta$ denote the target error probability of event $\mathcal{T}_1$. We first define three running maxima of three regions as follows: 
\begin{align}
M_1 &= \max_{x \in [-\Delta, 2\Delta]} \tilde{W}_x,\quad  M_2 = \max_{x \in [2\Delta, 1-2\Delta]} \tilde{W}_x,  \nonumber\\* 
M_3&= \max_{x \in [1-2\Delta, 1+\Delta]} \tilde{W}_x.
\end{align}
We consider two separate events:
\begin{equation} \label{E_Defs}
    E_1 = \{M_1 < M_2\}, ~~\text{and}~~ E_2 = \{\max(M_1, M_2) > M_3\}, 
\end{equation}
whose intersection directly implies $\mathcal{T}_1$. To simplify notation, we define $Z$ to be a standard normal random variable independent of the other defined random variables.

To bound the probability of $E_1$, we first establish a high-probability upper bound $\displaystyle \hat{M}_1 = \sqrt{2\Delta \ln \frac{8}{\delta_1\sqrt{2\pi}}}$ for $M_1$:
\begin{align}
	\label{RunningMaxDisribution}
	\mathbb{P}[M_1 \le \hat{M}_1] &= \mathbb{P}\left[ \lvert \tilde{W}_{2\Delta} \rvert \le \sqrt{2\Delta \ln \frac{8}{\delta_1\sqrt{2\pi}}} \right] \\
	&= \mathbb{P}\left[\lvert Z \rvert \le \sqrt{\ln \frac{8}{\delta_1\sqrt{2\pi}}} \right] \\
	\label{NormalUpperBound}
	&= 1 - 2Q\left(\sqrt{\ln \frac{8}{\delta_1\sqrt{2\pi}}}\right) \\ &\ge 1 - \frac{\delta_1}{4},
\end{align}
where \eqref{RunningMaxDisribution} follows from the distribution function of the running maximum of a BM (see \lemref{RunningMaxDistributionLemma} in Appendix \ref{app:results}), and \eqref{NormalUpperBound} follows from \lemref{NormalUpperBoundLemma}.

The lower bound on $M_2$ requires a two-part argument. First, we lower bound the value of $\tilde{W}_{2\Delta}$ by $\hat{w}_{2\Delta} = -\sqrt{2\Delta \ln \frac{8}{\delta_1\sqrt{2\pi}}}$:
\begin{align}
	\mathbb{P}\left[\tilde{W}_{2\Delta} \ge \hat{w}_{2\Delta}\right]  
	&= \mathbb{P}\left[Z \ge -\sqrt{\ln \frac{8}{\delta_1\sqrt{2\pi}}} \right] \\
	&= 1 - Q\left(\sqrt{\ln \frac{8}{\delta_1\sqrt{2\pi}}}\right) \\
	&\ge 1 - \frac{\delta_1}{8}. \label{Delta8_Bound}
\end{align}
Second, we lower bound the value of $M_2 - \tilde{W}_{2\Delta}$ by 
\begin{align}
\hat{S}_2 = \frac{\delta_1\sqrt{\pi}}{8\sqrt{2}}\sqrt{1-4\Delta}.
\end{align}
To see this, we similarly use the distribution of running maximum of Brownian motion (\lemref{RunningMaxDistributionLemma}) to obtain
\begin{align}
&    \mathbb{P}\left[M_2 - \tilde{W}_{2\Delta} \ge \frac{\delta_1\sqrt{\pi}}{8\sqrt{2}}\sqrt{1-4\Delta} \right]   \nonumber\\*
    &= \mathbb{P}\left[\lvert Z \rvert \ge \frac{\delta_1\sqrt{\pi}}{8\sqrt{2}} \right] = 1 - \mathbb{P}\left[\lvert Z \rvert < \frac{\delta_1\sqrt{\pi}}{8\sqrt{2}} \right] \\
	&=  1 - 2\int_0^{\frac{\delta_1\sqrt{\pi}}{8\sqrt{2}}} \frac{1}{\sqrt{2\pi}}\exp\left(-\frac{z^2}{2}\right) \text{d}z\\
	& \ge 1 - 2\frac{\delta_1\sqrt{\pi}}{8\sqrt{2}} \frac{1}{\sqrt{2\pi}} = 1 - \frac{\delta_1}{8}.
\end{align}
Hence, defining $\hat{M}_2 = \hat{w}_{2\Delta} + \hat{S}_2$ and applying the union bound, we obtain
\begin{align}
\mathbb{P}\left[ M_2 \ge \hat{M}_2 \right] &= \mathbb{P}\left[ M_2 \ge \hat{w}_{2\Delta} + \hat{S}_2\right] \\
\label{M2LowerBound}
&\ge 1 - \left[ 1 - \mathbb{P}\left[ \tilde{W}_{2\Delta} \ge \hat{w}_{2\Delta} \right] \right]  \nonumber\\* 
&\qquad- \left[ 1 - \mathbb{P}\left[ M_2 - \tilde{W}_{2\Delta} \ge \hat{S}_2 \right] \right] \\
&\ge 1 - \frac{\delta_1}{4}.
\end{align}
As we take $\delta_1 = \Delta^\eta$ for some $\eta < \frac{1}{2}$, we have $\hat{M}_2 \ge \hat{M}_1$ for sufficiently small $\Delta$, since $\hat{M}_1 = \Theta( \sqrt{\Delta \log(1/\Delta)} )$ and $\hat{M}_2 = \Theta( \Delta^{\eta} - \sqrt{\Delta \log(1/\Delta)} ) = \Theta( \Delta^{\eta} )$ as $\Delta \to 0$.
Therefore,
\begin{align}
&\mathbb{P}\left[ M_2 \ge M_1 \right] \nonumber\\*
& \ge 1 - \left[1 - \mathbb{P}\left[ M_2 \ge \hat{M}_2 \right]\right] - \left[1 - \mathbb{P}\left[ M_1 \le \hat{M}_1 \right]\right] \\
&\ge 1 - \frac{\delta_1}{2},
\end{align}
following from \eqref{NormalUpperBound} and \eqref{M2LowerBound} along with the union bound.

To bound the probability of $E_2$ (see \eqref{E_Defs}), we let $w = \tilde{W}_{1 - 2\Delta}$, and establish an upper bound on $M_3$ in terms of $w$ that holds with probability at least $1 - \delta_1 / 4$, namely $\hat{M}_3 = w + \sqrt{2\Delta \ln \frac{8}{\delta_1\sqrt{2\pi}}}$. This is proved similarly to \eqref{NormalUpperBound}, so the details are omitted to avoid repetition. Similarly to \eqref{Delta8_Bound}, we define $\hat{w}_{1-2\Delta} = \sqrt{(1-\Delta)\ln\frac{8}{\delta_1\sqrt{2\pi}}}$ to be the high-probability upper bound of $\tilde{W}_{1-2\Delta}$, namely, $\mathbb{P}\big[ \tilde{W}_{1-2\Delta} \le \hat{w}_{1-2\Delta} \big] \ge 1-\frac{\delta_1}{8}$.

Using the density function of $\tilde{W}_{1-2\Delta}$, denoted by $f_{\tilde{W}_{1-2\Delta}}(w)$, we lower bound the probability for $\max(M_1, M_2)$ to exceed $\hat{M}_3$ in \eqref{eqn:Pmax}--\eqref{TaylorExpansionExp} on the top of the next page,
\begin{figure*}
\begin{align}
	&\mathbb{P}\left[ \max(M_1, M_2) \ge \hat{M}_3 \right] = \mathbb{P}\left[ \max_{x \in [-\Delta, 1 - 2\Delta]} \tilde{W}_x \ge \hat{M}_3 \right] \label{eqn:Pmax}\\
	&= \int_{-\infty}^\infty \mathbb{P}\left[  \max_{x \in [-\Delta, 1 - 2\Delta]} \tilde{W}_x \ge \hat{M}_3 \,\Big|\, \tilde{W}_{1-2\Delta} = w \right] f_{\tilde{W}_{1-2\Delta}}(w) \text{d}w\\
	\label{BrownianBridgeProbability}
	&=\int_{-\infty}^\infty \min\left(1,  \exp\left[ -\frac{2\sqrt{2\Delta \ln \frac{8}{\delta_1\sqrt{2\pi}}}\left(w + \sqrt{2\Delta \ln \frac{8}{\delta_1\sqrt{2\pi}}}\right)}{1 - \Delta} \right] \right) f_{\tilde{W}_{1-2\Delta}}(w) \text{d}w\\
	&\ge \int_{-\infty}^{\hat{w}_{1-2\Delta}} f_{\tilde{W}_{1-2\Delta}}(w) \text{d}w \exp\left[ -\frac{2\sqrt{2\Delta \ln \frac{8}{\delta_1\sqrt{2\pi}}}\left(\hat{w}_{1-2\Delta} + \sqrt{2\Delta \ln \frac{8}{\delta_1\sqrt{2\pi}}}\right)}{1 - \Delta} \right] \label{SubWhat}\\
	&=\mathbb{P}\left[ \tilde{W}_{1-2\Delta} \le \hat{w}_{1-2\Delta} \right] \exp\left[ -\frac{2\sqrt{2\Delta \ln \frac{8}{\delta_1\sqrt{2\pi}}}\left(\hat{w}_{1-2\Delta} + \sqrt{2\Delta \ln \frac{8}{\delta_1\sqrt{2\pi}}}\right)}{1 - \Delta} \right]\\
	\label{UnionBound}
	&\ge \exp\left[ -\frac{2\sqrt{2\Delta \ln \frac{8}{\delta_1\sqrt{2\pi}}}\left(\sqrt{(1-\Delta)\ln\frac{8}{\delta_1\sqrt{2\pi}}} + \sqrt{2\Delta \ln \frac{8}{\delta_1\sqrt{2\pi}}}\right)}{1 - \Delta} \right] - \frac{\delta_1}{8} \\
	\label{SquareRootInequality}
	&\ge \exp\left( -4\sqrt{2\Delta}\ln\frac{8}{\delta_1\sqrt{2\pi}}\right) - \frac{\delta_1}{8} \\
	\label{TaylorExpansionExp}
	&> 1 - 4\sqrt{2\Delta}\ln\frac{8}{\delta_1\sqrt{2\pi}} - \frac{\delta_1}{8} \ge 1 - \frac{\delta_1}{4}, 
\end{align}\hrulefill
\end{figure*}
where
\begin{itemize}
    \item \eqref{BrownianBridgeProbability} follows from the distribution function of the running maximum of Brownian bridge as stated (\lemref{BrownianBridgeLemma} in the supplementary material,  with $a = -\Delta$, $b = 1 - 2\Delta$, $w_a = 0$, $w_b = w$, and $x = \hat{M}_3$) and the fact that the underlying probability is trivially one when the $\exp(-(\dotsc))$ term is greater than one (since the running maximum is always at least as high as the two endpoints);
    \item \eqref{SubWhat} follows from the fact that 
    $\exp\left( -aw + b \right)$ is decreasing in $w$ for any $a > 0$ and $b$;
    \item \eqref{UnionBound} follows from the above-established fact that $\tilde{W}_{1-2\Delta} \le \hat{w}_{1-2\Delta}$ with probability at least $1 - \frac{\delta_1}{8}$, along with $\mathbb{P}[A]\exp(-\alpha) = (1 - \mathbb{P}[A^c])\exp(-\alpha) \ge \exp(-\alpha) - \mathbb{P}[A^c]$ for $\alpha \ge 0$ (and hence $\exp(-\alpha) \le 1$);
    \item \eqref{SquareRootInequality} follows from the fact that $\sqrt{1 - \Delta} + \sqrt{2\Delta} < 2$ for $\Delta \in (0, \frac{1}{2})$;
    \item the first inequality in~\eqref{TaylorExpansionExp} follows from the fact that $\exp(-x) > 1$ for all $x > 0$, along with the choice $\delta_1 = \Delta^\eta$ (when $\Delta$ is sufficiently small).
\end{itemize}
Hence, $E_2$ holds with probability at least $1 - \delta_1 / 2$.
As the intersection of $E_1$ and $E_2$ implies $\mathcal{T}_1$, the maximum lies in between $2\Delta$ and $1-2\Delta$ with probability at least $1 - \delta_1$ by the union bound on the two events.

{\bf Bounding $\mathbb{P}[\mathcal{T}_2]$.}
    Recall from \eqref{x+*x-*}--\eqref{M+} that $M^+$ and $x_M^+$ are respectively the maximum and maximizer of $W^+$. It will be useful to additionally define the following two points, where $\delta^\prime \triangleq 0.1\delta$:
	\begin{align}
	&\xM^{+,\mathrm{L}} \triangleq \sup\left\{ x \, \big|\, x < \xM^+, W^+_x = M^+ - \delta^\prime\sqrt{2\Delta} \right\} \\
	&\xM^{+,\mathrm{R}} \triangleq \inf\left\{ x \, \big|\, x > \xM^+, W^+_x = M^+ - \delta^\prime\sqrt{2\Delta} \right\}.
	\end{align}
	We then consider the following auxiliary events:
	\begin{align}
	&\mathcal{T}_2^{(\mathrm{A})} \triangleq \left\{ \min_{x \in [\xM^+-\Delta, \xM^+]} W^+_x \le M^+ - (\delta^\prime)^2\sqrt{2\Delta} \right\} \\
	&\mathcal{T}_2^{(\mathrm{B})} \triangleq \left\{ \min_{x \in [\xM^+, \xM^++\Delta]} W^+_x \le M^+ - (\delta^\prime)^2\sqrt{2\Delta} \right\} \\
	&\mathcal{T}_2^{(\mathrm{C})} \triangleq \left\{ \max_{x \in [0, \xM^{+,\mathrm{L}}]} W^+_x \le M^+ - (\delta^\prime)^2\sqrt{2\Delta} \right\} \\
	&\mathcal{T}_2^{(\mathrm{D})} \triangleq \left\{ \max_{x \in [\xM^{+,\mathrm{R}}, 1]} W^+_x \le M^+ - (\delta^\prime)^2\sqrt{2\Delta} \right\}.
	\end{align}
	By definition, $\mathcal{T}_2^{(\mathrm{A})}$ is equivalent to $\{ \xM^{+,\mathrm{L}} \ge \xM^+-\Delta \}$, and $\mathcal{T}_2^{(\mathrm{B})}$ is equivalent to $\{ \xM^{+,\mathrm{R}} \le \xM^++\Delta \}$. The intersection of all four events and $\mathcal{T}_1$ implies $\mathcal{T}_2$, since if events  $\mathcal{T}_2^{(\mathrm{A})}$ and $\mathcal{T}_2^{(\mathrm{B})}$ hold, then there is a point in the $\Delta$-neighborhood of $\xM^+$ with value $W^{+}_{x} = M - \delta^\prime\sqrt{2\Delta}$; moreover, if events $\mathcal{T}_2^{(\mathrm{C})}$ and $\mathcal{T}_2^{(\mathrm{D})}$ simultaneously hold, $W^+$ does not fall below $M - (\delta^\prime)^2\sqrt{2\Delta}$ outside this neighborhood. As long as event $\mathcal{T}_1$ holds, both maxima $\xM^+$ and $\xM^-$ are within $[0, 1]$, and it is safe to conclude that if $r^+(x)$ is smaller than $(\delta^\prime)^2 \sqrt{2\Delta}$, $x$ must be in the $\Delta$-neighborhood near $\xM^+$, and we can conclude that $r^-(x) = M - W^{-}_{x} = M^+ - \tilde{W}_{x-2\Delta} = r^+(x-2\Delta)$ is larger than $(\delta^\prime)^2\sqrt{2\Delta}$.

We now define two processes, representing left and right Brownian meanders (i.e., BM conditioned on being positive):
\begin{align}
 B_2^{\mathrm{L}}(x) &\triangleq \frac{M^+ - W^+_{\xM^+ - x \cdot \xM^+}}{\sqrt{\xM^+}},\quad\mbox{and}  \label{DefinitionBrownianMeandersEqL}\\ 
 B_2^{\mathrm{R}}(x)& \triangleq \frac{M^+ - W^{+}_{\xM^+ + x \cdot (1 - \xM^+)}}{\sqrt{1 - \xM^+}}.  \label{DefinitionBrownianMeandersEqR}
\end{align}
Specifically, \lemref{BrownianMeanderIdenticalDistributionLemma} in Appendix \ref{app:results} formally states that $B_2^{\mathrm{L}}$ and $B_2^{{\rm R}}$ are identically distributed Brownian meanders, and furthermore, that they are independent from each other and from $\xM^+$.  See Figure \ref{fig:test} for an example.

 \begin{figure}[t]
\centering
\begin{tabular}{c}
     \includegraphics[width=.8\columnwidth]{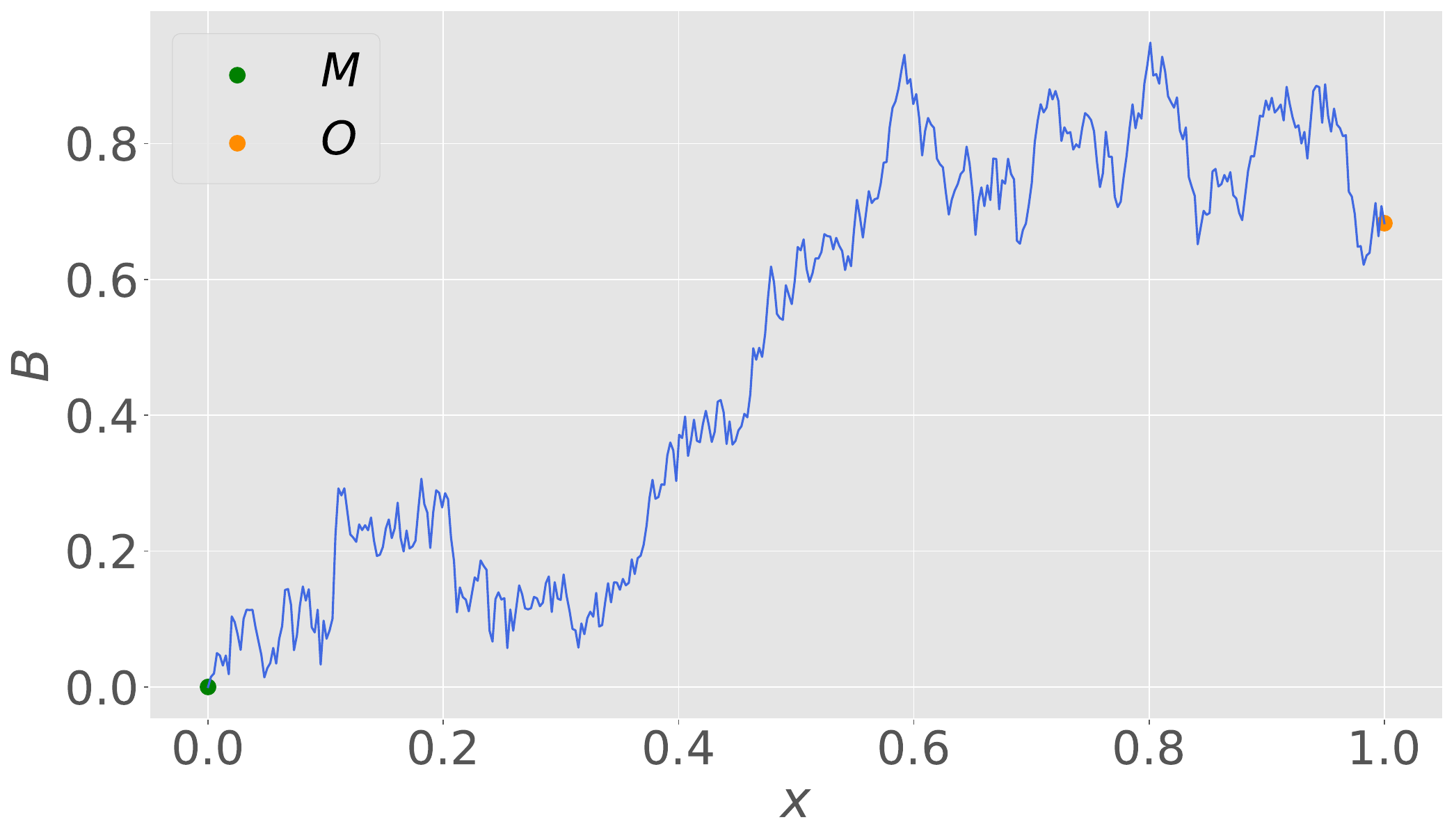}\\
  \includegraphics[width=.8\columnwidth]{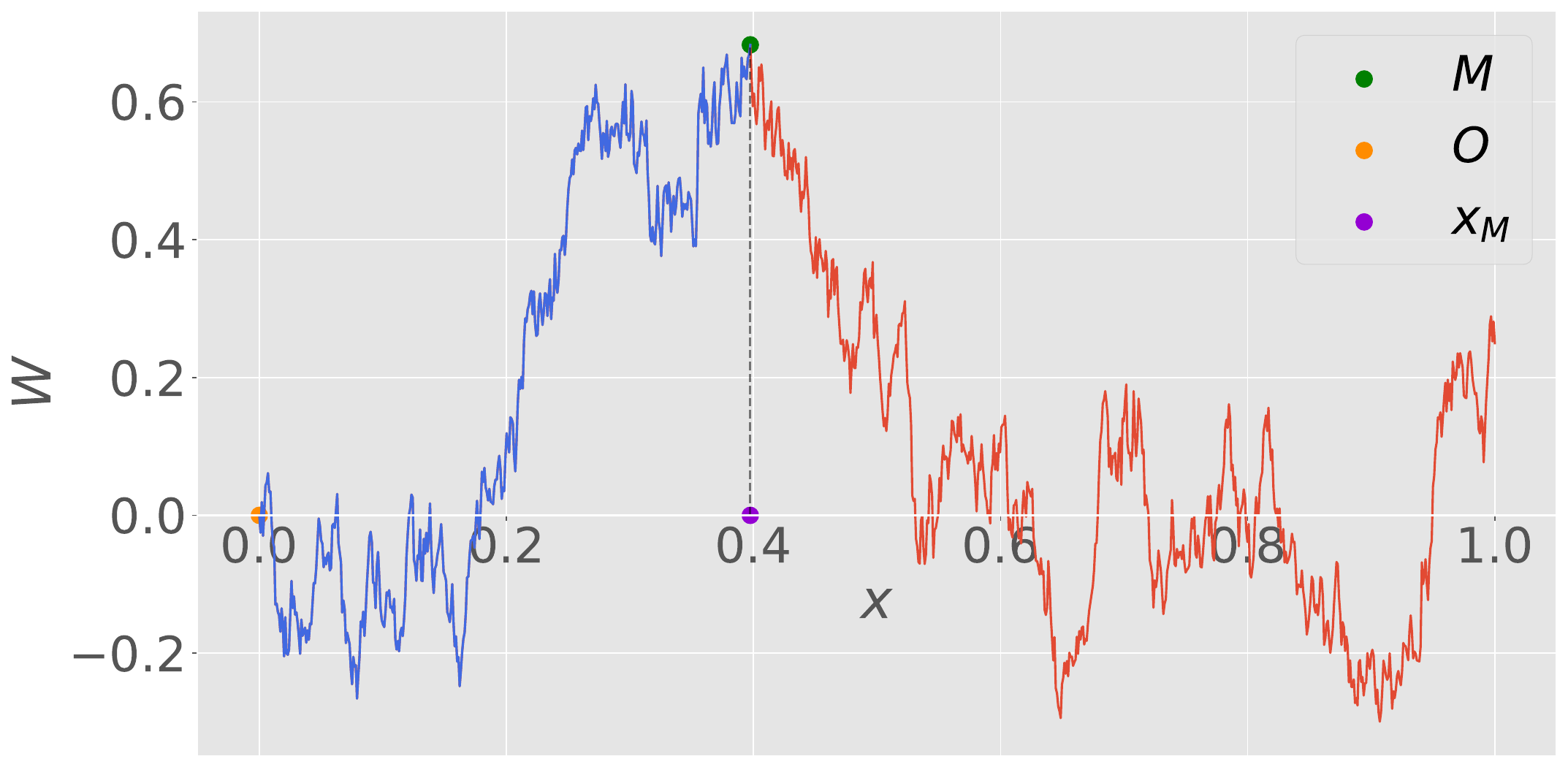} 
\end{tabular}
\caption{Left Brownian meander example (top), and the original Brownian motion (bottom).  Observe the correspondences between the points labeled M (maximum) and O (origin).}
\label{fig:test}
\end{figure}

For any fixed $\hat{x}_M^+$ in the range $[\Delta, 1 - 3\Delta]$, we define an intermediate quantity $\xi \triangleq \frac{\delta^\prime\sqrt{2\Delta}}{\sqrt{\hat{x}_M^+}}$ and note that the following event holds for $B_2^{\mathrm{L}}$ by construction:
\begin{align}
	\mathcal{B}_2^{\mathrm{L}} \triangleq \left\{ \min_{x \in [0, 1]} B_2^{\mathrm{L}}(x) > 0, B_2^{\mathrm{L}}(0) = 0, B_2^{\mathrm{L}}\Big(1 - \frac{\xM^{+, {\rm L}}}{\hat{x}_M^+}\Big) = \xi \right\}. \label{eq:SetB2}
\end{align}
Since the pair $(B_2^{\mathrm{L}},B_2^{\mathrm{R}})$ is independent of $\xM^+$, we can condition on $\xM^+ = \hat{x}_M^+$ in accordance with event $\mathcal{T}_1$ as follows:
\begin{align}
	&\nonumber \mathbb{P}[\mathcal{T}_2^{(\mathrm{A})} \mid \xM^+ = \hat{x}_M^+] \\
	\label{BrownianMeanderConversion}
	& \ge \mathbb{P}\bigg[ \max_{x \in [0, \Delta / \hat{x}_M^+]} B_2^{\mathrm{L}}(x) \ge  \xi\delta^\prime \Big| \mathcal{B}_2^{\mathrm{L}} \bigg] \\
	\label{ConditionalRunningMaxDistributionComplementVariableLength}
	& \ge 1 - \max\left(\frac{1}{2} \left( \frac{ \xi\delta^\prime}{ \xi/\delta^\prime} \right)^2, 2(\delta^\prime)^2\right) \\
	&= 1 - \max\left(\frac{1}{2}(\delta^\prime)^4, 2(\delta^\prime)^2\right) = 1 - 2(\delta^\prime)^2
\end{align}
where \eqref{BrownianMeanderConversion} follows from the definition of $B_2^{\mathrm{L}}$, and \eqref{ConditionalRunningMaxDistributionComplementVariableLength} follows from the minimum of two cases in  \lemref{ConditionalRunningMaxDistributionComplementVariableLengthLemma} (with $x = (\delta^\prime)^2\sqrt{2\Delta} / \sqrt{\xM^+}$, $s = \Delta / \xM^+$, and $t = 1$).\footnote{Note that the Brownian meander in  \lemref{ConditionalRunningMaxDistributionComplementVariableLengthLemma} does not contain any counterpart to the third condition in \eqref{eq:SetB2}, but that condition is not playing a role here.  It is simply an almost-sure event that holds by the definition of $\xM^{+, {\rm L}}$, so it makes no difference whether it is included in the conditioning or not.  It is included in \eqref{eq:SetB2} for later convenience following \eqref{TildeB2Ldefinition}.}

Averaging over $x_M^+$, we have
\begin{align}
    \mathbb{P}[\mathcal{T}_2^{(\mathrm{A})}] &= \int_{-\infty}^\infty f_{x_M^+}(\hat{x}_M^+) \mathbb{P}[\mathcal{T}_2^{(\mathrm{A})} \mid \xM^+ = \hat{x}_M^+] \mathrm{d} \hat{x}_M^+ \\
    &= \int_{\Delta}^{1-3\Delta}\! f_{x_M^+}(\hat{x}_M^+) \mathbb{P}[\mathcal{T}_2^{(\mathrm{A})} \mid \xM^+ = \hat{x}_M^+] \mathrm{d} \hat{x}_M^+ \\
    \label{ConditionalOnXMPlusLowerBound}
    &\ge \int_{\Delta}^{1-3\Delta} f_{x_M^+}(\hat{x}_M^+) \left(1 - \delta^\prime\sqrt{2}\right) \mathrm{d} \hat{x}_M^+ \\
    &= \big(1 - 2(\delta^\prime)^2 \big) \mathbb{P}\left[x_M^+ \in [\Delta, 1-3\Delta]\right] \\
    \label{T1BoundUsageEquation}
    &\ge \big(1 - 2(\delta^\prime)^2 \big) (1 - \Delta^\eta) \\
    &\ge 1 - \Delta^\eta - 2(\delta^\prime)^2,
\end{align}
where \eqref{ConditionalOnXMPlusLowerBound} follows from \eqref{ConditionalRunningMaxDistributionComplementVariableLength}, and \eqref{T1BoundUsageEquation} follows from the fact that $\mathbb{P}[\mathcal{T}_1] \ge 1 - \Delta^\eta$. Similarly, as $B_2^{{\rm L}}$ is equal in distribution to $B_2^{{\rm R}}$, $\mathbb{P}[\mathcal{T}_2^{(\mathrm{B})} \mid x_M^- = \hat{x}_M^-] \ge 1 - 2(\delta^\prime)^2$ (following from \lemref{ConditionalRunningMaxDistributionComplementVariableLengthLemma} for any $x_M^-$ restricted in the range $[3\Delta, 1-\Delta]$ in accordance with $\mathcal{T}_1$), and thus $\mathbb{P}[\mathcal{T}_2^{(\mathrm{B})}] \ge \left(1 - 2(\delta^\prime)^2\right)\mathbb{P}[x_M^- \in [3\Delta, 1-\Delta]] \ge \left(1 - 2(\delta^\prime)^2\right) (1 - \Delta^\eta) \ge 1 - \Delta^\eta - 2(\delta^\prime)^2$.

Before considering $\mathcal{T}_2^{(\mathrm{C})}$, we use the Markov property to define a {\em non-standard} Brownian meander $\widetilde{B}_2^{\mathrm{L}}$ as follows: For all $0 \le x \le \frac{\xM^{+,\mathrm{L}}}{\hat{x}_M^+}$,
\begin{align} \label{TildeB2Ldefinition}
	\widetilde{B}_2^{\mathrm{L}}(x) \triangleq B_2^{\mathrm{L}}\Big(x + 1 - \frac{\xM^{+,\mathrm{L}}}{\hat{x}_M^+}\Big) 
\end{align}
The distribution of $\widetilde{B}_2^{\mathrm{L}}$ follows from $B_2^{\mathrm{L}}$ in the domain $[1 - \xM^{+,\mathrm{L}}/\hat{x}_M^+, 1]$ to be a Brownian meander with starting value $\xi$, due to the Markov property of BM. The counterpart $\widetilde{\mathcal{B}}_2^{\mathrm{L}}$ to the event $\mathcal{B}_2^{\mathrm{L}}$ in \eqref{eq:SetB2} is as follows:
\begin{align}
	\widetilde{\mathcal{B}}_2^{\mathrm{L}} \triangleq \left\{ \min_{x \in [0, \xM^{+, {\rm L}}/\hat{x}_M^+]} \widetilde{B}_2^{\mathrm{L}}(x) > 0,  \widetilde{B}_2^{\mathrm{L}}(0) =  \xi \right\}
\end{align}
Next, we lower bound $\mathbb{P}[\mathcal{T}_2^{(\mathrm{C})}]$ by conditioning on $\xM^+ = \hat{x}_M^+$ as follows: 
\begin{align}
	&\nonumber \mathbb{P}[\mathcal{T}_2^{(\mathrm{C})} \mid \xM^+ = \hat{x}_M^+] \\
	\label{BrownianMeanderConversion2}
	& = \mathbb{P}\left[ \min_{x \in [1 - \xM^{+, {\rm L}}/\hat{x}_M^+, 1]} B_2^{\mathrm{L}}(x) \ge  \xi\delta^\prime \,\Big|\, \mathcal{B}_2^{\mathrm{L}} \right] \\
	\label{RedefineB2}
	& = \mathbb{P}\left[ \min_{x \in [0, \xM^{+, {\rm L}}/\hat{x}_M^+]} \widetilde{B}_2^{\mathrm{L}}(x) \ge  \xi\delta^\prime \,\Big|\, \widetilde{\mathcal{B}}_2^{\mathrm{L}} \right] \\
	\label{RunningMinimumBrownianMeander}
	& \ge 1 - \frac{ \xi\delta^\prime}{ \xi}= 1 - \delta^\prime, 
\end{align}
	where:
	\begin{itemize}
	    \item \eqref{BrownianMeanderConversion2} follows from the definition of $B_2^{\mathrm{L}}$ in \eqref{DefinitionBrownianMeandersEqL}, which implies that $B_2^{{\rm L}}$ is non-negative and has $B_2^{\mathrm{L}}\big(1 - \frac{\xM^{+, {\rm L}}}{\hat{x}_M^+}\big) =  \xi$ (recall from the definition of $\xM^{+, {\rm L}}$ that $W_{\xM^{+, {\rm L}}} = M^+ - \delta'\sqrt{2\pi}$).
	    \item \eqref{RedefineB2} follows by \eqref{TildeB2Ldefinition}. Note also that the condition $\min_{x \in [1 - \xM^{+,\mathrm{L}}/\hat{x}_M^+, 1]} B_2^{\mathrm{L}}(x) > 0$ is directly inherited from $\min_{x \in [0, 1]} B_2^{\mathrm{L}}(x) > 0$;
	    \item \eqref{RunningMinimumBrownianMeander} follows from \lemref{RunningMinimumBrownianMeanderLemma} (with $u =  \xi$ and $\varepsilon = \delta^\prime \xi$).
	\end{itemize}
	Since $\mathbb{P}[\mathcal{T}_2^{(\mathrm{C})} \mid \xM^+ = \hat{x}_M^+] \ge 1 - \delta^\prime$ for all $\hat{x}_M^+$, we have $\mathbb{P}[\mathcal{T}_2^{(\mathrm{C})}] \ge 1 - \delta^\prime$. Similarly, as $B_2^{\mathrm{L}}$ is equal in distribution to $B_2^{\mathrm{R}}$, $\mathbb{P}[\mathcal{T}_2^{(\mathrm{D})}] \ge 1 - \delta^\prime$.
	
    When all four events and $\mathcal{T}_1$ hold, if $r^+(x)$ is less or equal to $c_6\delta^2\sqrt{\Delta}$, then $r^-(x)$ is greater or equal to $c_6\delta^2\sqrt{\Delta}$, where $c_6\delta^2\sqrt{\Delta}=(\delta^\prime)^2\sqrt{2\Delta}$ for $c_6 = 0.01\sqrt{2}$ (and vice versa). By the union bound and choice of $\delta^\prime = 0.1\delta$, 
\begin{align}
	\mathbb{P}[\mathcal{T}_2] & \ge \mathbb{P}[\mathcal{T}_2^{(\mathrm{A})} \cap \mathcal{T}_2^{(\mathrm{B})} \cap \mathcal{T}_2^{(\mathrm{C})} \cap \mathcal{T}_2^{(\mathrm{D})}]\\
	& \ge 1 - \Delta^\eta - 2(\delta^\prime)^2 - \Delta^\eta - 2(\delta^\prime)^2 - \delta^\prime - \delta^\prime \\
	&\ge 1 - 2\Delta^\eta - \delta.
\end{align}    

{\bf Bounding $\mathbb{P}[\mathcal{T}_3]$.} Recall the definitions of $\eta_{\delta}(\cdot)$ and $\alpha_{\delta}(\cdot)$ in \eqref{eqn:eta_alpha}. Divide the time horizon $[-\Delta, 1+\Delta]$ into a time grid of size $\Delta$, with grid points $\tau_k \triangleq \min(k\Delta, 1+\Delta)$ for all $ -1 \le k \le \lceil 1/\Delta \rceil+1$ and intervals $I_k \triangleq [\tau_k, \tau_{k+1}] $ for all  $-1 \le k \le \lceil 1/\Delta \rceil$. For each interval, we define the running maximum and running minimum as follows:
\begin{align}
	\tilde{W}_{I_k}^{\max} \triangleq \max_{x \in I_k} \tilde{W}_x, \quad \tilde{W}_{I_k}^{\min} \triangleq \min_{x \in I_k} \tilde{W}_x.
\end{align}
We define the event $\mathcal{S} \triangleq \mathcal{S}_1 \cap \mathcal{S}_2 \cap \mathcal{S}_3$, where $\mathcal{S}_i$ for $i = 1,2,3$ are defined as follows:
\begin{align}
	\mathcal{S}_1 &\triangleq \bigcap_{k=-1}^{\lceil 1/\Delta \rceil} \left\{ \tilde{W}_{I_k}^{\max} \le \max\left(\tilde{W}_{\tau_k}, \tilde{W}_{\tau_{k+1}}\right) \!+\! \eta_\Delta\left(\Delta\right) \right\} \label{def:S1}\\
	\mathcal{S}_2 &\triangleq \bigcap_{k=-1}^{\lceil 1/\Delta \rceil} \left\{ \tilde{W}_{I_k}^{\min} \ge \min\left(\tilde{W}_{\tau_k}, \tilde{W}_{\tau_{k+1}} \right) - \eta_\Delta\left(\Delta\right) \right\} \label{def:S2}\\
	\mathcal{S}_3 &\triangleq \bigcap_{k=-1}^{\lceil 1/\Delta \rceil} \left\{ \left\lvert \tilde{W}_{\tau_k} - \tilde{W}_{\tau_{k+1}}\right\rvert \le \alpha_\Delta\left(\Delta\right) \right\}.\label{def:S3}
\end{align}
    We first show that $\mathcal{S}$ implies $\mathcal{T}_3$ (see Definition \ref{def:setT}). Suppose that $\mathcal{S}$ holds and there exists an integer $-1 \le k \le \lceil 1 / \Delta \rceil-2$ such that $x-\Delta$ falls into the $k$-th interval $I_k$ while $x+\Delta$ falls into the $(k+2)$-nd interval $I_{k+2}$. Then, the difference $\lvert \tilde{W}_{x-\Delta} - \tilde{W}_{x+\Delta} \rvert$ can be upper bounded via the triangle inequality:
	\begin{align}
	    \nonumber &\left\lvert \tilde{W}_{x-\Delta} - \tilde{W}_{x+\Delta} \right\rvert \\
	    &= \max\left( \tilde{W}_{x-\Delta} - \tilde{W}_{x+\Delta}, \tilde{W}_{x+\Delta} - \tilde{W}_{x-\Delta} \right) \\
	    \label{DefinitionRunningMaxMin}
	    &\le  \max \left( \tilde{W}_{I_k}^{\max} - \tilde{W}_{I_{k+2}}^{\min}, \tilde{W}_{I_{k+2}}^{\max} - \tilde{W}_{I_k}^{\min} \right) \\
	    \label{EventS1S2Eq}
	    &\le  \max \bigg[ \max\left( \tilde{W}_{\tau_k}, \tilde{W}_{\tau_{k+1}} \right)  \nonumber \\
	    &\qquad- \min\left( \tilde{W}_{\tau_{k+2}}, \tilde{W}_{\tau_{k+3}} \right) + 2\eta_\Delta(\Delta),  \nonumber\\
	    &\hspace{10.5mm}   \max\left( \tilde{W}_{\tau_{k+2}}, \tilde{W}_{\tau_{k+3}} \right)\nonumber\\*
	    &\qquad - \min\left( \tilde{W}_{\tau_k}, \tilde{W}_{\tau_{k+1}} \right) + 2\eta_\Delta(\Delta) \bigg]
	    \\
	    \label{EventS3Eq}
	    &\le \, 3\alpha_\Delta(\Delta) + 2\eta_\Delta(\Delta) \\ &\le  c_4 \sqrt{\Delta \ln (1 / \Delta)}, \label{DeltaLogDelta}
	\end{align}
	where \eqref{DefinitionRunningMaxMin} follows from the definitions of running maximum and minimum, \eqref{EventS1S2Eq} follows from events $\mathcal{S}_1$ and $\mathcal{S}_2$, \eqref{EventS3Eq} follows from event $\mathcal{S}_3$, and \eqref{DeltaLogDelta} uses the definitions of $\eta_{\delta}(\cdot)$ and $\alpha_{\delta}(\cdot)$ in \eqref{eqn:eta_alpha}. 
	
	For any $0 \le x \le 1$, the absolute difference between the two regret functions can be expressed as the difference between the $\tilde{W}$ values corresponding to two $2\Delta$-separated points:
	\begin{align}
		\left\lvert r^+(x) - r^-(x) \right\rvert &= \left\lvert (W_{\xM^+}^+ - W_x^+) - (W_{\xM^-}^- - W_x^-) \right\rvert \\
		\label{Definitionx+*x-*}
		&= \lvert M^+ - M^- - W^+_x + W_x^-\rvert \\
		\label{DefinitionW+W-}
		&= \left\lvert \tilde{W}_{x-\Delta} - \tilde{W}_{x+\Delta} \right\rvert \\
		\label{PreviousResult}
		&\le c_4 \sqrt{\Delta \ln (1 / \Delta)}
	\end{align}
	where \eqref{Definitionx+*x-*} follows from the definitions of $\xM^+$ and $\xM^-$ in \eqref{x+*x-*}, \eqref{DefinitionW+W-} follows from definitions of $W^+$ and $W^-$ in \eqref{DefW+} and \eqref{DefW-}, and \eqref{PreviousResult} follows from \eqref{DeltaLogDelta} (which holds uniformly in $x$). Hence, the absolute difference between the two regret functions is upper bounded by $\mathcal{O}(\sqrt{\Delta \ln (1 / \Delta)})$ everywhere.
    
    It remains to show that $\mathcal{S}$ holds with probability at least $1 - \Delta$. From \lemref{BrownianRunningMaxOOBLemma} in Appendix~\ref{app:results}, each sub-event of $\mathcal{S}_1$ and $\mathcal{S}_2$ holds with probability at least $1 - (\Delta\cdot\Delta)^{5}$. By the union bound over the $\lceil 1 / \Delta \rceil$ many sub-events of $\mathcal{S}_1$, we have $\mathbb{P}[ \mathcal{S}_1 ] \ge 1 - \Delta^9$. Similarly, we have $\mathbb{P}[\mathcal{S}_2] \ge 1 - \Delta^9$.
    
    Similar to the argument for lower bounding the probability of $\mathcal{M}_1$ in the proof of \lemref{EventMLemma}, each sub-event of $\mathcal{S}_3$ holds with probability at least $1 - (\Delta\cdot\Delta)^3$ by \lemref{NormalUpperBoundLemma}. Hence, we can lower bound $\mathbb{P}[\mathcal{S}_3]$ by $1 - \Delta^5$, and by the union bound on $\mathcal{S}_1$, $\mathcal{S}_2$ and $\mathcal{S}_3$, we have 
\begin{equation}
    \mathbb{P}[\mathcal{T}_3] \ge \mathbb{P}[\mathcal{S}] \ge 1 - \Delta
    \end{equation}    
    as event $\mathcal{S}$ implies $\mathcal{T}_3$. 
    
{\bf Lower bounding $\mathbb{P}[\mathcal{T}]$.}  Recall from Section~\ref{EventT_Proof} that $\mathbb{P} [ \mathcal{T}_1 ]\ge 1-\delta_1 = 1- \Delta^\eta$. Combining this with the lower bounds on $\mathbb{P}[\mathcal{T}_2]$ and $\mathbb{P}[\mathcal{T}_3]$,  and using the union bound on the events $\mathcal{T}_1$, $\mathcal{T}_2$ and $\mathcal{T}_3$, we have 
\begin{equation}
  \mathbb{P}[\mathcal{T}] \ge 1 - 3\Delta^\eta - \delta - \Delta
  \end{equation}  
 as desired.


\subsection{Proof of Lemma \ref{FanoBasedLemma} (Regret Bound Using Fano's Inequality)}\label{app:FanoProof}

By Markov's inequality, we have
\begin{align}
	\mathbb{E}_{\tilde{w}}[\rT] \ge c_3 \delta^2 \sqrt{\Delta} \cdot \mathbb{P}_{\tilde{w}}\left[ \rT \ge c_3 \delta^2 \sqrt{\Delta} \right].
\end{align}
By Definition \ref{def:setT}, conditioned on $\tilde{W} = \tilde{w}$ satisfying $\mathcal{T}$, $r_T$ is smaller than $c_3 \delta^2 \sqrt{\Delta}$ for at most one of the functions $W^+$ and $W^-$. Hence, if we let $\hat{V}$ denote the index in $\{+,-\}$ corresponding to the smaller regret, we find that if the regret associated with $v$ is smaller than $c_3 \delta^2 \sqrt{\Delta}$, we must have $\hat{V} = v$. Therefore, 
\begin{align}
\mathbb{P}_{\tilde{w}}^v\left[ \rT \ge c_3 \delta^2 \sqrt{\Delta} \right] \ge \mathbb{P}_{\tilde{w}}^v\left[ \hat{V} = v \right],
\end{align}
where the superscript indicates conditioning on $V = v$. Hence, we can lower bound the above probability as follows, using the fact that $V$ is equiprobable on $\{+,-\}$:
\begin{align}
	\mathbb{P}_{\tilde{w}}\left[ \rT \ge c_3 \delta^2 \sqrt{\Delta} \right] 
	&\hspace{0mm} = \frac{1}{2} \sum_{v \in \{+,-\}} \mathbb{P}_{\tilde{w}}^v\left[ \rT \!\ge\! c_3 \delta^2 \sqrt{\Delta} \right] \\
	\label{InfoArgument}
	&\hspace{0mm}\ge \frac{1}{2} \sum_{v \in \{+,-\}} \mathbb{P}_{\tilde{w}}^v\big[ \hat{V} = v \big] \\
	\label{Fano}
	&\hspace{0mm}\ge H_2^{-1}\left( \log 2 - I_{\tilde{w}}(V; \mathbf{x}, \mathbf{y}) \right),
\end{align}
where \eqref{InfoArgument} 
follows from events $\mathcal{T}_1$ and $\mathcal{T}_2$ in Definition \ref{def:setT}, and \eqref{Fano} follows from the binary version of Fano's inequality, e.g., as stated in \cite[Remark~1]{scarlett2019fano}. 

\medskip
{\bf Supplementary material (appendix).} The supplementary material is uploaded as a separate document, containing Appendix \ref{app:results} stating known properties of Brownian motion.

\medskip
{\bf Acknowledgment.} The authors gratefully acknowledge Prof.~Rongfeng Sun for helpful discussions regarding properties of Brownian motion. 

\bibliographystyle{IEEEtran}
\bibliography{reference}

\begin{thebibliography}{10}
\providecommand{\url}[1]{#1}
\csname url@samestyle\endcsname
\providecommand{\newblock}{\relax}
\providecommand{\bibinfo}[2]{#2}
\providecommand{\BIBentrySTDinterwordspacing}{\spaceskip=0pt\relax}
\providecommand{\BIBentryALTinterwordstretchfactor}{4}
\providecommand{\BIBentryALTinterwordspacing}{\spaceskip=\fontdimen2\font plus
\BIBentryALTinterwordstretchfactor\fontdimen3\font minus
  \fontdimen4\font\relax}
\providecommand{\BIBforeignlanguage}[2]{{%
\expandafter\ifx\csname l@#1\endcsname\relax
\typeout{** WARNING: IEEEtran.bst: No hyphenation pattern has been}%
\typeout{** loaded for the language `#1'. Using the pattern for}%
\typeout{** the default language instead.}%
\else
\language=\csname l@#1\endcsname
\fi
#2}}
\providecommand{\BIBdecl}{\relax}
\BIBdecl

\bibitem{kuhn1987black}
T.~S. Kuhn, \emph{Black-Body Theory and the Quantum Discontinuity,
  1894-1912}.\hskip 1em plus 0.5em minus 0.4em\relax University of Chicago
  Press, 1987.

\bibitem{adler2019conventional}
J.~Adler, I.-M. Sintorn, R.~Strand, and I.~Parmryd, ``Conventional analysis of
  movement on non-flat surfaces like the plasma membrane makes {B}rownian
  motion appear anomalous,'' \emph{Communications Biology}, vol.~2, no.~1,
  p.~12, 2019.

\bibitem{kijima2016stochastic}
M.~Kijima, \emph{Stochastic Processes with Applications to Finance}.\hskip 1em
  plus 0.5em minus 0.4em\relax Chapman and Hall/CRC, 2016.

\bibitem{grill2018optimistic}
J.-B. Grill, M.~Valko, and R.~Munos, ``Optimistic optimization of a
  {B}rownian,'' in \emph{Advances in Neural Information Processing Systems},
  2018, pp. 3005--3014.

\bibitem{al1996optimal}
H.~Al-Mharmah and J.~M. Calvin, ``Optimal random non-adaptive algorithm for
  global optimization of {B}rownian motion,'' \emph{Journal of Global
  Optimization}, vol.~8, no.~1, pp. 81--90, 1996.

\bibitem{abdechiri2013gases}
M.~Abdechiri, M.~R. Meybodi, and H.~Bahrami, ``Gases {B}rownian motion
  optimization: An algorithm for optimization ({GBMO}),'' \emph{Applied Soft
  Computing}, vol.~13, no.~5, pp. 2932--2946, 2013.

\bibitem{calvin2017adaptive}
J.~M. Calvin, M.~Hefter, and A.~Herzwurm, ``Adaptive approximation of the
  minimum of {B}rownian motion,'' \emph{Journal of Complexity}, vol.~39, pp.
  17--37, 2017.

\bibitem{mockus89}
J.~Mockus, \emph{Bayesian Approach to Global Optimization: Theory and
  Applications}.\hskip 1em plus 0.5em minus 0.4em\relax Springer, 1989.

\bibitem{srinivas2012information}
N.~Srinivas, A.~Krause, S.~M. Kakade, and M.~W. Seeger, ``Information-theoretic
  regret bounds for {G}aussian process optimization in the bandit setting,''
  \emph{IEEE Transactions on Information Theory}, vol.~58, no.~5, pp.
  3250--3265, 2012.

\bibitem{scarlett2018tight}
J.~Scarlett, ``Tight regret bounds for {B}ayesian optimization in
  one-dimension,'' \emph{International Conference on Machine Learning}, 2018.

\bibitem{du2019gradient}
S.~Du, ``Gradient descent for non-convex problems in modern machine learning,''
  Ph.D. dissertation, Carnegie Mellon University, 2019.

\bibitem{marchant2012bayesian}
R.~Marchant and F.~Ramos, ``Bayesian optimisation for intelligent environmental
  monitoring,'' in \emph{2012 IEEE/RSJ International Conference on Intelligent
  Robots and Systems}, 2012, pp. 2242--2249.

\bibitem{swersky2014freeze}
K.~Swersky, J.~Snoek, and R.~P. Adams, ``Freeze-thaw {B}ayesian optimization,''
  2014, https://arxiv.org/abs/1406.3896.

\bibitem{grunewalder2010regret}
S.~Gr{\"u}new{\"a}lder, J.-Y. Audibert, M.~Opper, and J.~Shawe-Taylor, ``Regret
  bounds for {G}aussian process bandit problems,'' in \emph{International
  Conference on Artificial Intelligence and Statistics}, 2010, pp. 273--280.

\bibitem{kawaguchi2015bayesian}
K.~Kawaguchi, L.~P. Kaelbling, and T.~Lozano-P{\'e}rez, ``Bayesian optimization
  with exponential convergence,'' in \emph{Advances in Neural Information
  Processing Systems}, 2015, pp. 2809--2817.

\bibitem{munos2011optimistic}
R.~Munos, ``Optimistic optimization of a deterministic function without the
  knowledge of its smoothness,'' in \emph{Advances in Neural Information
  Processing Systems}, 2011, pp. 783--791.

\bibitem{shang2019general}
X.~Shang, E.~Kaufmann, and M.~Valko, ``General parallel optimization a without
  metric,'' in \emph{Algorithmic Learning Theory}, 2019, pp. 762--788.

\bibitem{shamir2013complexity}
O.~Shamir, ``On the complexity of bandit and derivative-free stochastic convex
  optimization,'' in \emph{Conference on Learning Theory}, 2013, pp. 3--24.

\bibitem{raginsky2011information}
M.~Raginsky and A.~Rakhlin, ``Information-based complexity, feedback and
  dynamics in convex programming,'' \emph{IEEE Transactions on Information
  Theory}, vol.~57, no.~10, pp. 7036--7056, 2011.

\bibitem{Karatzas88}
I.~Karatzas and S.~E. Shreve, \emph{{B}rownian Motion and Stochastic
  Calculus}.\hskip 1em plus 0.5em minus 0.4em\relax Springer-Verlag, 1988.

\bibitem{lattimore2020bandit}
T.~Lattimore and C.~Szepesv{\'a}ri, \emph{Bandit algorithms}.\hskip 1em plus
  0.5em minus 0.4em\relax Cambridge University Press, 2020.

\bibitem{Fre12}
N.~{de Freitas}, M.~Zoghi, and A.~J. Smola, ``Exponential regret bounds for
  {G}aussian process bandits with deterministic observations,'' in
  \emph{International Conference on Machine Learning}, 2012.

\bibitem{durrett1977weak}
R.~Durett, D.~Iglehart, and D.~Miller, ``Weak convergence to {Brownian} meander
  and {Brownian} excursion,'' \emph{The Annals of Probability}, vol.~5, no.~1,
  pp. 117--129, 1977.

\bibitem{scarlett2019fano}
J.~Scarlett and V.~Cevher, ``An introductory guide to {F}ano's inequality with
  applications in statistical estimation,'' 2019,
  https://arxiv.org/abs/1901.00555.

\bibitem{iafrate2019some}
F.~Iafrate and E.~Orsingher, ``Some results on the {B}rownian meander with
  drift,'' \emph{Journal of Theoretical Probability}, pp. 1--27, 2019.

\bibitem{kallenberg2006foundations}
O.~Kallenberg, \emph{Foundations of Modern Probability}.\hskip 1em plus 0.5em
  minus 0.4em\relax Springer Science \& Business Media, 2006.

\bibitem{denisov1983random}
I.~Denisov, ``Random walk and the {W}iener process considered from a maximum
  point,'' \emph{Teor. Veroyatnost. i Primenen}, vol.~28, no.~4, pp. 785--788,
  1983.

\end{thebibliography}

\begin{IEEEbiography}[{\includegraphics[width=1in,height=1.25in,clip,keepaspectratio]{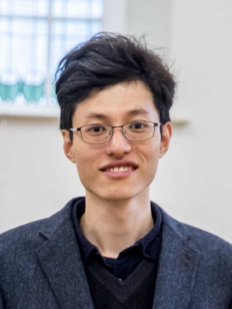}}]{Zexin Wang}
received the Bachelor of Science degree in Quantitative Finance from National University of Singapore, Singapore, in 2019. He is currently a Ph.D.\ student with the Mathematical Finance section at Department of Mathematics, Imperial College London, London, U.K. From May 2019 to July 2019, he was a research assistant with the Department of Computer Science, National University of Singapore. His research interests are in the area of mathematical finance, mainly market microstructure, derivative pricing, limit order zoning and liquidation costs.
\end{IEEEbiography}

\begin{IEEEbiography}[{\includegraphics[width=1in,height=1.25in,clip,keepaspectratio]{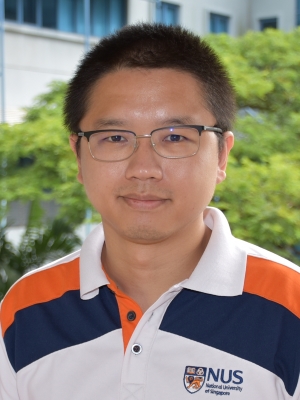}}]{Vincent Y.~F.~Tan}
    (S'07--M'11--SM'15)    received the B.A.\
and M.Eng.\ degrees in electrical and information
sciences from Cambridge University, Cambridge,
U.K., in 2005, and the Ph.D.\ degree in electrical
engineering and computer science (EECS) from the
Massachusetts Institute of Technology (MIT), Cambridge,
MA, USA, in 2011. He is currently an Associate Professor with the Department of
Electrical and Computer Engineering and the Department
of Mathematics, National University of Singapore.
His research interests include information theory, machine learning, and
statistical signal processing. He is currently an Associate Editor for the IEEE
TRANSACTIONS ON SIGNAL PROCESSING and an Associate Editor of machine
learning for the IEEE TRANSACTIONS ON INFORMATION THEORY.
\end{IEEEbiography}

\begin{IEEEbiography}[{\includegraphics[width=1in,height=1.25in,clip,keepaspectratio]{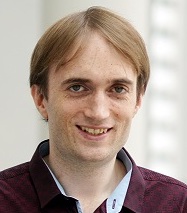}}]{Jonathan Scarlett}
    (S'14--M'15) received 
    the B.Eng.~degree in electrical engineering and the B.Sci.~degree in 
    computer science from the University of Melbourne, Australia. 
    From October 2011 to August 2014, he
    was a Ph.D. student in the Signal Processing and Communications Group
    at the University of Cambridge, United Kingdom. From September 2014 to
    September 2017, he was post-doctoral researcher with the Laboratory for
    Information and Inference Systems at the \'Ecole Polytechnique F\'ed\'erale
    de Lausanne, Switzerland. Since January 2018, he has been an assistant
    professor in the Department of Computer Science and Department of Mathematics,
    National University of Singapore. His research interests are in
    the areas of information theory, machine learning, signal processing, and
    high-dimensional statistics. He received the Singapore National Research Foundation (NRF)
    fellowship, and the NUS Presidential Young Professorship.
\end{IEEEbiography}

\newpage
\onecolumn

\appendices 
\begin{center}
{\LARGE \bf Supplementary Material}

\medskip
{\LARGE Tight Regret Bounds for Noisy Optimization of a Brownian Motion}

\medskip
{\large Zexin Wang, Vincent Tan, and Jonathan Scarlett}

\end{center}
\setcounter{equation}{0}
\counterwithin*{equation}{section}
\renewcommand\theequation{\thesection.\arabic{equation}}
\setcounter{lemma}{0}
\counterwithin*{lemma}{section}
\renewcommand\thelemma{\thesection.\arabic{lemma}}

\section{Results Concerning Brownian Motion} \label{app:results}

\begin{lemma}[{Karatzas and Shreve~\cite[Problem 8.2]{Karatzas88}}] \label{RunningMaxDistributionLemma}
Let $W_x$ be a Brownian motion and $M_x = \max_{0 \le x^\prime \le x} W_{x^\prime}$. The probability density function of $M_x$ satisfies
\begin{align}
\mathbb{P}[M_x \in \mathrm{d}b] = \mathbb{P}[\lvert W_x \rvert \in \mathrm{d}b].
\end{align}
\end{lemma}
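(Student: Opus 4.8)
This is the classical \emph{reflection principle} for Brownian motion; the plan is to establish it via the strong Markov property applied at a first-passage time. Fix $x > 0$ and $b \ge 0$, and define the first hitting time $\tau_b = \inf\{ x' \ge 0 : W_{x'} = b \}$. Since Brownian paths are continuous, on the event $\{\tau_b \le x\}$ we have $W_{\tau_b} = b$, and moreover $\{M_x \ge b\} = \{\tau_b \le x\}$. The key step is to argue that, conditioned on $\{\tau_b \le x\}$, the post-$\tau_b$ increment $W_x - W_{\tau_b}$ is a centered Gaussian (with variance $x - \tau_b$) that is symmetric about $0$ and independent of $\tau_b$. Hence ``reflecting'' the path after time $\tau_b$ gives a measure-preserving bijection showing
\begin{align}
	\mathbb{P}[\, M_x \ge b,\ W_x \le b \,] = \mathbb{P}[\, M_x \ge b,\ W_x \ge b \,]. \label{eq:refl-step}
\end{align}

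From here the argument is short. The event $\{W_x \ge b\}$ is contained in $\{M_x \ge b\}$, so $\mathbb{P}[M_x \ge b,\ W_x \ge b] = \mathbb{P}[W_x \ge b]$; combining this with \eqref{eq:refl-step} and the decomposition $\{M_x \ge b\} = \{M_x \ge b, W_x \le b\} \cup \{M_x \ge b, W_x > b\}$ (the overlap having probability zero) yields
\begin{align}
	\mathbb{P}[\, M_x \ge b \,] = 2\,\mathbb{P}[\, W_x \ge b \,] = \mathbb{P}[\, |W_x| \ge b \,],
\end{align}
where the last equality uses the symmetry of the $\mathcal{N}(0,x)$ law of $W_x$. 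Since this identity of tail probabilities holds for every $b \ge 0$ (and both $M_x$ and $|W_x|$ are supported on $[0,\infty)$), differentiating in $b$ gives the claimed equality of densities $\mathbb{P}[M_x \in \mathrm{d}b] = \mathbb{P}[|W_x| \in \mathrm{d}b]$.

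The one technical point requiring care — and the step I expect to be the main obstacle if full rigor is demanded — is the justification of \eqref{eq:refl-step}: one must verify that $\tau_b$ is a stopping time, that $\{\tau_b \le x\}$ has positive probability so the conditioning is well defined, and that the strong Markov property legitimately lets one treat $(W_{\tau_b + u} - b)_{u \ge 0}$ as a fresh Brownian motion independent of $\mathcal{F}_{\tau_b}$, whose reflection $u \mapsto -(W_{\tau_b+u}-b)$ is again a Brownian motion with the same law. Given that the statement is quoted from \citep[Problem 8.2]{Karatzas88}, it suffices in the write-up to invoke the strong Markov property and path continuity and carry out the reflection bijection at the level of detail above, rather than re-deriving the strong Markov property itself.
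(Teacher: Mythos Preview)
Your proof via the reflection principle is correct and is the standard argument for this classical fact. Note, however, that the paper does not actually prove this lemma: it is stated in Appendix~\ref{app:results} as a cited result from \citep[Problem~8.2]{Karatzas88} with no accompanying proof, so there is nothing to compare against beyond observing that your argument is precisely the one the cited reference would have you carry out.
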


\begin{lemma} [{Grill {\em et al.}~\cite[Proof of Lemma 1]{grill2018optimistic}}] \label{BrownianBridgeLemma}
Given the values of a Brownian motion at the ends of an interval $[a,b]$ being $w_a$ and $w_b$, the cumulative distribution function of the running maximum of the Brownian motion in the interval $[a,b]$ is, for all  $y \ge \max(w_a, w_b)$,
\begin{align}
\mathbb{P}&\left[ \max_{x \in [a,b]} W_x > y \,\Big|\, W_a = w_a, W_b = w_b \right] = \exp\left[ - \frac{2(y-w_a)(y-w_b)}{b-a} \right].
\end{align}
\end{lemma}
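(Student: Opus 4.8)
The plan is to derive the formula from the reflection principle for Brownian motion, handling the conditioning on the two endpoints as a ratio of Gaussian densities. First I would temporarily discard the constraint $W_b = w_b$ and work with a Brownian motion $W$ started at $W_a = w_a$; by translation invariance and the Markov property this is just a standard BM, shifted in time and space. Write $\phi_\tau(u) = \frac{1}{\sqrt{2\pi\tau}}\,e^{-u^2/(2\tau)}$ for the centered Gaussian density of variance $\tau$, so that the terminal value $W_b$ has density $w \mapsto \phi_{b-a}(w - w_a)$.

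The key step is the reflection identity: for $y \ge w_a$ and $w_b \le y$,
\[
\mathbb{P}\!\left[\max_{x\in[a,b]} W_x \ge y,\ W_b \in \mathrm{d}w_b \,\Big|\, W_a = w_a\right] = \phi_{b-a}\big(2y - w_a - w_b\big)\,\mathrm{d}w_b .
\]
To see this, let $\tau_y$ be the first hitting time of level $y$ (finite on the event in question), and reflect the post-$\tau_y$ portion of the path about the line $\{W = y\}$; by the strong Markov property at $\tau_y$ together with the symmetry $W \overset{d}{=} -W$ of Brownian motion, this reflection is measure-preserving, and it maps paths with $W_b = w_b$ to paths ending at $2y - w_b$, at signed distance $2y - w_a - w_b$ from the start. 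Dividing by the marginal density $\phi_{b-a}(w_b - w_a)$ then gives
\[
\mathbb{P}\!\left[\max_{x\in[a,b]} W_x \ge y \,\Big|\, W_a = w_a,\ W_b = w_b\right] = \exp\!\left(-\frac{(2y - w_a - w_b)^2 - (w_b - w_a)^2}{2(b-a)}\right),
\]
and the elementary identity $(2y - w_a - w_b)^2 - (w_b - w_a)^2 = 4(y - w_a)(y - w_b)$ produces the claimed $\exp\!\left(-\tfrac{2(y-w_a)(y-w_b)}{b-a}\right)$. Finally, since on this interval the conditional law of the running maximum has no atoms above $\max(w_a,w_b)$ (a BM bridge started at $w_a$ immediately exceeds $w_a$), $\mathbb{P}[\max > y]$ and $\mathbb{P}[\max \ge y]$ coincide, matching the statement.

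I expect the only genuinely delicate point to be the rigorous justification of the reflection identity at the level of sub-densities and regular conditional distributions — that is, the measure-preserving property of path reflection at the strong Markov time $\tau_y$, and the legitimacy of conditioning on the null event $\{W_b = w_b\}$ via a disintegration argument. All remaining steps (the Gaussian ratio, the algebraic simplification, and the no-atoms remark) are routine. Since this is a classical fact about the Brownian bridge, an equally acceptable route is to simply cite a standard reference (e.g.\ Borodin--Salminen's handbook, or Karatzas--Shreve), as is effectively done in \citep{grill2018optimistic}.
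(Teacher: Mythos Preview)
Your derivation via the reflection principle is correct and is the standard classical argument for this Brownian bridge formula. Note, however, that the paper does not actually prove this lemma: it is simply stated in Appendix~\ref{app:results} as a known result, with a citation to \citep[Proof of Lemma~1]{grill2018optimistic}. So there is no ``paper's proof'' to compare against --- your last sentence already anticipates this, and indeed citing a reference is exactly what the paper does. Your full argument is thus more than what is required here, though it is a fine self-contained justification.
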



\begin{lemma}[{Kallenberg~\cite[Equation (3.1)]{kallenberg2006foundations}}] \label{BrownianMeanderDistributionFunctionLemma}
Let $B$ be a Brownian motion with initial value $u$,\footnote{In \cite{kallenberg2006foundations} a possible drift is included, but for our purposes a drift of zero suffices.} the joint distribution function of it and its running minimum for any $y > v, u > v, s > 0$ is as follows:
\begin{align}
&\nonumber \mathbb{P}\left[ B(s) \in \mathrm{d}y, \min_{0 \le z \le s} B(z) > v \,\Big|\, B(0) = u \right] \\
&= \left[ \exp\left( - \frac{(y-u)^2}{2s} \right) - \exp\left( - \frac{(2v-y-u)^2}{2s} \right) \right] \frac{\mathrm{d}y}{\sqrt{2\pi s}}.
\end{align}
\end{lemma}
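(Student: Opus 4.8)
The plan is to derive this identity from the reflection principle for Brownian motion; since the attached footnote reduces us to the zero-drift case, and Brownian increments are translation invariant, I would first normalize to $u = 0$. Writing $B$ for a standard BM with $B(0) = 0$, setting $w = v - u$ and $z = y - u$ (so that $w < 0$ and $z > w$ under the hypotheses $v < u$ and $v < y$), and letting $\varphi_s$ denote the $\mathcal{N}(0,s)$ density, the goal becomes
\[
	\mathbb{P}\!\left[ B(s) \in \mathrm{d}z,\ \min_{0 \le \zeta \le s} B(\zeta) > w \right] = \big[ \varphi_s(z) - \varphi_s(2w - z) \big]\,\mathrm{d}z ,
\]
after which restoring the shift by $u$ turns $\varphi_s(z)$ into the Gaussian in $(y-u)$, turns $2w - z$ into $2v - y - u$, and recovers the stated formula.

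The main step is to evaluate the complementary event $\{\min_{[0,s]} B \le w\}$. I would introduce the first hitting time $\tau_w \triangleq \inf\{\zeta \ge 0 : B(\zeta) = w\}$, note that it is a stopping time by path continuity, and observe --- again by continuity --- that $\{\min_{0 \le \zeta \le s} B(\zeta) \le w\} = \{\tau_w \le s\}$. Then I would invoke the strong Markov property at $\tau_w$: conditionally on $\{\tau_w \le s\}$ and $\mathcal{F}_{\tau_w}$, the process $(B(\tau_w + t) - w)_{t \ge 0}$ is a standard BM started at $0$, hence symmetric, so reflecting the post-$\tau_w$ portion of the path about the level $w$ produces a trajectory of identical law in which the terminal value $z$ becomes $2w - z$. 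Because $z > w$ forces $2w - z < w$, and any path with $B(s) < w$ automatically satisfies $\tau_w \le s$, this reflection gives a measure-preserving correspondence yielding, for $z > w$,
\[
	\mathbb{P}\!\left[ B(s) \in \mathrm{d}z,\ \tau_w \le s \right] = \mathbb{P}\!\left[ B(s) \in \mathrm{d}(2w - z) \right] = \varphi_s(2w - z)\,\mathrm{d}z .
\]
Subtracting this from the unconditional Gaussian law $\mathbb{P}[B(s) \in \mathrm{d}z] = \varphi_s(z)\,\mathrm{d}z$ gives the displayed identity, and undoing the substitution finishes the argument. I note that the hypotheses $u > v$ and $y > v$ are precisely what makes $2w - z < w$, hence what makes the reflection step applicable.

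The part requiring the most care is the justification of the reflection at the random time $\tau_w$: one must check that $\tau_w$ is a stopping time, that ``reflect the path after $\tau_w$'' is a well-defined, measurable, measure-preserving involution on Wiener space, and that $\{\tau_w \le s\}$ has positive probability so the conditioning is meaningful. These are standard ingredients of the reflection principle and can be cited from a reference such as \citep{Karatzas88} rather than reproved. If one prefers to avoid explicit pathwise reflection, an alternative --- slightly longer --- route is to start from the known joint density of the pair $\big(B(s),\min_{0 \le \zeta \le s} B(\zeta)\big)$ for a BM from the origin and integrate the minimum variable over $(-\infty, w]$; this reproves essentially the same fact by a direct computation.
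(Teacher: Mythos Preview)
Your argument via the reflection principle is correct and is precisely the classical derivation of this identity. There is nothing to compare against, however: the paper does not prove this lemma at all. It is listed in Appendix~\ref{app:results} as a known result, attributed to \citep[Equation~(3.1)]{kallenberg2006foundations}, and invoked without proof (it is used in step~\eqref{RunningMinimumFormulaCitation} of the proof of \lemref{RunningMinimumBrownianMeanderLemma}). So you have supplied a self-contained proof where the paper simply cites the literature; your reflection-principle route is the standard one and would be accepted as a proof in any textbook treatment.
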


\begin{lemma}[{Denisov~\cite[Theorem 1]{denisov1983random}}]\label{BrownianMeanderIdenticalDistributionLemma}
Let $W$ be a standard Brownian motion with maximizer $\xM$ and maximum value $M = \max_{x \in D} W_x$.  If we define $B_2^{\mathrm{L}}$ and $B_2^{\mathrm{R}}$ as
\begin{equation}
 B_2^{\mathrm{L}}(x) \triangleq \frac{M - W_{\xM - x \cdot \xM}}{\sqrt{\xM}},\quad\mbox{and}\quad B_2^{\mathrm{R}}(x) \triangleq \frac{M - W_{\xM + x \cdot (1 - \xM)}}{\sqrt{1 - \xM}},
\end{equation}
then $B_2^{\mathrm{L}}$ and $B_2^{\mathrm{R}}$ are independent and identically distributed Brownian meanders (i.e., BM conditioned on being positive), and they are also independent of $\xM$.
\end{lemma}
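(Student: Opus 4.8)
Since the statement is \citet[Theorem~1]{denisov1983random}, the task is to outline how one would establish it. The strategy is a path decomposition of $W$ at its almost surely unique maximiser $\xM$: analyse the fragment to the left of $\xM$ and the fragment to the right separately, recognise each (after reflecting through the maximum level $M$ and, on the left, reversing time) as a Brownian meander, and then use Brownian scaling to convert a meander of length $\xM$ (resp.\ $1-\xM$) into a standard meander of length $1$, which removes the dependence on $\xM$.

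Concretely, I would condition on $\{\xM=t\}$ via the appropriate disintegration, using that $\xM$ has a density on $(0,1)$ and $M=W_{\xM}$ a density on $(0,\infty)$. Under this conditioning the two fragments $(W_s)_{0\le s\le t}$ and $(W_s)_{t\le s\le 1}$ should be conditionally independent, each being a Brownian-type path on its interval pinned at the common value $M$ at $\xM$ and forced to stay strictly below $M$ elsewhere. Applying the reflection $W\mapsto M-W$ and the time reversal $s\mapsto t-s$ to the left fragment, and $W\mapsto M-W$ to the right fragment, turns ``stays below $M$, equals $M$ at $\xM$'' into ``stays strictly positive on the open interval, equals $0$ at the endpoint adjacent to $\xM$'', which, together with the description of the Brownian meander as BM conditioned to be nonnegative (the definition used in \secref{conv}), identifies the reflected left fragment as a Brownian meander of length $t$ and the reflected right fragment as a Brownian meander of length $1-t$. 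The rescalings in the statement, $x\mapsto(M-W_{\xM-x\xM})/\sqrt{\xM}$ and $x\mapsto(M-W_{\xM+x(1-\xM)})/\sqrt{1-\xM}$, are exactly the Brownian scaling that sends a meander $(m_u)$ of length $\ell$ to the standard meander $(m_{\ell x}/\sqrt{\ell})_{0\le x\le 1}$; call the results $B_2^{(\mathrm L)}$ and $B_2^{(\mathrm R)}$. That $B_2^{(\mathrm L)}(0)=B_2^{(\mathrm R)}(0)=0$ and that both are strictly positive on $(0,1]$ are immediate from $W_{\xM}=M$ and the uniqueness of the argmax.

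Granting the above, the conclusions follow quickly. Conditionally on $\xM=t$, the pair $\big(B_2^{(\mathrm L)},B_2^{(\mathrm R)}\big)$ consists of two independent standard Brownian meanders, and — this is the point of Brownian scaling — the common conditional law is the standard meander law for \emph{every} $t$. Hence the pair is independent of $\xM$; and since the two conditional marginals are identical, $B_2^{(\mathrm L)}$ and $B_2^{(\mathrm R)}$ are i.i.d.\ standard Brownian meanders independent of $\xM$, which is the claim.

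The main obstacle is the justification of the decomposition in the second paragraph: $\xM$ is not a stopping time, and $\{\xM=t\}$ (and $\{\xM=t,\ M=m\}$) are null events, so both the conditional independence of the two fragments and the identification of their laws require care with regular conditional distributions. One clean way to make this rigorous is to first condition on the pair $\{\xM=t,\ M=m\}$ — here the Markov property at the fixed time $t$ decouples the two sides, the left fragment becoming a meander bridge of length $t$ ending at $m$ and the right fragment a free meander of length $1-t$ — and then integrate out $m$ using the conditional law of $M$ given $\xM=t$, which via the reflection principle and the Brownian-bridge running-maximum formula \lemref{BrownianBridgeLemma} works out proportional to $(m/t)e^{-m^2/(2t)}$, precisely the endpoint law of a length-$t$ meander, so that the mixture of meander bridges collapses to the free meander. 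An alternative that sidesteps these computations is to carry out the corresponding combinatorial decomposition of a simple random walk at its maximum (a cycle-lemma argument yields the conditional independence and the discrete-meander laws directly) and then pass to the Brownian limit via Donsker's theorem together with tightness and the continuity of the path functionals involved.
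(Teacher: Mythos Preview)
The paper does not prove this lemma at all; it simply states it as \citet[Theorem~1]{denisov1983random} in Appendix~\oldref{app:results} and uses it as a black box. Your outline is therefore not competing with any argument in the paper, and your opening sentence correctly recognises this. The sketch you give --- decompose the path at its a.s.\ unique maximiser, reflect through $M$ (and time-reverse on the left), identify each fragment as a meander of random length, then Brownian-scale to length one so that the conditional law no longer depends on $\xM$ --- is the Denisov strategy, and your identification of the main difficulty (that $\xM$ is not a stopping time, so the conditional independence of the two fragments and the identification of their laws require a careful disintegration) is apt. Of the two rigorisation routes you mention, Denisov's original paper takes the second: a combinatorial decomposition of simple random walk at its maximum followed by an invariance-principle limit. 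Either route would close the argument; nothing further is needed for the purposes of this paper.
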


\begin{lemma}[{Grill {\em et al.}~\cite[Proof of Lemma 1]{grill2018optimistic}}]\label{BrownianRunningMaxOOBLemma}
We have the following high-probability upper bound on the running maximum of a Brownian motion in the interval $[a,b]$:
\begin{align}
\mathbb{P}\left[ \sup_{x \in [a,b]} W_x > \max(W_a, W_b) + \eta_\delta(b-a) \right] \le [\delta(b-a)]^5,
\end{align}
with $\eta_{\delta}(\cdot)$ defined in \eqref{eqn:eta_alpha}.
\end{lemma}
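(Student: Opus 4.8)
The plan is to condition on the two endpoint values, reduce $(W_x)_{x\in[a,b]}$ to a Brownian bridge, and then invoke the exact running-maximum formula for a Brownian bridge recorded in \lemref{BrownianBridgeLemma}. It suffices to treat the regime $\delta(b-a)\le 1$, in which $\eta_\delta(b-a)=\sqrt{\tfrac{5(b-a)}{2}\ln\tfrac{2}{(b-a)\delta}}$ is a well-defined nonnegative quantity; when $\delta(b-a)>1$ the claimed bound $[\delta(b-a)]^5$ exceeds $1$ and there is nothing to prove. Also note $\sup_{x\in[a,b]}W_x=\max_{x\in[a,b]}W_x$ almost surely by path continuity on the compact interval, so \lemref{BrownianBridgeLemma} applies directly.

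Write $\eta=\eta_\delta(b-a)$ and set $y=\max(W_a,W_b)+\eta$, which satisfies $y\ge\max(W_a,W_b)$, so \lemref{BrownianBridgeLemma} is applicable after conditioning on $W_a$ and $W_b$ (the conditional law of $(W_x)_{x\in[a,b]}$ given its endpoints is a Brownian bridge by the Markov property). The two factors appearing in that formula obey $y-W_a\ge\eta$ and $y-W_b\ge\eta$ — one of them equals $\eta$ and the other is larger by $|W_a-W_b|$ — hence $(y-W_a)(y-W_b)\ge\eta^2$ for \emph{every} realization of the endpoints. Consequently the conditional probability is at most $\exp\!\big(-\tfrac{2\eta^2}{b-a}\big)$ uniformly in $(W_a,W_b)$, and taking expectation over the endpoints (which changes nothing, the bound being constant) yields
\begin{equation*}
\mathbb{P}\!\left[\sup_{x\in[a,b]}W_x>\max(W_a,W_b)+\eta_\delta(b-a)\right]\le\exp\!\left(-\frac{2\,\eta_\delta(b-a)^2}{b-a}\right).
\end{equation*}

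To finish, substitute $\eta_\delta(b-a)^2=\tfrac{5(b-a)}{2}\ln\tfrac{2}{(b-a)\delta}$, turning the exponent into $-5\ln\tfrac{2}{(b-a)\delta}$, so the right-hand side equals $\big(\tfrac{(b-a)\delta}{2}\big)^5=\tfrac{1}{32}[\delta(b-a)]^5\le[\delta(b-a)]^5$, as claimed. This argument is essentially the corresponding step of \citet{grill2018optimistic}, and I do not expect a genuine obstacle: the only points needing care are that the inequality $(y-W_a)(y-W_b)\ge\eta_\delta(b-a)^2$ holds pathwise (which is why no integration over the endpoint gap $|W_a-W_b|$ is required) and that the constant $\tfrac{5}{2}$ in the definition of $\eta_\delta$ is exactly what produces the exponent $5$ in the target bound. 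A mirror-image version — or simply applying the same bound to $-W$, using that $-W$ is again a Brownian motion — gives the lower-tail statement for the running minimum used in the event $\mathcal{C}'$.
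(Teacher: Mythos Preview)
Your proof is correct, and it is precisely the argument underlying the cited result: condition on the endpoint values to reduce to a Brownian bridge, apply the exact tail formula in \lemref{BrownianBridgeLemma}, and use the pathwise bound $(y-W_a)(y-W_b)\ge\eta_\delta(b-a)^2$ so that no averaging over $|W_a-W_b|$ is needed. The paper itself does not give an independent proof of this lemma---it simply quotes it from \citet[Proof of Lemma~1]{grill2018optimistic}---so there is nothing further to compare; your write-up matches that source essentially line for line, including the observation that the constant $\tfrac{5}{2}$ in $\eta_\delta$ is exactly what produces the exponent~$5$.
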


\end{document}